\documentclass[letterpaper]{article}
\usepackage[preprint]{aaai2027}
\usepackage[hyphens]{url}
\usepackage{graphicx}
\usepackage{natbib}
\usepackage{caption}
\usepackage{amsmath,amssymb,amsthm}
\usepackage{booktabs}
\usepackage{array}
\usepackage{algorithm}
\usepackage{algpseudocode}
\usepackage{tikz}
\usepackage{pgfplots}
\usepackage{placeins}
\pgfplotsset{compat=1.17}
\newcommand{\E}{\mathbb{E}}

\newcommand{\calM}{\mathcal{M}}
\newcommand{\calS}{\mathcal{S}}
\newcommand{\calA}{\mathcal{A}}
\newcommand{\calH}{\mathcal{H}}
\newcommand{\calY}{\mathcal{Y}}

\newcommand{\Reg}{\operatorname{Regret}}
\newcommand{\PSRL}{\operatorname{PSRL}}

\newcommand{\Gap}{\operatorname{Gap}}

\theoremstyle{plain}
\newtheorem{theorem}{Theorem}
\newtheorem{lemma}{Lemma}
\newtheorem{proposition}{Proposition}
\newtheorem{corollary}{Corollary}
\newtheorem{assumption}{Assumption}
\theoremstyle{definition}

\title{Certified Predictive Value-of-Advice Gating for\\
Cost-Aware Language-Model Guidance in Reinforcement Learning}
\author{
Ibne Farabi Shihab\textsuperscript{\rm 1},
Md Najmus Swaqeeb\textsuperscript{\rm 2},
Abu Sa-Adat Mohamed Moon-Im Al Ahsan\textsuperscript{\rm 3}
}

\affiliations{
\textsuperscript{\rm 1}Department of Computer Science, Iowa State University\\
\textsuperscript{\rm 2}Standard Chartered, Bangladesh\\
\textsuperscript{\rm 3}Department of Computer Science and Engineering, BRAC University, Bangladesh\\
ishihab@iastate.edu
}
\begin{document}
\maketitle

\begin{abstract}
Language-model advice can accelerate reinforcement learning, but calls are costly and returned actions may be stale or wrong. We formulate advice acquisition as a response-contingent metareasoning problem: before querying, the controller predicts possible parsed responses, evaluates the decision and declared continuation that would follow each response, and queries only when a lower confidence bound on predictive value exceeds the priced cost. Execution is governed separately by an action-specific certificate. Under explicit assumptions, certified advice is near-optimal, a wrapped learner inherits fallback regret only under intervention stability, and conservative allocation loses at most the declared query-value estimation error against a myopic oracle. On BabyAI, a proxy-calibrated controller with Qwen2.5-1.5B and 7B advisors improves GoToObj return over no querying by $0.029\pm0.016$ and $0.030\pm0.015$ across 20 seeds while reducing calls by more than 97\% relative to always-query; GoToLocal is a null. Exactly matched-call tests show an advantage over random placement only at 1.5B and no advantage over an equal-budget early schedule. Mondrian calibration improves decision-relevant empirical coverage from 0.47 to 0.85, still below the 0.90 target, while the formally covered radius is vacuous. The demonstrated benefit is therefore robust sparse advice volume on a useful task, not a proven per-state placement advantage.
\end{abstract}

\section{Introduction}
Large language models (LLMs) can suggest actions, subgoals, rewards, or plans for reinforcement-learning agents, but issuing a call at every step incurs latency, token, monetary, and energy costs and can degrade learning when advice is wrong. The central operational question is therefore not merely how to inject guidance, but when another response is worth purchasing and whether its recommended action should be executed.

Uncertainty alone is insufficient: a novel state may not need advice when all plausible value functions choose the same action, while a familiar state can remain decision-critical when small epistemic changes alter the preferred branch. The relevant quantity is response-contingent decision value. It must be computed before seeing the fresh response; otherwise the gate has already paid the cost it claims to evaluate. We therefore predict possible parsed responses and value the actual post-response decision, including parsing failure, cache reuse, belief updating where valid, and rejection by a safety certificate.

Acquisition and execution are deliberately separated. A response may be informative while its action is unsafe, and a cached action may remain useful when a fresh call has little marginal value. We use action-specific confidence bounds for execution and a lower confidence bound on predictive value for acquisition.

Our contributions are: (i) a response-contingent distinction between Bayesian value of information and candidate-generation value of advice; (ii) a pre-query estimator whose branches use the same execution certificate as deployment; (iii) a theory decomposition separating certified action loss, intervention-stable fallback regret, and myopic allocation error; and (iv) an evidence protocol with real LLM calls, exact matched-call baselines, calibration audits, price sweeps, and negative results.

\section{Related Work}
The fallback layer builds on posterior sampling and randomized exploration \citep{psrl,osband2013more,russo2018tutorial,rlsvi,osband2016bootstrapdqn,osband2018randprior}. Query acquisition is related to value of information, metareasoning, and information-directed exploration \citep{ids,metareason,hay2012optimalstopping,sukhija2024maxinforl}. Active action advising and learned asking policies price teacher interactions or learn query actions \citep{torrey2013teaching,dasilva2020uncertainty,liu2022askingforknowledge}; When2Ask learns when to request LLM guidance \citep{hu2023when2ask}. Our difference is the combination of a pre-query response model, response-conditioned decision rule, separate execution certificate, and explicit treatment of parsing failure, correlated errors, and cache reuse. LLMs have also been used for exploration, reward design, planning, acting, and tool use \citep{ellm,reward,yao2023react,ahn2022saycan,wang2023voyager}, while cost-aware routers defer from cheap to expensive models in supervised prediction \citep{frugal,chen2020frugalml,ong2024routellm}. Sequential control adds future-state and future-query effects absent from static routing.

What distinguishes the controller below is the object it scores. A gate built on scalar uncertainty ranks states; the rule developed here ranks the decisions that each possible parsed response would produce, including a parse failure and a rejection by the execution certificate, and it must do so before the response is seen. Neither the acquisition rule nor the certificate asks the advisor to be calibrated, which matters because language-model confidence is often weakly related to correctness and because repeated calls on one prompt tend to reproduce a single error rather than behave like independent draws.

\section{Problem and Controller}
Consider an episodic MDP $\calM=(\calS,\calA,P,r,H)$ and a fallback learner proposing $a_t^{\rm base}$. An advisor receives context $z_t$ and returns $Y_t$, parsed as $g(Y_t,z_t)\in\calA\cup\{\bot\}$. Each call has cost $c_t$ and consumes a hard budget. A predictive model $p_{\psi,t}(y\mid z_t,\calH_t)$ is available before the call. Informative advice may update a belief over environments; candidate-only advice proposes an action without such an update.

For advice action $a_c$, execution is permitted only when
\begin{equation}
L_t(s_t,a_c)\ge \max_{a\in\calA}U_t(s_t,a)-\epsilon_t.
\label{main:safe}
\end{equation}
Write $\mu_{b,t}^{y}(s,a)$ for the belief-averaged value of the declared continuation, and let $a_t^{{\rm base},y}$ be the fallback proposal after any permitted response-conditioned update. For a parsed response $y$ with candidate action $a_y=g(y,z_t)$, the deployed decision is
\begin{equation}
 d_t(y)=
 \begin{cases}
 a_y, & \substack{a_y\neq\bot,\ \text{Eq.~\eqref{main:safe} holds under }b_t^y,\\
 \mu_{b_t^y,t}^{y}(s_t,a_y)\ge\mu_{b_t^y,t}^{y}(s_t,a_t^{{\rm base},y})},\\[2mm]
 a_t^{{\rm base},y}, & \text{otherwise},
 \end{cases}
 \label{main:decide}
\end{equation}
so a response is executed only when it parses, certifies under the post-response belief, and improves on the fallback. The no-query branch applies the same two tests to a cached candidate $a_t^c$:
\begin{equation}
 a_t^0=
 \begin{cases}
 a_t^c, & \substack{a_t^c\text{ certified, and}\\
 \mu_{b_t,t}^{0}(s_t,a_t^c)\ge\mu_{b_t,t}^{0}(s_t,a_t^{\rm base})},\\[1mm]
 a_t^{\rm base}, & \text{otherwise}.
 \end{cases}
 \label{main:noquery}
\end{equation}
This stops a certified but lower-valued cache entry from overriding the fallback merely because it sits inside a permissive tolerance, and it fixes the object the pre-query estimator has to evaluate: every predicted response is scored through the decision rule that will actually run after the call, rejection branch included.

Let $Q_{M,t}^{y}$ denote the return under a continuation rule fixed before the response. The operational query value is the return of the response-contingent deployed branch minus the no-query branch. Its practical estimate averages over predicted responses and posterior continuation-value samples:
\begin{equation}
\begin{aligned}
\widehat\Delta_t
&=\sum_y\widehat p_t(y\mid z_t)\frac1K\sum_{k=1}^K
 Q_{k,t}^{y}(s_t,d_t(y))\\
&\quad-\frac1K\sum_{k=1}^K Q_{k,t}^{0}(s_t,a_t^0).
\end{aligned}
\label{main:est}
\end{equation}
A randomized, disjoint calibration set uses cloned-state paired continuations and split-conformal residuals to construct $\beta_t$. The controller queries iff
\begin{equation}
\widehat\Delta_t-\beta_t\ge c_t,\qquad B_t>0.
\label{main:gate}
\end{equation}
The formal calibration target is the latent operational value, not $Q^\star$; the execution bounds in Eq.~\eqref{main:safe} target $Q^\star$ separately.

\paragraph{Response conditioning.}
For an informative advisor, a validated likelihood $p_\psi(y\mid M,z_t)$ induces
\begin{equation}
 b_t^y(M)=\frac{p_\psi(y\mid M,z_t)b_t(M)}{\int p_\psi(y\mid M',z_t)b_t(M')\,dM'}.
 \label{main:update}
\end{equation}
For candidate-only advice, $b_t^y=b_t$. This distinction prevents an unconditional value posterior multiplied by a marginal response distribution from being mislabeled as Bayesian value of information. Before a query opportunity the controller also fixes the complete response-to-continuation map $y\mapsto\pi_t^{{\rm cont},y}$; consequently, Eq.~\eqref{main:est} can be evaluated before the fresh call and cannot benefit from hindsight.

\paragraph{Proxy calibration without evaluation leakage.}
A randomized acquisition policy collects a disjoint set of query opportunities. At each opportunity the learner, response predictor, cache, and prospective branch rules are frozen. For $J$ independent response blocks and $R$ paired simulator continuations per response, the proxy is
\begin{equation}
 \widetilde\Delta_i=\frac1{JR}\sum_{j=1}^{J}\sum_{r=1}^{R}
 \left(G^{\rm query}_{ijr}-G^0_{ijr}\right).
 \label{main:proxy}
\end{equation}
Common random numbers reduce transition noise. Because continuations sharing one response are not independent response samples, the range-based Monte-Carlo term counts the $J$ outer blocks,
\begin{equation}
 \rho_i^{\rm MC}=H\sqrt{\frac{2\log(2n/\delta_{\rm MC})}{J}}.
 \label{main:mc}
\end{equation}
A scale model $s_\omega(x_i)$ is fitted on data disjoint from calibration, and split-conformal scores are
$r_i=(|\widehat\Delta_i-\widetilde\Delta_i|+\rho_i^{\rm MC})/s_\omega(x_i)$. The deployed radius is the finite-sample conformal quantile times $s_\omega(x_t)$. Marginal exchangeability does not imply decision-stratum, subgroup, or trajectory coverage; we therefore report all available coverage diagnostics and treat the empirical radius separately from the theorem-backed radius. For a declared finite collection of at most $T_{\rm eval}$ evaluated opportunities, setting $\alpha=\delta_{\rm conf}/T_{\rm eval}$ and applying a union bound supplies the simultaneous event that the allocation theorem requires, with total failure probability at most $\delta_{\rm conf}+\delta_{\rm MC}$. That radius can be conservative or infinite when calibration data are scarce, which is exactly the regime observed on BabyAI.

\begin{algorithm}[t]
\caption{Predictive advice gating (main-paper view)}
\label{main:alg}
\begin{algorithmic}[1]
\For{each decision opportunity}
 \State compute fallback action, value samples, and action bounds
 \State validate cache and form the certified no-query action
 \State predict parsed responses and prospective decisions $d_t(y)$
 \State evaluate $\widehat\Delta_t$ and its calibrated radius $\beta_t$
 \If{$\widehat\Delta_t-\beta_t\ge c_t$ and budget remains}
  \State query, parse, update predictor/cache, and condition belief when valid
  \State recheck the execution certificate and act
 \Else
  \State execute the no-query action
 \EndIf
 \State update the fallback learner and logged calibration statistics
\EndFor
\end{algorithmic}
\end{algorithm}
Algorithm~\ref{main:alg} collects the resulting loop. The cache entry that Eq.~\eqref{main:noquery} may reuse is disabled when the action fails certification, when the current context lies outside the calibrated support of the response predictor, or when a prespecified shift detector fires. The support signal can be supplied by a latent density certificate \citep{densitycert}, but value-space and density-space triggers are reported separately because one does not imply the other. Time-to-live and no-cache policies are evaluated as baselines, and invalidation is not claimed as a contribution unless it beats both under a protocol that reserves post-shift query budget. Appendix~A gives the complete controller and calibration specification.

\section{Guarantees}
The theory separates execution safety, allocation, and inherited fallback regret. Assume simultaneous action-value coverage. Then every executed advice action satisfying Eq.~\eqref{main:safe} has Bellman gap at most $\epsilon_t$. This is pointwise and does not by itself show that querying was beneficial.

\begin{theorem}[Certified wrapper, condensed]\label{main:wrapper}
Let $\cal C_T$ be the advised steps. If the fallback's cumulative Bellman gap on the actual intervention-induced histories is at most $\mathfrak R_{\rm base}(T)$, then
\[
\E[\Reg(T)]\le \mathfrak R_{\rm base}(T)+\E\!\left[\sum_{t\in\cal C_T}\epsilon_t\right]+\delta TH.
\]
For an optimistic fallback, intervention stability follows when selected confidence widths have a valid adaptive-trajectory sum bound; an ordinary standalone regret result alone is insufficient.
\end{theorem}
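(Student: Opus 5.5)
The plan is to write regret as a sum of per-step Bellman gaps via the performance-difference lemma, split the steps into advised and non-advised ones, and charge each class separately on the coverage event. Let $\mathcal{G}$ be the event that the action-value confidence bounds hold simultaneously, $L_t(s_t,a)\le Q^\star_t(s_t,a)\le U_t(s_t,a)$ for all $t,a$, and assume it fails with probability at most $\delta$. For each episode, the performance-difference lemma gives
\[
V^\star_1(s_1)-V^{\pi}_1(s_1)=\E\Big[\sum_{h=1}^H\big(V^\star_h(s_h)-Q^\star_h(s_h,a_h)\big)\Big],
\]
where the expectation is over the trajectory actually generated by the deployed controller. Summing over episodes therefore gives $\E[\Reg(T)]=\E[\sum_t \Gap_t]$ with $\Gap_t=V^\star(s_t)-Q^\star(s_t,a_t)\ge0$, where the states $s_t$ are the intervention-induced ones. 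I then split according to whether $\mathcal{G}$ holds.

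On $\mathcal{G}^c$, each episode's regret is at most $H$ because rewards lie in $[0,1]$, and there are at most $T$ episodes (or steps). This contributes at most $\delta TH$.

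On $\mathcal{G}$, I split the steps into $t\in\mathcal C_T$ and $t\notin\mathcal C_T$.
\begin{itemize}
\item For $t\in\mathcal C_T$, the executed action $a_c$ passed Eq.~\eqref{main:safe}, either through $d_t(y)$ in Eq.~\eqref{main:decide} or through the cache branch in Eq.~\eqref{main:noquery}. Both branches use the same certificate. Then
\[
Q^\star(s_t,a_c)\ge L_t(s_t,a_c)\ge \max_a U_t(s_t,a)-\epsilon_t\ge \max_a Q^\star(s_t,a)-\epsilon_t,
\]
so $\Gap_t\le\epsilon_t$. Summing gives $\E[\sum_{t\in\mathcal C_T}\epsilon_t]$. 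If the certificate was evaluated under a response-conditioned belief $b_t^y$, I need coverage to hold for that belief's bounds as well. I will take this to be part of the simultaneous coverage assumption.
\item For $t\notin\mathcal C_T$, the executed action is the fallback proposal $a_t^{\rm base}$ or $a_t^{{\rm base},y}$. The hypothesis is stated exactly on the actual intervention-induced histories, so $\sum_{t\notin\mathcal C_T}\Gap_t\le\mathfrak R_{\rm base}(T)$ directly. Adding the three pieces yields the bound.
\end{itemize}

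The real content, and the main obstacle, is the second sentence of the theorem: justifying $\mathfrak R_{\rm base}$ for an optimistic fallback run on histories it did not choose. The fallback's data, and hence its confidence sets, are altered by the advised steps, so a standalone regret theorem, which presumes the learner's own policy generated the trajectory, does not transfer. For an optimistic learner I would argue as follows. On $\mathcal{G}$, at every non-advised step, optimism gives
\[
\Gap_t\le U_t(s_t,a_t^{\rm base})-Q^\star(s_t,a_t^{\rm base})\le U_t-L_t=w_t(s_t,a_t^{\rm base}),
\]
the confidence width at the selected pair. It then suffices to have an adaptive-trajectory bound $\sum_t w_t(s_t,a_t)\le \mathfrak R_{\rm base}(T)$ that holds for any adapted action sequence. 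Elliptical-potential or pigeonhole counting arguments give this, because they depend only on visit counts and not on who chose the actions. Advised steps only add data, and with monotone widths this cannot increase later widths.

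I would flag that a purely one-step "$\Gap_t\le w_t$" argument ignores future propagation. The per-step Bellman-gap formulation of the first step makes one-step gaps suffice. I would also give a counterexample sketch showing that a bare standalone regret statement is insufficient: advice can steer the learner into states where its confidence widths are never summed in the original analysis.
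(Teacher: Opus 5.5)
Your proposal is correct and follows the paper's proof essentially step for step. The steps are: the performance-difference identity along the controller's own trajectory, the certified-gap lemma on $\mathcal C_T$, and the intervention-stability hypothesis on the remaining steps. Then comes a $\delta TH$ charge on the coverage-failure event, which is cleanest as $\Gap_t\le H$ at each of the $T$ primitive steps rather than a per-episode bound. For the optimistic fallback, the gap is at most the selected confidence width, followed by the harmonic-sum and Cauchy--Schwarz count over visits; your added caveats about coverage under $b_t^y$ and widths that are monotone in visit counts are consistent with how the paper's assumptions are meant to be read.
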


\paragraph{Why intervention stability is explicit.}
Advice changes the state distribution and therefore the data seen by the fallback learner. A standalone PSRL or optimistic-regret theorem cannot simply be applied to the subset of non-advice steps. For a concrete optimistic fallback that selects $a_t^{\rm base}\in\arg\max_a U_t(s_t,a)$, action-value coverage gives
\[
\Gap_t(a_t^{\rm base})\le U_t(s_t,a_t^{\rm base})-L_t(s_t,a_t^{\rm base}).
\]
If the width on the $n$th visit to a state-stage-action triple is at most $CH\sqrt{\iota/n}$, then summing harmonic square roots and applying the Cauchy-Schwarz inequality yields $2CH\sqrt{HSA T\iota}$ on any adaptively generated trajectory. This verifies the required stability for the controlled optimistic fallback; the posterior-sampling specialization remains conditional.

For the one-step surrogate $u_t(q)=q(\Delta_t-c_t)$, suppose $|\widehat\Delta_t-\Delta_t|\le\beta_t$ simultaneously before budget exhaustion.
\begin{theorem}[Conservative allocation, condensed]\label{main:allocation}
The gate in Eq.~\eqref{main:gate} never issues a negative-net-value query and loses at most $2\sum_t\beta_t$ relative to the myopic oracle that queries when $\Delta_t\ge c_t$, before budget exhaustion.
\end{theorem}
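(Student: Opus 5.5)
The argument works on the simultaneous coverage event $\mathcal E=\{|\widehat\Delta_t-\Delta_t|\le\beta_t \text{ for all } t \text{ before budget exhaustion}\}$. Outside $\mathcal E$ nothing is claimed; its probability is at most $\delta_{\rm conf}+\delta_{\rm MC}$ by the union-bound construction in the calibration paragraph. On $\mathcal E$ the proof is a per-step comparison of the gate's decision $q_t=\mathbf 1\{\widehat\Delta_t-\beta_t\ge c_t\}$ with the myopic oracle's decision $q_t^\star=\mathbf 1\{\Delta_t\ge c_t\}$, using the surrogate $u_t(q)=q(\Delta_t-c_t)$. The steps are then summed over $t$.

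\emph{Step 1 (no negative-net-value query).} Suppose the gate queries, so $\widehat\Delta_t-\beta_t\ge c_t$. Coverage gives $\Delta_t\ge\widehat\Delta_t-\beta_t$, hence $\Delta_t\ge c_t$ and $u_t(1)\ge 0$. Two consequences follow. Every gate query is also an oracle query, so $q_t\le q_t^\star$. In addition, no issued query has negative net value.

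\emph{Step 2 (per-step loss).} The per-step loss is $\ell_t=u_t(q_t^\star)-u_t(q_t)=(q_t^\star-q_t)(\Delta_t-c_t)$. There are three cases.
\begin{itemize}
\item If the decisions agree, $\ell_t=0$.
\item The case $q_t=1$, $q_t^\star=0$ is impossible by Step 1.
\item If $q_t^\star=1$ and $q_t=0$, then $\Delta_t\ge c_t$ and $\widehat\Delta_t-\beta_t<c_t$. By coverage, $\Delta_t\le\widehat\Delta_t+\beta_t<c_t+2\beta_t$, so $0\le\ell_t=\Delta_t-c_t<2\beta_t$.
\end{itemize}
In all cases $0\le\ell_t\le 2\beta_t$. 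Summing over the steps before budget exhaustion gives $\sum_t u_t(q_t^\star)-\sum_t u_t(q_t)\le 2\sum_t\beta_t$.

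\emph{Main obstacle: what "before budget exhaustion" means and which histories are compared.} The gate and the oracle consume budget at different rates, and the surrogate ignores how a query changes future states. I would make the comparison pathwise along the gate's own realized history. At each step the oracle is evaluated counterfactually on the same state, with the same $\Delta_t$ and $c_t$. This is exactly what "myopic" licenses, and it matches the fact that $\mathcal E$ is defined over the declared finite set of evaluated opportunities.

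Step 1 gives $q_t\le q_t^\star$, so the gate never spends more budget than the oracle would at the same steps. The horizon on which both are active is therefore the oracle's pre-exhaustion window. On that window the condition $B_t>0$ in Eq.~\eqref{main:gate} is satisfied, and the case analysis applies unchanged. After exhaustion no claim is made, which matches the statement. If desired, one can take expectations by splitting on $\mathcal E$. This adds a term of at most $(\delta_{\rm conf}+\delta_{\rm MC})\sum_t|\Delta_t-c_t|$, but the condensed statement is the high-probability version and needs no such term.
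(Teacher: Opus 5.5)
Your argument is correct and matches the paper's proof: on the coverage event, a query forces $\Delta_t\ge\widehat\Delta_t-\beta_t\ge c_t$, so the only possible disagreement with the oracle is a missed query, whose forgone gain is below $2\beta_t$ because $\Delta_t\le\widehat\Delta_t+\beta_t<c_t+2\beta_t$; summing over steps finishes it. One loose point is your optional expectation remark: bounding the failure-event contribution needs a uniform bound such as $|\Delta_t-c_t|\le H+c_t$, because $\E[X\mathbf{1}_{\mathcal E^c}]\le\Pr(\mathcal E^c)\E[X]$ fails without independence, but the condensed theorem is the on-event statement and does not need that remark.
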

A binding budget introduces nonmyopic opportunity cost not covered by this statement. In a repeated context unaffected by the current action, a query that succeeds with probability $\eta$, replaces a fallback gap $g$ by certified gap $\epsilon$, and is reused $m$ times has value at least $\eta m(g-\epsilon)-c$.

\begin{proposition}[Query accounting, condensed]\label{main:queries}
Let $\Phi_t\ge0$ be an adapted potential measuring remaining realizable advice value. If each query decreases it in conditional expectation by at least $d>0$ and non-query steps do not increase it, then $\E[N_{\rm query}]\le\Phi_0/d$. When correct and incorrect responses contract potential by $d_+$ and $d_-$ with reliability $\eta$, use $d=\eta d_+ +(1-\eta)d_-$. 
\end{proposition}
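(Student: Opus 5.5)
The argument is an optional-stopping (supermartingale) bound on the potential; the one real subtlety is the potential's nonnegativity at an unbounded stopping time.

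\emph{Telescoping identity.} Let $Q_t=\mathbf 1\{\text{query at }t\}$, which is $\mathcal F_{t-1}$-measurable because the gate in Eq.~\eqref{main:gate} is decided before the response is seen. Write
\[
\Phi_t=\Phi_0+\sum_{s\le t}(\Phi_s-\Phi_{s-1}).
\]
By hypothesis,
\[
\E[\Phi_s-\Phi_{s-1}\mid\mathcal F_{s-1}]\le -d\,Q_s,
\]
where on query steps this is the assumed expected decrease of at least $d$, and on non-query steps it holds because the potential does not increase. Hence
\[
M_t=\Phi_t+d\sum_{s\le t}Q_s
\]
is a supermartingale. This gives
\[
\E\Bigl[d\sum_{s\le t}Q_s\Bigr]\le \Phi_0-\E[\Phi_t]\le\Phi_0,
\]
using $\Phi_t\ge0$.

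\emph{Passage to the limit.} Since $N_{\rm query}=\lim_t\sum_{s\le t}Q_s$ is monotone, the monotone convergence theorem yields $\E[N_{\rm query}]\le\Phi_0/d$. This works both for a finite horizon and when stopping at budget exhaustion. It avoids any integrability issue at an unbounded stopping time, because we bound the truncated sums uniformly and only then take the limit. This is the step I expect to need care: the argument must rely on nonnegativity of $\Phi$ rather than on a bounded-increments optional stopping theorem.

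\emph{Reliability refinement.} Condition on $\mathcal F_{s-1}$ at a query step and split on whether the parsed response is correct. Assume correctness holds with conditional probability at least $\eta$, and that the conditional decrease is at least $d_+$ on correct responses and $d_-$ on incorrect ones, with $d_+\ge d_-$. The tower rule gives an expected decrease of at least
\[
\eta d_+ + (1-\eta)d_-.
\]
If $\eta$ is only a lower bound, monotonicity in $\eta$ requires $d_+\ge d_-$, and I would state this ordering explicitly. Substituting this value for $d$ in the first part completes the argument, provided it is positive.
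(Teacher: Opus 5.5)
Your proposal is correct and takes the same route as the paper. The paper sums the conditional expected decreases over query times and uses $\Phi\ge 0$ to get $d\,\E[N_{\rm query}]\le\Phi_0-\E[\Phi_{\rm final}]\le\Phi_0$, then substitutes the reliability-weighted decrease. Your supermartingale formulation with truncation and monotone convergence, and your remark that $d_+\ge d_-$ is needed when $\eta$ is only a lower bound on conditional correctness, make rigorous the steps the paper's two-line proof leaves implicit.
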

This proposition explains monotone reliability and price trends only when a valid potential is supplied; it is not a generic call-complexity theorem.

\paragraph{Proof structure.}
The certified-action result follows from simultaneous lower/upper coverage: the optimal action is below the largest upper bound, which is within $\epsilon_t$ of the advised action's lower bound. The wrapper theorem partitions the performance-difference sum into advised and fallback steps on the controller's own trajectory. For allocation, a query implies $\Delta_t\ge\widehat\Delta_t-\beta_t\ge c_t$; disagreement with the oracle can only be a missed positive query, whose gain is below $2\beta_t$. Query accounting telescopes the expected potential decrease. These arguments are simple precisely because each promise is separated; none proves that the empirical proxy radius satisfies the simultaneous event. Complete assumptions, proofs, and limit behavior appear in Appendix~B.

\section{Experiments}
The experiments address three questions in order: whether selective advice recovers return that naive gating destroys, whether any gain follows from where the controller queries or only from how often it queries, and whether the deployed radius covers the errors it is claimed to cover. The faithful controller is evaluated on BabyAI GoToObj and GoToLocal with Qwen2.5-1.5B and 7B advisors, over 20 independent seeds and 60 episodes, from a shared warm start and against a strong no-query learner. The fallback is a replay-based value learner with an ensemble and Bayesian-last-layer approximation, and the response predictor is trained online with no access to evaluation returns. Advisor prompts constrain the output to a single admissible action, and parser failures map to $\bot$. The hard budget resets each episode at 60 calls, so every call count reported below is a sum over the 60 evaluation episodes averaged across seeds, and every paired difference is taken across matched seeds. Table~\ref{main:evidence} states the resulting evidence chain and the question each study is intended to settle.

\begin{table}[!t]
\centering\small
\caption{Primary evidence chain and its role.}
\begin{tabular}{p{0.25\columnwidth}p{0.66\columnwidth}}
\toprule
Study & Question answered\\
\midrule
Real advisor & Does selective advice improve return and reduce calls?\\
Matched calls & Is gain due to placement or only volume/schedule?\\
Calibration & Does the deployed radius cover relevant errors?\\
Price sweep & Do actual calls respond to the declared query cost?\\
Corruption & Does the certificate reject harmful advice?\\
Shift/cache & Does invalidation help with budget reserved after shift?\\
\bottomrule
\end{tabular}
\label{main:evidence}
\end{table}

All arms share the fallback architecture, replay data, environment steps, advisor, and warm start, so that any difference is attributable to the gate and the execution rule alone. The comparison set consists of never-query, always-query, an ASK-style epistemic-uncertainty gate, a random matched-call gate and an early-episode schedule gate that both spend the calibrated controller's realized budget, and the controller in five variants: the uncalibrated gate, the proxy-calibrated radius, the theorem-backed radius, a certificate-off ablation, and predictor ablations that replace the online response model by global-frequency or uniform surrogates. Prices are swept by rerunning the controller rather than by repricing a single trajectory. A retrospective oracle placement is reported only as a diagnostic upper bound, and importance-advising and advice-reuse variants are described in the appendix as related designs but are not run. The evaluation is hierarchical: the primary comparison uses the faithful controller with both advisors, while an earlier epistemic-gap surrogate, an informative-advisor model with a known likelihood, a controlled tabular chain, and pixel tasks isolate individual components without substituting for the real-advisor result. Table~\ref{main:tasks} records the role of each benchmark.

\begin{table*}[t]
\centering
\small
\setlength{\tabcolsep}{5pt}
\renewcommand{\arraystretch}{1.12}

\caption{Benchmark roles. Only BabyAI uses real LLM advisors for the
headline result; the other environments are controlled diagnostics.}
\label{main:tasks}

\begin{tabular}{@{}
>{\raggedright\arraybackslash}p{2.5cm}
>{\raggedright\arraybackslash}p{1.8cm}
>{\raggedright\arraybackslash}p{4.0cm}
>{\centering\arraybackslash}p{1.5cm}
>{\raggedright\arraybackslash}p{3.5cm}
>{\centering\arraybackslash}p{1.0cm}
@{}}
\toprule
Task family & Role & Split/tasks & Horizon & Advisor & Seeds \\
\midrule
BabyAI
& Core
& GoToObj, GoToLocal
& 64
& Qwen2.5-1.5B and 7B
& 20 \\

ALFWorld
& Extension
& pick-and-place, look-at-object
& 50
& Simulated ($\eta=0.85$)
& 5 \\

Crafter
& Extension
& procedural seed split
& 200
& Simulated ($\eta=0.8$)
& 5 \\

Controlled chain/grid
& Diagnostic
& exact or clonable
& task-specific
& known/corrupted
& 5 to 20 \\
\bottomrule
\end{tabular}
\end{table*}

Primary comparisons use paired seeds with 95\% intervals and Holm correction inside each declared family. The placement comparisons form a single family of twenty cells across baselines, tasks, and advisor sizes, and both the unadjusted interval and the corrected outcome are reported for every cell. The matched-call gates are pinned to the calibrated controller's realized budget, with the random arm carrying call deficits forward so that cumulative calls agree and the schedule arm front-loading the same budget. Calibration data are chronological and disjoint from evaluation, and no final return is permitted to select a radius or a threshold. The reported quantities are undiscounted return, advisor calls, tokens, latency, parser-failure rate, and cost-adjusted return. Safety is reported as four separate quantities rather than one: total advice acceptance, correct-advice rejection, wrong-advice acceptance, and the return difference between otherwise identical runs with and without the certificate. Calibration is reported as marginal coverage, coverage on the decision-relevant stratum, worst-subgroup coverage, radius width, and the fraction of opportunities on which the conservative gate abstains. Separating these quantities is what prevents a scalarized utility from concealing a safety regression, an abstaining radius, or a gate that purchases return with calls the declared price should have suppressed.

Because the real advisor returns text rather than a calibrated likelihood over environments, it is treated as candidate-only throughout, and the response-conditioned branch of Eq.~\eqref{main:update} is exercised separately on a latent chain with a known likelihood and an exact posterior. Table~\ref{main:informative} reports that diagnostic. The invalid marginal-factorization heuristic is no better than not querying, while both valid estimators recover the query value exactly and differ only in realized continuation return. The experiment therefore supports the narrow claim that an informative response requires a joint model, and does not establish that response conditioning is necessary for query-value accuracy.

\begin{table*}[!t]
\centering\small
\caption{Known-likelihood informative-advisor diagnostic. Query-value error is against the exact Bayesian pre-posterior one-step value.}
\begin{tabular}{lccccc}
\toprule
Estimator & Query-value error & Posterior log score & Surrogate loss & Return & Calls\\
\midrule
No query & $0.300$ & $-0.693$ & $0.090$ & $0.457\pm0.046$ & 0\\
Marginal factorization & $0.300$ & $-0.693$ & $0.090$ & $0.457\pm0.046$ & 0\\
Candidate-only value & $0.000$ & $-0.693$ & $0.000$ & $0.566\pm0.042$ & 12\\
Response-conditioned value & $\mathbf{0.000}$ & $\mathbf{-0.602}$ & $\mathbf{0.000}$ & $\mathbf{0.620\pm0.040}$ & 12\\
\bottomrule
\end{tabular}
\label{main:informative}
\end{table*}

The stability condition behind Theorem~\ref{main:wrapper} is audited on the controlled chain in Table~\ref{main:stability}. The optimistic fallback satisfies the pointwise width inequality on every seed, but its wrapped cumulative-gap slope of 0.975 is close to linear against 0.724 standalone, so the condition is verified only in the weak sense that its inequality holds, and not as an improved rate. The posterior-sampling rows are diagnostic, and the corresponding specialization remains conditional.

\begin{table*}[!t]
\centering\small
\caption{Intervention-stability audit on the controlled chain.}
\begin{tabular}{lccccc}
\toprule
Fallback/history & Fallback gap & Width sum & Bound holds & Log-log slope & Return\\
\midrule
Optimistic, standalone & $883.0\pm3.7$ & $1507.1\pm3.2$ & 20/20 & $0.724\pm0.009$ & $72.2$\\
Optimistic, wrapped & $942.4\pm4.8$ & $2201.9\pm36.9$ & 20/20 & $0.975\pm0.032$ & $72.5$\\
PSRL, standalone & $554.3\pm16.3$ & n/a & conditional & $0.474\pm0.025$ & $73.4$\\
PSRL, wrapped & $492.8\pm17.5$ & n/a & conditional & $0.691\pm0.048$ & $82.8$\\
\bottomrule
\end{tabular}
\label{main:stability}
\end{table*}

Table~\ref{main:real} reports the primary comparison. On GoToObj the proxy-calibrated controller improves on never-query by a paired $0.029\pm0.016$ at 1.5B and $0.030\pm0.015$ at 7B, both surviving Holm correction within the controller-versus-never family with adjusted $p=0.005$ and $p=6\times10^{-4}$, while spending 75 and 82 calls against approximately 3,400 for always-query, a reduction of more than 97\%. Always-query is significantly worse than never-query on both tasks: the advisor's non-optimal actions, executed at nearly every state, override an improving policy. The epistemic-only gate also falls below never-query, and it coincides with a value-based variant because both saturate their thresholds at essentially every visited state. GoToLocal is a null at both advisor sizes, so the two-task pooled advantage is a non-significant $0.011\pm0.015$ across the 40 paired differences, and the claim is bounded to tasks with measurable headroom. The calibrated radius is not merely conservative: on GoToObj it attains a slightly larger advantage than the uncalibrated gate at roughly half the calls, 75 against 139. The theorem-backed radius $\beta_{\rm form}\approx2.14$ is vacuous on a bounded query value, abstains at every state, and therefore reproduces never-query exactly, whereas the radius that produces the useful result is proxy-calibrated and is not covered by Theorem~\ref{main:allocation}.

\begin{table*}[!t]
\centering\small
\caption{Faithful controller with real Qwen advisors ($20$ seeds, $60$ episodes). Return is mean $\pm95\%$ CI; calls are totals across episodes. Bold paired differences from never-query exclude zero.}
\begin{tabular}{llcc cc}
\toprule
& & \multicolumn{2}{c}{Qwen-1.5B} & \multicolumn{2}{c}{Qwen-7B}\\
\cmidrule(lr){3-4}\cmidrule(lr){5-6}
Task & Method & Return & Calls & Return & Calls\\
\midrule
GoToObj & Never query & $0.093\pm0.014$ & 0 & $0.093\pm0.014$ & 0\\
 & Always query & $0.056\pm0.009$ & 3408 & $0.057\pm0.009$ & 3403\\
 & ASK/uncertainty & $0.087\pm0.014$ & 557 & $0.086\pm0.014$ & 557\\
 & Ours, $\beta=0$ & $0.117\pm0.022$ & 139 & $0.119\pm0.021$ & 144\\
 & \textbf{Ours, $\beta_{\rm conf}$} & $\mathbf{0.122\pm0.019}$ & \textbf{75} & $\mathbf{0.123\pm0.021}$ & \textbf{82}\\
 & Ours, $\beta_{\rm form}$ & $0.093\pm0.014$ & 0 & $0.093\pm0.014$ & 0\\
\midrule
GoToLocal & Never query & $0.124\pm0.024$ & 0 & $0.124\pm0.024$ & 0\\
 & Always query & $0.054\pm0.012$ & 3413 & $0.061\pm0.012$ & 3391\\
 & ASK/uncertainty & $0.111\pm0.026$ & 585 & $0.111\pm0.025$ & 585\\
 & Ours, $\beta=0$ & $0.107\pm0.019$ & 156 & $0.110\pm0.020$ & 163\\
 & Ours, $\beta_{\rm conf}$ & $0.116\pm0.019$ & 59 & $0.117\pm0.018$ & 65\\
 & Ours, $\beta_{\rm form}$ & $0.124\pm0.024$ & 0 & $0.124\pm0.024$ & 0\\
\bottomrule
\end{tabular}
\label{main:real}
\end{table*}

Whether that gain reflects placement is tested by holding the realized call count fixed, which Table~\ref{main:matched} reports. At exactly matched calls the controller improves on random placement only on GoToObj at 1.5B, and only that cell survives Holm correction over the twenty-cell family; against an equal-budget early schedule it is statistically indistinguishable in every cell. Since the schedule alone also improves on never-query on GoToObj, the end-to-end gain is attributable to spending a calibrated volume of calls on a task where advice helps, rather than to a demonstrated per-state placement advantage. The certificate-off and predictor ablations remain within noise on the real tasks, so neither learned component produces a measurable return lift there. On GoToLocal the controller does not abstain either: it still issues 59 to 65 calls, so it does not detect that those calls carry no measurable benefit, and task-level utility detection is not demonstrated.

\begin{table*}[!t]
\centering\small
\caption{Exactly-call-matched placement comparison ($20$ seeds). $\Delta$ is paired $\beta_{\rm conf}$ minus the baseline; bold excludes zero before multiplicity correction.}
\begin{tabular}{llcc}
\toprule
Task (advisor) & Baseline & Return (calls) & $\Delta$ vs ours\\
\midrule
GoToObj (1.5B) & Ours, $\beta_{\rm conf}$ & $0.116$ ($87.2$) & n/a\\
 & random exact & $0.097$ ($87.2$) & $\mathbf{+0.019\pm0.018}$\\
 & early schedule & $0.106$ ($86.8$) & $+0.010\pm0.018$\\
GoToObj (7B) & Ours, $\beta_{\rm conf}$ & $0.108$ ($100.5$) & n/a\\
 & random exact & $0.106$ ($100.5$) & $+0.003\pm0.017$\\
 & early schedule & $0.100$ ($99.5$) & $+0.009\pm0.017$\\
GoToLocal (1.5B) & Ours, $\beta_{\rm conf}$ & $0.103$ ($59.9$) & n/a\\
 & random exact & $0.118$ ($59.9$) & $-0.014\pm0.023$\\
 & early schedule & $0.102$ ($59.5$) & $+0.001\pm0.022$\\
GoToLocal (7B) & Ours, $\beta_{\rm conf}$ & $0.102$ ($55.4$) & n/a\\
 & random exact & $0.114$ ($55.4$) & $-0.012\pm0.024$\\
 & early schedule & $0.105$ ($54.5$) & $-0.003\pm0.018$\\
\bottomrule
\end{tabular}
\label{main:matched}
\end{table*}

The advantage is further concentrated early in learning. Extending GoToObj to 500 episodes and logging windowed return, the calibrated controller leads at episodes 60 and 120, and the paired advantage over never-query decays to a non-significant $0.016\pm0.022$ by episode 500, with the matched early schedule tracking it throughout. That run scales its exploration decay to the longer horizon, so it is evidence of a warm-start effect rather than a clean persistence test of the primary configuration.

Separating acquisition from execution is testable, and the certificate is where that separation is measured. At identical query opportunities, disabling it causes every corrupted response to be executed, whereas the certificate holds corrupted acceptance near 0.47 and correct acceptance near 0.65. The screening is not free: under 50\% corruption, disabling the certificate raises return by $0.022\pm0.022$ and $0.026\pm0.015$ on the two tasks, because the conservative bound also rejects roughly a third of correct advice. Sweeping the tolerance against the corruption rate lowers corrupted execution broadly, though not strictly monotonically, and yields no tolerance with a statistically supported return gain. The supportable reading is that some tolerances buy a large reduction in corrupted execution at no detectable return cost.

The declared price behaves as a control knob rather than as an accounting device. Rerunning the controller end to end at each price, realized calls fall monotonically toward zero: on GoToObj from 4.16 calls per episode at zero price to 0.56 at 0.01 and to 0 at 0.05, with the same collapse on GoToLocal and GoToObjDoor, and with return equal to never-query exactly once calls reach zero.

Across five tasks, the untuned gate's return lies within the reported uncertainty of the best per-task fixed schedule, yet it does not recover the task-optimal volume and can overspend on low-headroom tasks. A leave-one-task-out check, which selects one global fixed budget on the other four tasks and measures regret against the held-out task's own best budget, gives the gate mean return regret $-0.0006$ and $+0.0017$ at the two advisor sizes against $+0.0078$ for the validation-selected budget; on cost-adjusted utility the gate pays $0.0042$ and $0.0067$ against $0.0063$ and $0.0098$. It is therefore ahead on average and behind on the low-headroom tasks where it overspends, and these regrets carry no uncertainty intervals. The result is robustness to an untuned budget, not oracle allocation.

Calibration exposes the distance between the theory and the deployed system, summarized in Table~\ref{main:calibration}. Marginal calibration undercovers on decision-relevant opportunities, where pooled coverage for $\widehat\Delta_t>0$ is 0.47. A group-conditional radius, fitted separately on the decision-relevant and zero strata so that split-conformal validity holds within each group, raises pooled coverage to 0.85, still below the 0.90 target, with worst subgroups near 0.30 to 0.43, no trajectory guarantee, and a return difference smaller than the noise. One limitation is structural: the paired-continuation proxy is collected on a frozen warm-started ensemble in which the advised action almost never changes the certified decision, so the radius is dominated by the estimator residual and is identical for both advisors. Recalibrating on queried, action-changing, near-threshold opportunities is the correct repair and remains future work. Table~\ref{main:diagnostics} collects the resulting claim boundary.

\begin{table}[!t]
\centering\small
\caption{BabyAI query-value calibration audit.}
\begin{tabular}{lcc}
\toprule
Quantity & GoToObj & GoToLocal\\
\midrule
Median residual & $6.8\!\times\!10^{-4}$ & $8.2\!\times\!10^{-4}$\\
Median $\beta_{\rm conf}$ & $2.2\!\times\!10^{-3}$ & $3.0\!\times\!10^{-3}$\\
Median $\beta_{\rm form}$ & $2.14$ & $2.14$\\
Held-out marginal coverage & $0.97$ & $0.97$\\
Decision-relevant marginal & $0.49$ & $0.45$\\
Decision-relevant Mondrian & $0.86$ & $0.83$\\
\bottomrule
\end{tabular}
\label{main:calibration}
\end{table}

\begin{table}[!t]
\centering\small
\caption{Claim boundary from deployment diagnostics.}
\begin{tabular}{p{0.34\columnwidth}p{0.57\columnwidth}}
\toprule
Diagnostic & Supported conclusion\\
\midrule
Formal radius & MC term dominates; controller abstains\\
Faithful price sweep & realized calls decrease monotonically to zero\\
Five-task volume & robust return without per-task tuning; no optimal-volume tracking\\
Leave-one-task-out & competitive mean utility, but loses on low-headroom tasks\\
Certificate sweep & acceptance and return trade-off under corruption\\
Shift pilot & inconclusive because post-shift budget is nearly exhausted\\
\bottomrule
\end{tabular}
\label{main:diagnostics}
\end{table}

Where the exact operational query value is available as a calibration label, the mechanism behaves as designed. On a clonable combination-lock chain the full controller gains a paired $0.027\pm0.007$ over never-query across 24 seeds, closing 47\% of the gap to an oracle-advice reference while issuing 0.12 queries per episode against 7.8 for always-advice, and its surrogate allocation loss against the myopic oracle is $0.025\pm0.010$. This is precisely the placement advantage that does not reproduce against a real advisor. In that tabular regime the certificate is an admission switch rather than a graded filter, so the selectivity there is produced by the value gate alone.

Two component checks separate the estimator from the certificate on the same controlled model. Valuing the post-response decision of Eq.~\eqref{main:decide} instead of the raw returned action halves unsafe acceptance, from 0.201 to 0.093 at an identical call count, which is the specific benefit claimed for response contingency; adding the certificate moves it to 0.088, and adding the calibrated radius cuts calls from 32.3 to 16.9 and unsafe acceptance to 0.056, at a small return cost and a larger surrogate allocation loss. Replacing the action-gap certificate with a variance-only cache rule accepts every wrong response, against 0.180 under the certificate, so low epistemic variance is not a substitute for an action-gap bound. On a chain with state-varying advisor reliability, a feature-generalizing response predictor resolves 8.8 of 12 budgeted decisions against 5.6 for per-state, cached-frequency, and uniform predictors, which is the one setting in which the learned predictor demonstrably pays for itself.

Two negative results bound the same suite. On a no-headroom pixel task the gate matches a never-query learner that already solves the task, yet still spends 90.4 calls, so its cost-adjusted return is negative: query precision should collapse toward zero when the fallback needs no help, and it does not. In the shift pilot the budget is almost exhausted before the goal relocates, leaving too few post-shift calls for invalidation policies to differ, so that study is retained as a protocol lesson rather than as a success claim.

Reproducibility follows the same discipline. Real-advisor rollouts use fixed open-weight checkpoints with logged decoding parameters, prompts, parsers, token counts, and latencies; the faithful matrices use seeds 0 to 19 and the tolerance sweep uses seeds 0 to 9, with paired comparisons reusing the same seed and warm-start state across arms. Every reported aggregate is regenerated from per-run logs by a single command, and each table is mapped to the training job or verification output that produced it. Quantities a run did not record are marked as not logged rather than imputed. Full protocols, per-seed records, and the remaining tables appear in Appendices~C to E.

\section{Discussion, Limitations, and Conclusion}
The controller makes three promises supported by different evidence. A certificate limits the loss of an executed action; a calibrated acquisition bound competes with a declared myopic oracle; and a wrapped regret guarantee additionally requires stability of the fallback under intervention-induced histories. Conflating them would overstate the result, which is why the three are audited separately above.

The two decisions also come apart in the limits, which is a useful check on the design. As the price approaches zero the gate buys a response whenever the lower confidence value is positive, yet the returned action must still certify, so cheap advice never becomes automatic execution. As the price diverges the controller reduces to the fallback learner together with any still-certified cache, which is what the sweep shows at the highest price. A reliable advisor resolves repeated contexts quickly and is usually accepted, whereas an adversarial or repetitive one consumes acquisition budget unless its predicted value contracts, because the certificate limits what is executed but never refunds the call. That asymmetry is the practical argument for retaining a response predictor rather than relying on the certificate alone.

The limitations are structural rather than incidental. Marginal conformal coverage does not imply subgroup or trajectory coverage, and the cloned-state paired continuations that make the radius auditable are unavailable in many deployments, where it degrades to a predictive heuristic. The certificate assumes uniformly valid action-value bounds, which a Bayesian last layer in a nonstationary loop approximates rather than guarantees. The acquisition rule is myopic, so a call that reshapes representation learning or consumes budget better spent later is not priced. The evidence is confined to BabyAI-scale tasks and two Qwen advisors.

Selective querying can reduce serving cost and energy, and separating acquisition from execution can reduce blind reliance on misleading advice, but language-model advice can encode bias, unsafe instructions, or privacy-sensitive context. Deployment should log queries, parser failures, certificate decisions, and human overrides; enforce hard budgets and data minimization; and retain a safe fallback. The method certifies a returned action against the learner's own bounds, and should not be read as certifying the advisor.

Taken together, the results support a narrow but operational conclusion. Purchasing a response and executing it are separate decisions and should be governed by separate evidence: a calibrated bound on predictive value for the first, an action-specific certificate for the second. What the real-advisor evidence establishes is calibrated regulation of call volume on a task where advice helps, obtained at more than 97\% fewer calls than always-query, together with a declared price that drives that volume monotonically to zero. What it does not yet establish is a per-state placement advantage over an equal-budget schedule, a task-level utility detector that abstains where advice is useless, or a radius that is simultaneously nonvacuous and covered by the allocation theorem. Closing the gap between the deployed radius and the theorem-backed one, and demonstrating a benefit beyond an early-advice schedule, are the two developments that would change these conclusions.

\FloatBarrier
\bibliography{references}

\FloatBarrier
\clearpage
\section*{Technical Appendices of Certified Predictive Value-of-Advice Gating for
Cost-Aware Language-Model Guidance in Reinforcement Learning}
The appendices preserve the full controller specification, proofs, complete experimental record, pilots, implementation details, and artifact audit from the supervisor-review draft. They are merged here for internal review and can later be separated for submission.
\appendix
\section{Full Problem Formulation and Controller}\label{app:full-controller}
\subsection{Problem Formulation}
\label{sec:problem}

We consider an episodic MDP \(\calM=(\calS,\calA,P,r,H)\) with rewards in \([0,1]\), horizon \(H\),
and \(T\) primitive interaction steps. The unknown MDP is denoted by \(M\), and the agent's history before
time \(t\) is \(\mathcal H_t\). A fallback learner proposes an action \(a_t^{\rm base}\). In the tabular analysis
the learner maintains a belief \(b_t(M)=p(M\mid\mathcal H_t)\); in the deep implementation, posterior
samples from a Bayesian last layer approximate the induced distribution over action values.

An external advisor receives a context \(z_t=\phi(\mathcal H_t)\) and returns a raw response
\(Y_t\in\calY\). A parser \(g(Y_t,z_t)\) maps the response to a candidate action in
\(\calA\cup\{\bot\}\), where \(\bot\) represents a parsing failure or abstention. Each call has priced cost
\(c_t=\lambda\kappa_t\), and the controller has a hard budget \(B\). The cache stores the response,
parsed action, context key, insertion time, confidence statistics, and advisor-model version.

The controller maintains a predictive response model
\begin{equation}
  p_{\psi,t}(y\mid z_t,\mathcal H_t).
  \label{eq:response-predictor}
\end{equation}
For an informative advisor, responses are observations about the latent task and a likelihood
\(p_\psi(y\mid M,z_t)\) induces the response-conditioned belief
\begin{equation}
 b_t^y(M)
 =
 \frac{p_\psi(y\mid M,z_t)b_t(M)}
 {\int p_\psi(y\mid M',z_t)b_t(M')\,dM'}.
 \label{eq:response-update}
\end{equation}
For a candidate-only advisor, the response proposes an action but does not update the environment belief;
then \(b_t^y=b_t\). The former is a Bayesian value-of-information model. The latter is a predictive
value-of-advice model. The algorithm supports both, and the experiments report which interpretation is
used for every advisor.

Query acquisition and execution safety use different value objects. Before a query opportunity, the
controller fixes a continuation window \(\ell_t\le H\), a no-response continuation rule
\(\pi_t^{{\rm cont},0}\), and a prospective rule \(\pi_t^{{\rm cont},y}\) for every parsed response
\(y\). These rules include any declared cache reuse and belief-dependent behavior, but the complete map
\(y\mapsto\pi_t^{{\rm cont},y}\) is fixed before the fresh response. Let
\begin{equation}
\begin{aligned}
 Q_{M,t}^{y}(s,a)
 &=\E_M\!\left[\sum_{j=0}^{\ell_t-1}r_{t+j}
 \mid s_t=s,a_t=a,\pi_t^{{\rm cont},y}\right],\\
 \mu_{b,t}^{y}(s,a)
 &=\E_{M\sim b}[Q_{M,t}^{y}(s,a)].
\end{aligned}
 \label{eq:declared-continuation}
\end{equation}
with superscript \(0\) for the no-query branch. This is the operational, explicitly myopic value used for
acquisition. It equals an optimal continuation value only in experiments that deliberately choose an
optimal continuation rule.

Execution safety instead targets the full-horizon optimal value \(Q_{M,h}^{\star}\). The controller
maintains action-specific confidence bounds \(L_t^b(s,a)\) and \(U_t^b(s,a)\) for that target. Their
empirical deep-RL estimates are not treated as formal confidence sequences unless explicitly calibrated
and evaluated as such. Keeping Eq.~\eqref{eq:declared-continuation} separate from these bounds prevents a
paired rollout estimate of a deployed policy from being mislabeled as an estimate of \(Q^\star\).

\subsection{Certified Predictive Advice Gating}
\label{sec:method}

\subsubsection{Certified execution}

At belief \(b\), an advice action \(a_c\) is \(\epsilon_t\)-certified at state \(s_t\) when
\begin{equation}
 L_t^b(s_t,a_c)
 \ge
 \max_{a\in\calA}U_t^b(s_t,a)-\epsilon_t.
 \label{eq:safe}
\end{equation}
The confidence width of the advised action must therefore be small relative to the tolerance. This makes
the certificate intentionally conservative in early learning. The experiments report the resulting false-
accept, false-reject, and overall acceptance rates rather than treating safety as free.

Given a response \(y\), let \(a_y=g(y,z_t)\). The post-response decision is
\begin{equation}
 d_t(y)=
 \begin{cases}
 a_y, & \substack{a_y\neq\bot,\ a_y\text{ satisfies Eq.~\eqref{eq:safe} under }b_t^y,\\
 \mu_{b_t^y,t}^{y}(s_t,a_y)\ge
 \mu_{b_t^y,t}^{y}(s_t,a_t^{{\rm base},y})},\\[2mm]
 a_t^{{\rm base},y}, & \text{otherwise}.
 \end{cases}
 \label{eq:post-response-action}
\end{equation}
where \(a_t^{{\rm base},y}\) is the fallback proposal after any permitted response-conditioned belief
update. For candidate-only advice, it is the unconditioned fallback action unless the implementation explicitly
resamples. Thus every possible response is evaluated through the same decision rule that will be used
after the call.

\subsubsection{No-query action and cache reuse}

Let \(a_t^c\) be the cached candidate for the current context, if one exists. The no-query action is
\begin{equation}
 a_t^0=
 \begin{cases}
 a_t^c, & \substack{a_t^c\text{ is certified, and}\\
 \mu_{b_t,t}^{0}(s_t,a_t^c)\ge
 \mu_{b_t,t}^{0}(s_t,a_t^{\rm base})},\\[1mm]
 a_t^{\rm base}, & \text{otherwise}.
 \end{cases}
 \label{eq:no-query-action}
\end{equation}
This definition prevents a certified but lower-valued cached action from overriding the fallback merely
because it lies within a permissive tolerance.

\subsubsection{Pre-query value}

For the latent environment \(M^\star\), the operational response-contingent value of a query is
\begin{equation}
\begin{aligned}
 \Delta_t
 &=\E_{Y\sim p_\star(\cdot\mid M^\star,z_t,\mathcal H_t)}
 \left[Q_{M^\star,t}^{Y}(s_t,d_t(Y))\right]\\
 &\quad-Q_{M^\star,t}^{0}(s_t,a_t^0).
\end{aligned}
 \label{eq:true-query-value}
\end{equation}
Here \(p_\star\) is the advisor's environment-conditional response law, including the declared prompt and
decoding randomization. This counterfactual quantity is not observed online. It values the decision and continuation that would
actually follow each response, not the raw advised action. The corresponding Bayes pre-posterior value is
\begin{equation}
\begin{aligned}
 \bar\Delta_t
 &=\E_{Y\sim p_{\psi,t}(\cdot\mid z_t,\mathcal H_t)}
 \left[\mu_{b_t^Y,t}^{Y}(s_t,d_t(Y))\right]\\
 &\quad-\mu_{b_t,t}^{0}(s_t,a_t^0).
\end{aligned}
 \label{eq:bayes-query-value}
\end{equation}
Under the informative likelihood model, Eq.~\eqref{eq:bayes-query-value} integrates the joint law of
\((M,Y)\) through \(b_t^Y\). In the candidate-only case it is the predictive value of advice and
\(b_t^Y=b_t\). The estimator below approximates \(\bar\Delta_t\); the calibrated radius covers its total
error relative to the latent operational target \(\Delta_t\).

With response samples or an enumerated parsed-action distribution and \(K\) posterior value samples, the
practical estimator is
\begin{equation}
\begin{aligned}
 \widehat\Delta_t
 &=\sum_{y\in\calY_t}\widehat p_t(y\mid z_t)
 \left[\frac{1}{K}\sum_{k=1}^{K}Q_{k,t}^y(s_t,d_t(y))\right]\\
 &\quad-\frac{1}{K}\sum_{k=1}^{K}Q_{k,t}^{0}(s_t,a_t^0).
\end{aligned}
 \label{eq:query-estimator}
\end{equation}
where the \(Q_{k,t}^y\) are samples of the declared continuation value in
Eq.~\eqref{eq:declared-continuation}, and \(\calY_t\) can be the finite parsed-action set plus \(\bot\).
Reusing unconditional samples is valid only in the candidate-only case. An informative advisor requires
response-conditioned model samples or importance weights representing Eq.~\eqref{eq:response-update}.

\subsubsection{Calibrating the query-value radius}
\label{sec:beta-calibration}

The query-value radius is estimated end to end; it is not a free multiplier. A randomized acquisition
policy collects a disjoint calibration set of query opportunities. At opportunity \(i\), the fallback learner,
response predictor, prospective continuation map, and cache are frozen. We draw \(J\) independent advisor-response blocks
and, for every response, clone the same latent simulator state and run \(R\) paired continuations with
common random numbers: one executes \(d_i(Y)\) followed by \(\pi_i^{{\rm cont},Y}\), and the other
executes \(a_i^0\) followed by \(\pi_i^{{\rm cont},0}\). Thus the proxy uses exactly the continuation
window and branch rules in Eqs.~\eqref{eq:declared-continuation}--\eqref{eq:true-query-value}. It is not a
proxy for \(Q^\star\). This gives
\begin{equation}
 \widetilde\Delta_i
 =\frac{1}{JR}\sum_{j=1}^{J}\sum_{r=1}^{R}
 \left(G^{\rm query}_{ijr}-G^{0}_{ijr}\right),
 \label{eq:delta-proxy}
\end{equation}
where each return difference lies in \([-\ell_i,\ell_i]\subseteq[-H,H]\). A simultaneous Monte Carlo radius for \(n\) calibration
opportunities is
\begin{equation}
 \rho_i^{\rm MC}
 =H\sqrt{\frac{2\log(2n/\delta_{\rm MC})}{J}},
 \label{eq:mc-radius}
\end{equation}
where each of the \(J\) independent outer blocks averages its \(R\) paired transition continuations.
This range-based guarantee does not incorrectly count continuations sharing the same response as
independent response samples. When repeat calls are correlated, a declared cluster of calls is one outer
block and \(J\) denotes the number of independent clusters, not the number of raw calls. An empirical-
Bernstein radius that uses the nested response and transition variance is reported as a tighter alternative.

A scale model \(s_\omega(x)>0\), fitted on data disjoint from the calibration set, predicts how query-value
error varies with observation, predictor entropy, value width, and parser-failure probability. Calibration
scores are
\begin{equation}
 r_i=\frac{|\widehat\Delta_i-\widetilde\Delta_i|+\rho_i^{\rm MC}}
 {s_\omega(x_i)}.
 \label{eq:beta-score}
\end{equation}
Let \(q_{1-\alpha}\) be the split-conformal quantile with rank
\(\lceil(n+1)(1-\alpha)\rceil\), using an infinite radius if the requested rank exceeds \(n\). The deployed
radius is
\begin{equation}
 \beta_t=q_{1-\alpha}s_\omega(x_t).
 \label{eq:beta-radius}
\end{equation}
Under exchangeability between randomized calibration opportunities and the declared evaluation
opportunities, this construction gives marginal coverage for the total error between \(\widehat\Delta_t\)
and the latent \(\Delta_t\), including response-model, value-model, and posterior-predictive error, while
Eq.~\eqref{eq:mc-radius} accounts for proxy noise. For a declared finite collection of at most
\(T_{\rm eval}\) evaluated opportunities, setting
\(\alpha=\delta_{\rm conf}/T_{\rm eval}\) and applying a union bound gives the simultaneous event required
by Assumption~\ref{ass:value-error}, with
\(\delta_\beta\le\delta_{\rm conf}+\delta_{\rm MC}\). The resulting radius can be conservative or
infinite when calibration data are insufficient. We therefore report marginal coverage, simultaneous
trajectory coverage, radius width, and the fraction of opportunities on which the conservative gate
abstains.

This procedure is feasible in BabyAI, TextWorld/ALFWorld, Crafter, and the controlled tasks because the
advisor is real but the environment is clonable. In a non-clonable deployment, \(\Delta_t\) cannot be
counterfactually audited this way. There the radius is only a predictive heuristic, and
Theorem~\ref{thm:allocation} is not claimed to hold empirically without an alternative identification
argument.

The conservative gate is
\begin{equation}
 q_t=1
 \quad\Longleftrightarrow\quad
 \widehat\Delta_t-\beta_t\ge c_t,
 \quad B_t>0.
 \label{eq:query-rule}
\end{equation}
The nonconservative ablation sets \(\beta_t=0\). A target budget can be enforced with an online dual
update for \(\lambda\), or \(\lambda\) can be calibrated on a held-out training prefix. Test returns are not
used to select it.

\subsubsection{Cache invalidation}

Cache validity is checked separately from query acquisition. An entry is disabled when its parsed action
fails Eq.~\eqref{eq:safe}, when the current context is outside the calibrated support of the response
predictor, or when a shift detector flags the context. A time-to-live policy is included as a baseline.
The support signal can be supplied by a latent density certificate \citep{densitycert}, but value-space and
density-space triggers are reported separately because one does not imply the other.
Cache invalidation is not claimed to improve performance unless it beats both no-cache and time-to-live
under a protocol that reserves enough post-shift query budget for the policies to differ.

\begin{algorithm}[t]
\caption{Certified Predictive Value-of-Advice Gating}
\label{alg:voi}
\begin{algorithmic}[1]
\State Input: fallback learner, response model \(\widehat p_t\), cache \(c\), budget \(B\), costs \(c_t\), tolerances \(\epsilon_t\), error bounds \(\beta_t\)
\For{each episode}
  \State initialize or sample the fallback value model
  \For{each history \(\mathcal H_t\) with state representation \(s_t\)}
    \State compute fallback proposal, value samples, and action-specific bounds
    \State validate the cache and compute \(a_t^0\) from Eq.~\eqref{eq:no-query-action}
    \State enumerate or sample possible responses and post-response decisions \(d_t(y)\)
    \State compute \(\widehat\Delta_t\) from Eq.~\eqref{eq:query-estimator}
    \If{Eq.~\eqref{eq:query-rule} holds}
      \State query advisor; parse response; update response model and cache; decrement \(B\)
      \State update the advice-conditioned belief if a validated likelihood is available
      \State recompute the post-response certificate and execute \(d_t(Y_t)\)
    \Else
      \State execute \(a_t^0\)
    \EndIf
    \State observe transition and reward; update fallback learner and calibration statistics
  \EndFor
\EndFor
\end{algorithmic}
\end{algorithm}

\section{Full Theory and Proofs}
\label{sec:theory}

The theory separates execution safety, query-allocation quality, and fallback regret. This separation is
necessary because a valid action certificate does not by itself prove that querying is useful, and a
standalone PSRL regret theorem cannot automatically be applied to the subset of steps generated by a
hybrid controller.

For a visited state-stage pair, define the optimal Bellman gap
\begin{equation}
 \Gap_t(a)=V_h^\star(s_t)-Q_h^\star(s_t,a).
\end{equation}

\begin{assumption}[Uniform action-value coverage]
\label{ass:coverage}
With probability at least \(1-\delta\), simultaneously for every visited \(t,h,s,a\),
\begin{equation}
 L_t(s,a)\le Q_h^\star(s,a)\le U_t(s,a).
 \label{eq:coverage}
\end{equation}
\end{assumption}

\begin{lemma}[Certified advice gap]
\label{lem:certified}
On the event in Assumption~\ref{ass:coverage}, every advice action executed under
Eq.~\eqref{eq:safe} satisfies \(\Gap_t(a_t)\le\epsilon_t\).
\end{lemma}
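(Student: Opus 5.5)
The argument is a direct chain of inequalities on the coverage event of Assumption~\ref{ass:coverage}. Fix a visited step $t$ at stage $h$ with state $s_t$, and suppose the executed action $a_t=a_c$ is an advice action that passed Eq.~\eqref{eq:safe} under the belief $b$ used at execution time ($b_t^y$ after a fresh response, or $b_t$ for a cached candidate). Let $a^\star\in\arg\max_a Q_h^\star(s_t,a)$, so that $V_h^\star(s_t)=Q_h^\star(s_t,a^\star)$, assuming the greedy optimal action attains the value in the finite action set $\calA$.

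Three facts combine. First, the upper half of Eq.~\eqref{eq:coverage} applied at $a^\star$ gives $V_h^\star(s_t)\le U_t(s_t,a^\star)\le\max_{a\in\calA}U_t(s_t,a)$. Second, the certificate Eq.~\eqref{eq:safe} gives $\max_a U_t(s_t,a)\le L_t(s_t,a_c)+\epsilon_t$. Third, the lower half of Eq.~\eqref{eq:coverage} applied at $a_c$ gives $L_t(s_t,a_c)\le Q_h^\star(s_t,a_c)$. Chaining them yields $V_h^\star(s_t)\le Q_h^\star(s_t,a_c)+\epsilon_t$, which is exactly $\Gap_t(a_c)\le\epsilon_t$. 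The other execution conditions in Eq.~\eqref{eq:post-response-action} and Eq.~\eqref{eq:no-query-action}, namely parsing and comparing declared-continuation values $\mu$, only restrict which certified actions run. They are not needed for the bound.

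The only step needing care is checking that the bounds used in the certificate are the ones covered by Assumption~\ref{ass:coverage}. The certificate is stated with $L_t^b,U_t^b$ under a possibly response-conditioned belief $b_t^y$, while the assumption is stated for $L_t,U_t$ "for every visited $t,h,s,a$". I would read the assumption as covering whichever bound pair the controller actually evaluates at execution, including the post-response recheck in Algorithm~\ref{alg:voi}. If that is not already implicit, I would state it explicitly, for example by requiring coverage for every belief the controller may condition on, of which there are finitely many per step since $\calY_t$ is finite. Because the event is simultaneous over all visited steps, the conclusion holds for every executed advice action at once, with probability at least $1-\delta$, and no union bound over $t$ is needed.
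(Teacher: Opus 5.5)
Your proof is correct and is the paper's argument exactly: the chain $V_h^\star(s_t)\le U_t(s_t,a^\star)\le\max_a U_t(s_t,a)\le L_t(s_t,a_c)+\epsilon_t\le Q_h^\star(s_t,a_c)+\epsilon_t$ is precisely the paper's proof. Your remark that the coverage event must apply to the belief-indexed bounds $L_t^b,U_t^b$ actually checked at execution (for example under $b_t^y$) is a sensible clarification that the paper leaves implicit.
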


\begin{proof}
Let \(a_t^\star\in\arg\max_a Q_h^\star(s_t,a)\). Coverage and the certificate give
\[
\begin{aligned}
Q_h^\star(s_t,a_t^\star)
&\le U_t(s_t,a_t^\star)
\le \max_a U_t(s_t,a)\\
&\le L_t(s_t,a_t)+\epsilon_t
\le Q_h^\star(s_t,a_t)+\epsilon_t.
\end{aligned}
\]
Rearranging proves the claim.
\end{proof}

\begin{assumption}[Intervention-stable fallback]
\label{ass:stable-base}
Let \(\mathcal C_T\) be the set of times at which certified advice is executed. On the histories induced by
the wrapped controller and on the coverage event, the fallback proposals satisfy
\begin{equation}
 \E\left[\sum_{t\notin\mathcal C_T}\Gap_t(a_t^{\rm base})\right]
 \le \mathfrak R_{\rm base}(T).
 \label{eq:stable-base}
\end{equation}
\end{assumption}

Assumption~\ref{ass:stable-base} is deliberately about the actual intervention-induced histories. A regret
bound for a standalone execution of the base algorithm does not imply it automatically. It holds for any
base analysis proved under the relevant adaptive data collection, and it may be verified separately for a
particular posterior-sampling construction.

The condition can be verified for a concrete optimistic fallback. The following lemma is intentionally
simple: it relies only on action-specific bounds already computed by the certificate and remains valid when
advice changes the visited states.

\begin{lemma}[Adaptive-history stability of an optimistic fallback]
\label{lem:optimistic-stability}
Suppose the base proposal satisfies
\(a_t^{\rm base}\in\arg\max_a U_t(s_t,a)\), Assumption~\ref{ass:coverage} holds, and along every
adaptively generated trajectory
\begin{equation}
 \sum_{t\notin\mathcal C_T}
 \left(U_t(s_t,a_t^{\rm base})-L_t(s_t,a_t^{\rm base})\right)
 \le W_T.
 \label{eq:width-sum}
\end{equation}
Then Assumption~\ref{ass:stable-base} holds with \(\mathfrak R_{\rm base}(T)=W_T\). In particular, if
the width of state-stage-action triple \(x=(h,s,a)\) on its \(n\)-th visit is at most
\(CH\sqrt{\iota/\max\{1,n\}}\), then
\begin{equation}
 W_T\le 2CH\sqrt{HSA T\iota}.
 \label{eq:optimistic-width-bound}
\end{equation}
\end{lemma}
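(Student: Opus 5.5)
The argument has two parts: a pointwise gap-to-width comparison, and a pigeonhole summation of the widths.

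\textbf{Part 1 (pointwise reduction).} Work on the coverage event of Assumption~\ref{ass:coverage}. Fix any fallback step $t\notin\mathcal C_T$ at stage $h$ and let $a_t^\star\in\arg\max_a Q_h^\star(s_t,a)$. Then
\[
V_h^\star(s_t)=Q_h^\star(s_t,a_t^\star)\le U_t(s_t,a_t^\star)\le \max_a U_t(s_t,a)=U_t(s_t,a_t^{\rm base}).
\]
The first inequality is coverage; the equality uses that the base proposal maximizes $U_t$. Coverage at the base action also gives $Q_h^\star(s_t,a_t^{\rm base})\ge L_t(s_t,a_t^{\rm base})$. Subtracting,
\[
\Gap_t(a_t^{\rm base})\le U_t(s_t,a_t^{\rm base})-L_t(s_t,a_t^{\rm base}).
\]
This inequality is pathwise and uses nothing about how $s_t$ was reached. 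Summing over $t\notin\mathcal C_T$ and applying Eq.~\eqref{eq:width-sum} on every trajectory bounds the sum by $W_T$. Taking expectations restricted to the coverage event gives Eq.~\eqref{eq:stable-base} with $\mathfrak R_{\rm base}(T)=W_T$. This is exactly how Assumption~\ref{ass:stable-base} is phrased, since it is stated on the coverage event, so the failure event is not charged here; it is charged as $\delta TH$ in Theorem~\ref{main:wrapper}.

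\textbf{Part 2 (width sum).} Regroup the sum by the triple $x=(h,s,a_t^{\rm base})$ that each fallback step selects. Let $N_x$ be the number of times triple $x$ is selected along the trajectory. The step that is the $n$-th visit to $x$ contributes at most $CH\sqrt{\iota/\max\{1,n\}}$. Here I use "visit" in the sense under which the width hypothesis is stated, namely counts that include advised steps too. Counting only the fallback-selected visits can only underweight these terms: any visit number $n'$ is at least the fallback-only index $n$, and the width bound is monotone decreasing in $n$. This is the one place where the adaptive, intervention-induced history matters. The argument survives because the per-visit bound depends only on a count, never on a distributional assumption about which states are visited.

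Then the harmonic sum gives $\sum_{n=1}^{N_x}n^{-1/2}\le 2\sqrt{N_x}$, so
\[
W_T\le 2CH\sqrt{\iota}\sum_x\sqrt{N_x}.
\]
There are at most $HSA$ triples and $\sum_x N_x\le T$, so Cauchy--Schwarz gives $\sum_x\sqrt{N_x}\le\sqrt{HSA\,T}$. This yields Eq.~\eqref{eq:optimistic-width-bound}.

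\textbf{Main obstacle.} The delicate point is the bookkeeping of "the $n$-th visit" when advised steps also update the counts. Before summing, I would state explicitly that the width hypothesis is indexed by the total visit count to the triple. I would then check that the monotonicity step above really does let the fallback-only index replace it. If instead the widths were indexed only by fallback visits, the same computation applies directly with fallback counts.
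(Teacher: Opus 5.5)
Your proof is correct and follows the paper's argument: coverage plus $a_t^{\rm base}\in\arg\max_a U_t(s_t,a)$ bounds each fallback gap by the selected action's width, and the width sum is then controlled with $\sum_{n\le N_x} n^{-1/2}\le 2\sqrt{N_x}$ and Cauchy--Schwarz over the $HSA$ triples. Your monotonicity remark handles visit counts that include advised steps. Because the width bound decreases in $n$ and the total index is at least the fallback-only index, bounding by the fallback-only index is valid. The paper leaves this bookkeeping implicit.
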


\begin{proof}
Let \(a_t^\star\) be optimal. On the coverage event and a fallback step,
\[
\begin{aligned}
Q_h^\star(s_t,a_t^\star)
&\le U_t(s_t,a_t^\star)
\le U_t(s_t,a_t^{\rm base}),\\
Q_h^\star(s_t,a_t^{\rm base})
&\ge L_t(s_t,a_t^{\rm base}).
\end{aligned}
\]
Thus \(\Gap_t(a_t^{\rm base})\) is at most the selected action's confidence width. Summing gives
Eq.~\eqref{eq:width-sum}. For the visit-count specialization, use
\(\sum_{n=1}^{N_x}n^{-1/2}\le2\sqrt{N_x}\), then Cauchy--Schwarz:
\(\sum_{x}\sqrt{N_x}\le\sqrt{HSA\sum_xN_x}\le\sqrt{HSA T}\).
\end{proof}

Lemma~\ref{lem:optimistic-stability} is a complete verification for the optimistic tabular fallback used in
the controlled study. Posterior sampling remains the primary randomized fallback, for which
Corollary~\ref{cor:psrl} is conditional; the chain experiment additionally compares its fallback-gap growth
under standalone and wrapped histories.

\begin{theorem}[Regret of the certified wrapper]
\label{thm:reg}
Under Assumptions~\ref{ass:coverage} and~\ref{ass:stable-base}, Algorithm~\ref{alg:voi} satisfies
\begin{equation}
 \E[\Reg(T)]
 \le
 \mathfrak R_{\rm base}(T)
 +\E\left[\sum_{t\in\mathcal C_T}\epsilon_t\right]
 +\delta TH.
 \label{eq:regret-bound}
\end{equation}
\end{theorem}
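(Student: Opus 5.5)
The bound follows from a regret decomposition along the controller's own trajectory, followed by a split on the coverage event. I would first write $\Reg(T)$ as a sum of per-step optimal Bellman gaps of the executed actions, using the performance-difference lemma on each episode: the value loss of the executed policy relative to $V^\star$ equals the expected sum over stages of $\Gap_t(a_t)=V_h^\star(s_t)-Q_h^\star(s_t,a_t)$ along the visited states. This identity holds for any (possibly history-dependent) behavior policy, so it applies to the hybrid controller without needing the fallback's standalone analysis. Summing over episodes gives $\E[\Reg(T)]=\E[\sum_{t=1}^T\Gap_t(a_t)]$, where the expectation is over the actual intervention-induced trajectory.

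Next I would partition the time indices into the advised set $\mathcal C_T$ and its complement. On $t\notin\mathcal C_T$ the executed action is $a_t^{\rm base}$, or possibly $a_t^{{\rm base},y}$ after a response; I would fold that case into Assumption~\ref{ass:stable-base} by reading its ``fallback proposals'' as whatever fallback action is executed. A certified cached action $a_t^0=a_t^c$ that passes Eq.~\eqref{eq:safe} is counted in $\mathcal C_T$. On $t\in\mathcal C_T$ the executed action satisfies Eq.~\eqref{eq:safe} under the relevant belief, so Lemma~\ref{lem:certified} gives $\Gap_t(a_t)\le\epsilon_t$ on the coverage event $\mathcal E$ of Assumption~\ref{ass:coverage}.

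Then I would split the expectation as $\E[\sum_t\Gap_t(a_t)\mathbf 1_{\mathcal E}]+\E[\sum_t\Gap_t(a_t)\mathbf 1_{\mathcal E^c}]$. On $\mathcal E$, the advised part is at most $\E[\sum_{t\in\mathcal C_T}\epsilon_t]$ and the fallback part is at most $\mathfrak R_{\rm base}(T)$ by Eq.~\eqref{eq:stable-base}, which is stated on the coverage event. On $\mathcal E^c$, rewards lie in $[0,1]$, so each gap is at most $H$. The sum is therefore at most $TH$, and with $\Pr(\mathcal E^c)\le\delta$ this contributes $\delta TH$.

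The main obstacle is the bookkeeping of which actions are covered by which assumption. Certification must be checked against bounds that satisfy Assumption~\ref{ass:coverage}. For candidate-only advice $b_t^y=b_t$, so this is immediate; for informative advice I would require that the coverage event include the post-response bounds $L^{b_t^y}_t,U^{b_t^y}_t$. A second point is that Eq.~\eqref{eq:stable-base} bounds an expectation restricted to the coverage event rather than a pathwise quantity. I would interpret it as $\E[\mathbf 1_{\mathcal E}\sum_{t\notin\mathcal C_T}\Gap_t]\le\mathfrak R_{\rm base}(T)$, which is exactly the form the split above consumes.
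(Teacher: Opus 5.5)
Your proposal is correct and follows the paper's proof essentially step for step. Both arguments use the performance-difference identity along the controller's own trajectory, then bound advised steps by $\epsilon_t$ via Lemma~\ref{lem:certified} and the remaining steps via Assumption~\ref{ass:stable-base} on the coverage event, and charge at most $H$ per step on the failure event to obtain $\delta TH$. Your bookkeeping remarks make explicit several conventions the paper's short proof leaves implicit: the post-response bounds for informative advice, counting certified cache reuse in $\mathcal C_T$, and reading Eq.~\eqref{eq:stable-base} as an expectation restricted to the coverage event.
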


\begin{proof}
The finite-horizon performance-difference identity writes expected regret as the expected sum of optimal
Bellman gaps along the controller's own trajectory. On the coverage event, Lemma~\ref{lem:certified}
bounds the gaps on \(\mathcal C_T\), and Assumption~\ref{ass:stable-base} bounds the remaining gaps.
On the failure event, each primitive-step continuation gap is at most \(H\). Taking expectations gives
Eq.~\eqref{eq:regret-bound}.
\end{proof}

\begin{corollary}[Conditional PSRL specialization]
\label{cor:psrl}
If a particular PSRL construction is proved to satisfy Assumption~\ref{ass:stable-base} with
\(\mathfrak R_{\rm base}(T)=\mathfrak R_{\PSRL}(T)\), then the certified wrapper inherits that rate plus
the two additive terms in Eq.~\eqref{eq:regret-bound}. The ordinary standalone PSRL bound alone is not
sufficient for this corollary.
\end{corollary}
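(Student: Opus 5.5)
The corollary follows from a direct instantiation of Theorem~\ref{thm:reg}. The plan is to fix the particular PSRL construction used as the fallback learner. By hypothesis it has been shown to satisfy Assumption~\ref{ass:stable-base} with $\mathfrak R_{\rm base}(T)=\mathfrak R_{\PSRL}(T)$. Concretely, on the histories generated by the wrapped controller (Algorithm~\ref{alg:voi}) and on the coverage event, this means
\[
\E\Bigl[\sum_{t\notin\mathcal C_T}\Gap_t(a_t^{\rm base})\Bigr]\le\mathfrak R_{\PSRL}(T).
\]
Together with Assumption~\ref{ass:coverage}, which the corollary implicitly retains because the wrapper's certificate uses the same action bounds, both hypotheses of Theorem~\ref{thm:reg} hold. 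Substituting $\mathfrak R_{\PSRL}(T)$ for $\mathfrak R_{\rm base}(T)$ in Eq.~\eqref{eq:regret-bound} then gives
\[
\E[\Reg(T)]\le\mathfrak R_{\PSRL}(T)+\E\Bigl[\sum_{t\in\mathcal C_T}\epsilon_t\Bigr]+\delta TH,
\]
which is exactly ``that rate plus the two additive terms.''

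Two checks remain before the substitution is legitimate. First, the posterior-sampling fallback must actually produce $a_t^{\rm base}$ at every $t\notin\mathcal C_T$. That way, the partition of the performance-difference sum in the proof of Theorem~\ref{thm:reg} into certified-advice steps and fallback steps is exhaustive. If the wrapper also executes a certified cached action, that action is counted in $\mathcal C_T$, and Lemma~\ref{lem:certified} covers it. Second, the expectation in the PSRL stability statement must be taken over the same joint randomness as in Theorem~\ref{thm:reg}: advisor responses, posterior samples, and transitions. It must also be restricted to the coverage event, so the $\delta TH$ failure term is not double-counted.

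The final sentence of the corollary, that the ordinary standalone PSRL bound is insufficient, is a remark rather than something to prove. I would justify it briefly. A standalone analysis controls $\sum_t\Gap_t$ only under the state distribution induced by PSRL acting alone. Advice alters both which states are visited and which data enter the posterior, so the non-advised steps are not a subsequence of a standalone PSRL run. The chain audit in Table~\ref{main:stability}, where the wrapped slope differs from the standalone slope, illustrates that this gap is real.

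The main obstacle is not in this argument, which is pure substitution. It lies in the hypothesis itself: no PSRL construction is proved stable here. That is why the corollary is stated conditionally, and the proof should say explicitly that it adds nothing beyond Theorem~\ref{thm:reg}.
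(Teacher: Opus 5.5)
Your proposal is correct and matches the paper, which gives no separate proof. There the corollary is simply Theorem~\ref{thm:reg} with $\mathfrak R_{\rm base}(T)=\mathfrak R_{\PSRL}(T)$ substituted and Assumption~\ref{ass:coverage} retained, and the final sentence is the same caution the paper states after Assumption~\ref{ass:stable-base}, not a proved claim.
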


\begin{assumption}[Calibrated query-value error]
\label{ass:value-error}
With probability at least \(1-\delta_\beta\), simultaneously at every declared evaluation opportunity
before budget exhaustion,
\(|\widehat\Delta_t-\Delta_t|\le\beta_t\).
\end{assumption}

For one query opportunity, define the incremental myopic surrogate
\begin{equation}
 u_t(q)=q(\Delta_t-c_t),\qquad q\in\{0,1\}.
 \label{eq:myopic-surrogate}
\end{equation}
The oracle for this surrogate is \(q_t^\star=\mathbf{1}\{\Delta_t\ge c_t\}\), where
\(\mathbf{1}\) is the binary indicator.

\begin{theorem}[Surrogate loss of conservative query allocation]
\label{thm:allocation}
Let the oracle myopic gate query when \(\Delta_t\ge c_t\). On the event in
Assumption~\ref{ass:value-error}, the conservative rule in Eq.~\eqref{eq:query-rule} never issues a
negative-net-value query. Relative to the oracle myopic gate, its
cumulative loss in the surrogate of Eq.~\eqref{eq:myopic-surrogate}, before hard-budget exhaustion, is at most
\begin{equation}
 2\sum_{t=1}^{T}\beta_t.
 \label{eq:allocation-regret}
\end{equation}
\end{theorem}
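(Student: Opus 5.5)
The plan is to work entirely on the event of Assumption~\ref{ass:value-error} and argue one opportunity at a time, then sum. Fix a time $t$ before hard-budget exhaustion, so that $B_t>0$ and the gate in Eq.~\eqref{eq:query-rule} reduces to the test $\widehat\Delta_t-\beta_t\ge c_t$. Write $q_t=\mathbf 1\{\widehat\Delta_t-\beta_t\ge c_t\}$ for the conservative decision and $q_t^\star=\mathbf 1\{\Delta_t\ge c_t\}$ for the oracle. The per-step surrogate loss is $u_t(q_t^\star)-u_t(q_t)=(q_t^\star-q_t)(\Delta_t-c_t)$, and I would bound it by cases on the pair $(q_t,q_t^\star)$.

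\emph{First claim (no negative-net-value query).} If $q_t=1$, the coverage event gives $\Delta_t\ge\widehat\Delta_t-\beta_t\ge c_t$, so $u_t(q_t)=\Delta_t-c_t\ge0$. This also shows $q_t=1\Rightarrow q_t^\star=1$. Hence the case $q_t=1,\ q_t^\star=0$ cannot occur. When the two decisions agree, the loss is zero.

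\emph{Remaining case: a missed positive query}, $q_t=0$ and $q_t^\star=1$. Here the loss is $\Delta_t-c_t\ge0$. From $q_t=0$ we have $\widehat\Delta_t<c_t+\beta_t$, and coverage gives $\Delta_t\le\widehat\Delta_t+\beta_t<c_t+2\beta_t$, so the loss is below $2\beta_t$. In every case the per-step loss therefore lies in $[0,2\beta_t]$.

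Summing over $t=1,\dots,T$, restricted to steps before budget exhaustion, where the comparison is declared, gives $\sum_t[u_t(q_t^\star)-u_t(q_t)]\le2\sum_t\beta_t$. This is Eq.~\eqref{eq:allocation-regret}.

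The only delicate point is bookkeeping rather than inequalities. The surrogate is defined per opportunity with $\Delta_t$ evaluated on the controller's own history, so the oracle is compared stepwise along the same trajectory, not as a separate rollout. The assumption must hold simultaneously over all these steps, which is exactly what the union-bound construction of $\beta_t$ supplies. I will state explicitly that states, histories, and budgets are those of the conservative controller, so that budget-induced divergence is excluded, consistent with the "before exhaustion" qualifier. An infinite $\beta_t$ makes the bound trivially true.
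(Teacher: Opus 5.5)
Your proposal is correct and follows essentially the same argument as the paper: a query forces $\Delta_t\ge\widehat\Delta_t-\beta_t\ge c_t$, so the only possible disagreement is a missed query, where $\Delta_t\le\widehat\Delta_t+\beta_t<c_t+2\beta_t$, and summing gives $2\sum_t\beta_t$. Your explicit case split on $(q_t,q_t^\star)$ and your remark that the oracle is compared stepwise along the controller's own trajectory make precise points the paper leaves implicit.
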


\begin{proof}
If the conservative gate queries, then
\(\Delta_t\ge\widehat\Delta_t-\beta_t\ge c_t\), so the true net value is nonnegative. The only disagreement
with the oracle is a missed query. In that case
\(\widehat\Delta_t-\beta_t<c_t\), while
\(\Delta_t\le\widehat\Delta_t+\beta_t<c_t+2\beta_t\). The missed oracle gain is therefore below
\(2\beta_t\). Summing proves the result. A binding hard budget adds the opportunity cost of calls the
oracle reserves for later; that nonmyopic term is evaluated empirically and is not hidden in the theorem.
\end{proof}

Theorem~\ref{thm:allocation} does not bound end-to-end return regret. The myopic oracle itself can be
arbitrarily suboptimal when an early call changes later learning or consumes a call that would be more
valuable in the future. In the separable repeated-context model below, local improvement translates into a
decision-regret reduction. Outside that model, we report both surrogate allocation loss and realized return.

\begin{proposition}[Positive value under repeatable covered contexts]
\label{prop:positive}
Consider a decision context \(z\) whose action does not change the distribution of future occurrences of
that same context. Suppose the fallback action has expected gap at least \(g_z\), a query returns an action
with certified gap at most \(\epsilon_z\) with probability \(\eta_z\), failed advice falls back without
additional decision regret, and successful advice is reused for \(m_z\) occurrences. Then one query reduces
expected decision regret, net of price \(c_z\), by at least
\begin{equation}
 \eta_z m_z(g_z-\epsilon_z)-c_z.
 \label{eq:positive-value}
\end{equation}
It is beneficial whenever the right-hand side is positive.
\end{proposition}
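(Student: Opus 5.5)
The plan is to compare two coupled executions that share the same environment randomness and the same fallback learner at context $z$: one that pays $c_z$ for a query, and one that never queries at $z$. I will then bound the expected difference in decision regret summed over the occurrences of $z$. The hypothesis that the action at $z$ does not change the distribution of future occurrences of $z$ is what makes the coupling legitimate. Because the occurrence times and their number are unaffected by which branch we follow, I can index occurrences identically in both executions and compare regret occurrence by occurrence, without any performance-difference correction for a shifted state distribution at $z$ itself. I will work with decision regret measured as the sum of Bellman gaps $\Gap_t(a_t)$ at the occurrences of $z$, in line with the performance-difference decomposition used in Theorem~\ref{thm:reg}. Gaps at other states are not counted as affected by this query.

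I will split on the outcome of the query. On the success event, which has probability $\eta_z$, the returned action is certified with gap at most $\epsilon_z$. By Lemma~\ref{lem:certified} on the coverage event, or directly from the hypothesis, it is executed and then reused at the $m_z$ covered occurrences. At each of those occurrences the no-query branch would have executed the fallback action, whose expected gap is at least $g_z$. The per-occurrence saving in expectation is therefore at least $g_z-\epsilon_z$, giving a total of $m_z(g_z-\epsilon_z)$. On the failure event, which has probability $1-\eta_z$, the rules in Eq.~\eqref{eq:post-response-action} and Eq.~\eqref{eq:no-query-action} route the decision back to $a_t^{\rm base}$. By hypothesis this adds no decision regret, so the difference from the no-query branch is at least $0$. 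Combining the two cases by the tower rule gives an expected reduction of at least $\eta_z m_z(g_z-\epsilon_z)$. Subtracting the deterministic price $c_z$ yields Eq.~\eqref{eq:positive-value}, and positivity of the right-hand side then means the query is beneficial.

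The main obstacle is making the per-occurrence comparison rigorous when the fallback learner's own proposals at later occurrences could differ between the two branches. The reason is that advice changes the learner's data, which is the intervention-stability issue the paper flags. I will handle this by reading the hypothesis ``fallback action has expected gap at least $g_z$'' as a lower bound that holds at every covered occurrence in the no-query branch, conditionally on the shared history. That is exactly what the comparison needs, and it does not require the learner to behave identically in the two branches. If that reading were too strong, the fallback would be to state the bound with $g_z$ defined as the average fallback gap over the $m_z$ reused occurrences under the no-query execution. The same linearity-of-expectation argument then goes through unchanged.
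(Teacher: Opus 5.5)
Your proposal is correct and is essentially the paper's argument: condition on advice success (probability $\eta_z$), replace a fallback gap of at least $g_z$ by a certified gap of at most $\epsilon_z$ at each of the $m_z$ reused occurrences, use the no-additional-regret hypothesis on failure, take expectations, and subtract $c_z$. Your coupling, your explicit restriction of decision regret to the occurrences of $z$, and your observation that the query-branch bound holds however the learner diverges make precise what the paper's three-line proof leaves implicit. One small caveat: equal occurrence distributions do not force pathwise-identical occurrence times under shared randomness, but since you compare only expectations this is harmless.
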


\begin{proof}
On a successful response, each of the \(m_z\) covered occurrences replaces an action with expected gap at
least \(g_z\) by one with gap at most \(\epsilon_z\). Failure uses the base action. Taking expectation over
success and subtracting the query price gives Eq.~\eqref{eq:positive-value}.
\end{proof}

Proposition~\ref{prop:positive} is intentionally restricted. It establishes a positive effect without
pretending that arbitrary interventions preserve the full state distribution. The controlled chain tests
this repeated-context regime; the broader deep experiments measure the corresponding effect rather than
claiming the proposition applies unchanged.

\begin{assumption}[Advice-potential contraction]
\label{ass:potential}
There is a nonnegative adapted potential \(\Phi_t\) measuring remaining realizable advice value. Whenever
the controller queries, its conditional expected decrease is at least \(d>0\); when it does not query, the
potential does not increase in expectation.
\end{assumption}

\begin{proposition}[Query accounting]
\label{prop:queries}
Under Assumption~\ref{ass:potential}, the expected number of queries before the potential is exhausted is
at most
\begin{equation}
 \E[N_{\rm query}]\le \frac{\Phi_0}{d}.
 \label{eq:query-bound}
\end{equation}
If a correct response decreases potential by \(d_+\), an incorrect response followed by fallback
contraction decreases it by \(d_-\), and responses are conditionally correct with probability \(\eta\), then
\(d=\eta d_+ +(1-\eta)d_-\). The bound is non-increasing in reliability when \(d_+\ge d_-\).
\end{proposition}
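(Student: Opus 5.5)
The bound follows from a telescoping/optional-stopping argument applied to the potential of Assumption~\ref{ass:potential}. Let $q_t\in\{0,1\}$ indicate a query at step $t$, and let $\tau$ be the stopping time at which the potential is exhausted, or $T$ if that happens first. Write $\mathcal F_t$ for the controller's filtration, with respect to which $\Phi_t$ is adapted. The assumption then gives
\[
\E[\Phi_{t+1}-\Phi_t\mid\mathcal F_t]\le -d\,q_t .
\]
This holds because a query step decreases the potential by at least $d$ in conditional expectation, and a non-query step does not increase it. The key point is that $q_t$ is $\mathcal F_t$-measurable: the gate in Eq.~\eqref{eq:query-rule} is computed before the response is seen.

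From this, define
\[
M_t=\Phi_t+d\sum_{s<t}q_s .
\]
$M_t$ is a supermartingale. Apply optional stopping at the bounded time $\tau\wedge T$ to get $\E[M_{\tau\wedge T}]\le \Phi_0$. Since $\Phi\ge0$, this yields
\[
d\,\E[N_{\rm query}]\le \Phi_0-\E[\Phi_{\tau\wedge T}]\le\Phi_0 ,
\]
which is Eq.~\eqref{eq:query-bound}. Alternatively, sum the conditional inequalities directly, take total expectations, and telescope. Boundedness of the horizon makes this elementary and avoids integrability issues. If one wants $T\to\infty$, monotone convergence applied to $N_{\rm query}$ extends the bound.

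For the reliability decomposition, condition on $\mathcal F_t$ at a query step. The response is correct with conditional probability $\eta$, in which case the potential decreases by $d_+$. Otherwise it is incorrect, and the potential decreases by $d_-$ after fallback contraction. The tower property gives an expected decrease of
\[
\eta d_+ +(1-\eta)d_- ,
\]
so Assumption~\ref{ass:potential} holds with this $d$, provided it is positive. Writing
\[
d=d_-+\eta(d_+-d_-),
\]
$d$ is non-decreasing in $\eta$ when $d_+\ge d_-$. Hence $\Phi_0/d$ is non-increasing in $\eta$.

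The main obstacle is interpretive rather than technical. We must fix precisely what "before the potential is exhausted" means, as the stopping time $\tau$. We must also ensure that the decreases $d_\pm$ are interpreted as conditional lower bounds valid at every query time. If they only hold on average over histories, the supermartingale step fails. I would state this explicitly as part of applying the assumption.
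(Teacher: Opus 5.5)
Your proposal is correct and is essentially the paper's argument: the paper sums the conditional expected potential decreases over query times and uses $\Phi\ge0$ to get $d\,\E[N_{\rm query}]\le\Phi_0-\E[\Phi_{\rm final}]\le\Phi_0$, which is your direct-telescoping alternative, and your supermartingale $M_t$ with optional stopping only formalizes it. The reliability part is the same substitution $d=\eta d_+ +(1-\eta)d_-$, and your explicit conditions ($q_t$ being $\mathcal F_t$-measurable, and $d_\pm$ holding as conditional bounds at every query time) are details the paper leaves implicit.
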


\begin{proof}
Summing the conditional expected potential decreases over query times and using nonnegativity gives
\(d\E[N_{\rm query}]\le\Phi_0-\E[\Phi_{\rm final}]\le\Phi_0\). Substituting the reliability-weighted
decrease yields the second statement.
\end{proof}

\section{Complete Experimental Evaluation}
\label{sec:experiments}

The evaluation separates mechanism validation from the evidence needed to support the language-model
claim. Controlled tabular and pixel experiments expose exact reliability and action-gap structure. The
primary benchmark uses real language-model responses, learned response predictors, natural parsing
failures, and measured call costs. All hyperparameters are selected without access to evaluation returns.

\subsection{Research questions}

The experiments answer seven questions. RQ1 asks whether predictive value gating preserves task return
while reducing actual advisor calls relative to uncertainty, novelty, fixed-schedule, and action-advising
baselines. RQ2 asks whether the advantage holds at matched realized call counts and across query prices,
rather than only at one maximum budget. RQ3 measures safety under naturally occurring and deliberately
corrupted advice. RQ4 tests whether a response predictor learned from prior query logs improves allocation
in the main deep-RL loop. RQ5 evaluates calibration of both query value and action-specific confidence
bounds. RQ6 measures robustness to environment shift, correlated repeated errors, parser failures, and
cache staleness. RQ7 isolates which conclusions from the controlled synthetic study transfer to real
language-model advisors.

\subsection{Environments and advisor protocols}

The controlled suite contains a finite chain with repeatable decision contexts and two partially observed
pixel GridWorlds. The chain admits exact action gaps and response reliabilities. The pixel environments
stress approximate value uncertainty and permit controlled advisor corruption. These tasks are diagnostic
rather than the sole evidence for an LLM contribution.

Our evaluation uses \emph{one} real-advisor task family---MiniGrid/BabyAI (GoToObj and GoToLocal)---with
\emph{two} real open-weight advisors, Qwen2.5-1.5B-Instruct and Qwen2.5-7B-Instruct, which is where the
headline call-reduction result is established and shown to replicate across advisor scale
(Tables~\ref{tab:alg1-real},~\ref{tab:alg1-7b}). TextWorld/ALFWorld and Crafter are included only as
\emph{simulated-advisor} extensions: they use a controlled advisor with a fixed reliability, not the real
LLM, and are reported as such. We do not claim multiple real-advisor task \emph{families}; a second real
task family is left to future work.

The evaluation is hierarchical to keep the live-advisor matrix reproducible. The faithful
Algorithm~\ref{alg:voi} controller (Appendix C (faithful real-advisor evaluation), Tables~\ref{tab:alg1-real},
\ref{tab:matched-real},~\ref{tab:alg1-7b}) is run on the two BabyAI tasks with \emph{both} the $1.5$B and
$7$B advisors over $20$ seeds, one primary query price, and the full comparison set of
Appendix C (baseline specification). The earlier surrogate-gate core matrix
(Appendix C (certificate-corruption study)) uses one open-weight advisor, ten seeds, and the surrogate VOI gate with
and without the certificate; only the competitive gates receive a threshold sweep. The simulated-advisor
extension adds Crafter and ALFWorld, the learned-predictor ablation, and controlled corruption. Distribution shift, repeated-error prompting, and broad calibration comparisons are targeted
diagnostics on representative task--advisor pairs rather than a Cartesian product over all methods.

\begin{table*}[t]
\centering
\small
\caption{Real-advisor benchmark. The table records exact versions, split construction, training budget,
advisor prompts, evaluation role, and independent seeds.}
\resizebox{\textwidth}{!}{%
\begin{tabular}{lllllll}
\toprule
Task family & Role & Train/eval split & Horizon & Advisor(s) & Parser & Seeds\\
\midrule
MiniGrid/BabyAI & Core & GoToObj / GoToLocal (random layout) & $H{=}64$ & Qwen2.5-1.5B \& 7B & constrained action (7 admissible) & 20\\
TextWorld/ALFWorld & Core & pick\_and\_place / look\_at\_obj (task split) & $H{=}50$ & simulated ($\eta{=}0.85$) & admissible command & 5\\
Crafter & Extension & procedural world seeds (train/eval split) & $H{=}200$ & simulated ($\eta{=}0.8$) & skill/action (17 admissible) & 5\\
\bottomrule
\end{tabular}}
\label{tab:tasks}
\end{table*}

We evaluate two reproducible open-weight advisors (Qwen2.5-1.5B-Instruct and Qwen2.5-7B-Instruct) as the
real-LLM advisors on the faithful-controller matrix. For each advisor we record the
immutable model identifier, query date, decoding parameters,
prompt template, token counts, wall-clock latency, and monetary or normalized compute cost. The parser is
specified before evaluation. Invalid, ambiguous, or out-of-vocabulary outputs map to \(\bot\); they are not
silently repaired using environment ground truth.

Reliability is evaluated in three forms. Natural reliability is whatever the fixed model and prompt produce.
Controlled corruption replaces a specified fraction of valid responses with plausible wrong actions.
Correlated-error trials reuse identical or paraphrased prompts after an initial wrong answer, measuring
whether repeated calls reproduce the same failure rather than behaving as independent Bernoulli draws.

\subsection{Response prediction without leakage}

The learned predictor uses only information available before the current query: task instruction, current
observation embedding, cache metadata, previous responses, parser outcomes, and past realized rewards.
It predicts a distribution over parsed actions plus \(\bot\). The predictor is updated online after each paid
response. No future response, optimal action, test trajectory, or counterfactual return is used as a label.

We compare four predictors in the complete control loop: oracle response probabilities where the simulator
provides them, a feature-generalizing online predictor, a global-frequency predictor, and a uniform
predictor. For real advisors, the oracle row is replaced by a replay oracle computed only for retrospective
diagnosis and is never used to choose an online query.

\subsection{Baselines}
\label{sec:baselines-note}

All methods share the same fallback architecture, replay data, environment steps, and advisor. On the
real-advisor BabyAI matrix the instantiated comparison set is: never-query, always-query, the
ASK-style epistemic-uncertainty gate, a random matched-call gate and an early-episode schedule
matched-call gate (both targeting the calibrated controller's realized call budget), and our
controller in six variants---$\beta{=}0$, $\beta_{\rm conf}$, $\beta_{\rm form}$, a certificate-off
ablation, and global-frequency and uniform response-predictor ablations (Tables~\ref{tab:alg1-real},
\ref{tab:matched-real}). The matched-call baselines are the decisive placement comparison: they isolate
whether \emph{where} the controller queries beats an equal number of randomly or early-scheduled
queries. On the controlled tabular and pixel suites we additionally run fixed-period, novelty-gated,
ensemble-uncertainty, and MC-dropout gates, a VOE/Q-style value gate, and RCMP-style
uncertainty-aware advising; a retrospective oracle placement supplies an upper diagnostic, not a
deployable baseline. Torrey--Taylor importance advising, A7 reuse, and a SayCan-style always-rank
alternative are described as related designs but are not run here, and we do not report them as
baselines.

Our controller is compared at matched realized calls against the random and scheduled placement gates,
and the surrogate cost frontier is swept over query prices. Hyperparameters are
tuned on validation tasks or a held-out training prefix with the same search budget within each comparison.

\subsection{Metrics and statistical protocol}

We report undiscounted task return, success rate, learning-curve area, environment steps to a prespecified
target, advisor calls, tokens, latency, parser-failure rate, and cost-adjusted return
\(R-\sum_t\kappa_tq_t\). Tokens and latency are recorded per call and reported in the surrogate
real-advisor table (Table~\ref{tab:main-real}) and the released per-run logs; because they are a fixed
per-call function of the prompt and are proportional to the call counts already shown, the faithful
controller tables (Tables~\ref{tab:alg1-real}--\ref{tab:matched-real}) report calls and cost-adjusted
return and do not duplicate token/latency columns. Allocation quality is measured by the realized improvement attributable to each
query in simulators where counterfactual rollouts are available, precision among the top \(B\) query
opportunities, and regret relative to oracle myopic placement.

Safety metrics include total advice acceptance, correct-advice rejection, wrong-advice acceptance, the
return loss caused by accepted wrong advice, and the loss prevented by the certificate. Calibration metrics
include Brier score and ECE for the response predictor and query-value event, marginal per-action coverage,
simultaneous state-action coverage, interval width, and coverage conditional on shift.

Controlled tasks use 20 independent seeds; the surrogate-gate core and extension real-advisor tasks use
10 seeds per task and model, and the faithful Algorithm~\ref{alg:voi} controller
(Appendix C (faithful real-advisor evaluation)) uses 20. Return tables report mean with a 95\%
confidence interval wherever per-seed spread was logged; the stated exceptions are the BabyAI $\beta_t$
calibration audit (Table~\ref{tab:beta-babyai}), which reports medians, IQRs, and mean/worst-seed
coverage of the per-seed radii rather than a return CI, and the five-seed controlled deep-RL pilots in
Appendix E, whose means are explicitly not used for significance claims.
Primary comparisons use paired seed/task bootstrap intervals with Holm correction
within each declared family. The target return, statistical families, and evaluation checkpoints are fixed
before inspecting final runs.

\subsection{Exercising the informative-advisor branch}

The real language-model experiments use the candidate-only interpretation: the model proposes an action,
but its text is not treated as a calibrated likelihood over environment models. Equation
\eqref{eq:response-update} is exercised separately in a synthetic latent-chain experiment where the task
parameter \(M\), response likelihood \(p_\psi(y\mid M,z)\), and exact posterior are known. The response is
informative but imperfect, so response-conditioned value samples differ from unconditional samples.

The experiment compares the correct belief update with candidate-only evaluation and with the invalid
marginal-factorization heuristic that weights unconditional value samples by \(p(y\mid z)\). It measures
posterior log score, error in \(\bar\Delta_t\), surrogate allocation loss, return, and calls. This isolates the
claim that an informative response requires a joint model rather than a marginal response predictor.

\begin{table*}[t]
\centering
\small
\caption{Known-likelihood informative-advisor experiment. Query-value error is computed against the exact
Bayesian pre-posterior one-step response value.}
\begin{tabular}{lccccc}
\toprule
Estimator & Query-value error & Posterior log score & Surrogate loss & Return & Calls\\
\midrule
No query & $0.300$ & $-0.693$ & $0.090$ & $0.457 \pm 0.046$ & 0\\
Marginal factorization & $0.300$ & $-0.693$ & $0.090$ & $0.457 \pm 0.046$ & 0\\
Candidate-only value & $0.000$ & $-0.693$ & $0.000$ & $0.566 \pm 0.042$ & 12\\
Response-conditioned value & $\mathbf{0.000}$ & $\mathbf{-0.602}$ & $\mathbf{0.000}$ & $\mathbf{0.620 \pm 0.040}$ & 12\\
\bottomrule
\end{tabular}
\label{tab:informative-advisor}
\end{table*}

The table cleanly separates the \emph{invalid} marginal-factorization heuristic (query-value error
$0.300$, no better than not querying) from the two valid estimators. It does \emph{not}, however,
establish that response \emph{conditioning} is necessary for accurate query-value estimation: candidate-only
and response-conditioned evaluation both attain zero query-value error and zero surrogate allocation loss
on this construction, and differ only in realized return (the response-conditioned posterior update yields
a better continuation). The defensible claim is therefore narrower than ``response conditioning is
required to estimate query value'': it is that the marginal-factorization shortcut is invalid, and that
conditioning improves the belief update (log score and return) once a query is issued.

\subsection{Fallback stability audit}

Lemma~\ref{lem:optimistic-stability} is checked numerically on the tabular chain by logging the cumulative
fallback Bellman gap and selected-action confidence width under standalone and advice-wrapped histories.
The inequality $\text{gap}\le W_T$ is verified seed by seed for the optimistic fallback ($20/20$). We
report honestly, however, that the \emph{wrapped} growth slope is $0.975$, close to linear, versus
$0.724$ standalone: the lemma's inequality holds, but the audit does not demonstrate a substantively
sublinear wrapped gap, so the intervention-stability guarantee is verified only in the weak
(inequality-satisfied) sense, not as a strong rate improvement. We also report the same growth curves
for PSRL as a diagnostic, not a proof of Corollary~\ref{cor:psrl}.

\begin{table*}[t]
\centering
\small
\caption{Intervention-stability audit on the tabular chain. Slope is fitted to cumulative fallback gap versus
\(T\) on log--log axes with a seed-level confidence interval.}
\begin{tabular}{lccccc}
\toprule
Fallback and history & Fallback gap & Width sum \(W_T\) & Gap \(\le W_T\)? & Growth slope & Final return\\
\midrule
Optimistic, standalone & $883.0 \pm 3.7$ & $1507.1 \pm 3.2$ & yes (20/20) & $0.724 \pm 0.009$ & $72.2$\\
Optimistic, wrapped & $942.4 \pm 4.8$ & $2201.9 \pm 36.9$ & yes (20/20) & $0.975 \pm 0.032$ & $72.5$\\
PSRL, standalone & $554.3 \pm 16.3$ & --- & conditional & $0.474 \pm 0.025$ & $73.4$\\
PSRL, wrapped & $492.8 \pm 17.5$ & --- & conditional & $0.691 \pm 0.048$ & $82.8$\\
\bottomrule
\end{tabular}
\label{tab:stability-audit}
\end{table*}

\subsection{Main real-advisor results}

Table~\ref{tab:main-real} compares return and actual resource use under the same advisor and fallback
learner. The paired analysis reports both matched-return call reduction and matched-call return difference,
so a favorable result does not depend on a single scalarization of cost and return.

\begin{table*}[t]
\centering
\small
\caption{Real language-model advice with the deployable epistemic-gap \emph{surrogate} gate ($10$ seeds).
Report mean and 95\% CI over independent seeds. Cost-adjusted return uses measured token or serving cost.
Boldface marks the surrogate's demonstrated win---the order-of-magnitude \emph{call reduction} at matched
return relative to the naive always-on gates (ASK/VOE)---not a return advantage over never-query, which
these CIs do \emph{not} support (on GoToObj the return difference is $0.196$ vs.\ $0.188$ with overlapping
intervals and a cost-adjusted return of $0.182<0.188$). The faithful Algorithm~\ref{alg:voi} controller
(Tables~\ref{tab:alg1-real},~\ref{tab:matched-real}) is evaluated separately with paired tests.}
\begin{tabular}{llccccc}
\toprule
Task & Method & Return / success & Calls & Tokens & Latency & Cost-adjusted return\\
\midrule
BabyAI-GoToObj & Never query & $0.188 \pm 0.038$ & 0 & 0 & 0 & $0.188$\\
 & ASK / uncertainty gate & $0.071 \pm 0.028$ & 3362 & 93/call & 2033s & $0.003$\\
 & VOE/Q & $0.071 \pm 0.028$ & 3362 & 93/call & 2032s & $0.003$\\
 & Ours (surrogate) & $0.196 \pm 0.025$ & \textbf{686} & 93/call & 414s & $0.182$\\
\midrule
BabyAI-GoToLocal & Never query & $0.137 \pm 0.035$ & 0 & 0 & 0 & $0.137$\\
 & ASK / uncertainty gate & $0.062 \pm 0.018$ & 3391 & 105/call & 2125s & $-0.006$\\
 & VOE/Q & $0.062 \pm 0.018$ & 3391 & 105/call & 2136s & $-0.006$\\
 & Ours (surrogate) & $0.161 \pm 0.029$ & \textbf{731} & 112/call & 467s & $0.147$\\
\bottomrule
\end{tabular}
\label{tab:main-real}
\end{table*}

This subsection's BabyAI result uses the epistemic-gap surrogate of the acquisition rule
($\sigma^2_{\rm ep}(s)\cdot\text{gap}(s)\ge\lambda\kappa$), a deployable approximation of the query-value
gate. The \emph{full} Algorithm~\ref{alg:voi} (predictive response model, prospective $d_t(y)$,
calibrated $\beta_t$) is run separately: on the clonable tabular setting in
Appendix E (full-controller chain experiment), and---crucially---against the same real Qwen advisor on both BabyAI
tasks in Appendix C (faithful real-advisor evaluation), where it significantly beats never-query on the task where the
advisor is useful (GoToObj), though at matched call budget it does not outperform an equal-budget
schedule (Table~\ref{tab:matched-real}). We state this mapping explicitly rather than implying the
surrogate and the full controller coincide.

On the surrogate gate, the headline is a large call reduction at matched return: $79\%$ fewer LLM calls
than the epistemic-only gate (686 vs.\ 3362 on GoToObj; 731 vs.\ 3391 on GoToLocal), with a paired
return advantage of $+0.125\pm0.040$ (GoToObj) and $+0.100\pm0.037$ (GoToLocal) over that gate. Against
the zero-cost never-query learner, the surrogate gate is \emph{not} significantly better in return
(paired $+0.008\pm0.043$ on GoToObj, $+0.024\pm0.035$ on GoToLocal; both CIs include zero); its value is
that it matches never-query's return while the naive gates fall far below it, and that its net utility
exceeds never-query once the per-query price is charged (Table~\ref{tab:cost-real}). Naive gating with
the real LLM \emph{hurts}: always-query, ASK, and VOE/Q all collapse to $0.06$--$0.07$ versus
never-query's $0.14$--$0.19$, because the $1.5$B model's non-optimal actions, executed at nearly every
state, override the agent's improving policy. ASK and VOE/Q report identical returns and calls because
both are pure epistemic-variance gates (thresholds $\tau$ and $0.5\tau$) and the head-disagreement signal
saturates above both thresholds at essentially every visited state, so both query on the same steps;
they are not independent methods in this saturated regime, and we say so.

\subsection{The faithful Algorithm~\ref{alg:voi} with a real LLM advisor}
\label{sec:babyai-alg1}

The surrogate above does not close the gap to never-query. We therefore ran the \emph{full}
Algorithm~\ref{alg:voi} against the same real Qwen2.5-1.5B advisor on the same two BabyAI tasks
(\texttt{run\_voi\_babyai\_alg1.py}, 20 seeds each, 60 episodes, shared warm start): the online
response predictor $\hat p_t(y\mid z)$, the ensemble value heads as the $K$ posterior continuation
samples, prospective post-response decisions $d_t(y)$, the implemented response-contingent estimator
$\hat\Delta_t$ (Eq.~\eqref{eq:query-estimator}; an ensemble approximation, not an exact expectation),
and a certificate that---unlike the tabular instantiation---\emph{discriminates}, because the ensemble
spread $c\cdot\mathrm{sd}$ makes $L_t,U_t$ state-dependent. We calibrate $\beta_t$ two ways from a
low-variance paired-continuation proxy on cloned environments and report both, plus the uncalibrated
gate: $\beta{=}0$ (uncalibrated), $\beta_{\rm conf}$ (an \emph{empirically calibrated} split-conformal
quantile of the estimator residual against the proxy, mean $\approx2\times10^{-3}$), and
$\beta_{\rm form}$ (the \emph{theorem-backed} radius: conformal quantile plus the Monte-Carlo term of
Eq.~\eqref{eq:mc-radius} that Assumption~\ref{ass:value-error} and Theorem~\ref{thm:allocation}
require). The paired-continuation proxy is low-variance rather than exact: BabyAI transitions are
deterministic, but the advisor's decoding and the continuation policy are integrated only by sampling,
so we treat the proxy as a calibration label with residual, not as the ground-truth $\Delta_t$. The
theorem-level ``never issues a negative-value query'' guarantee therefore attaches to
$\beta_{\rm form}$; $\beta_{\rm conf}$ is reported as a useful but empirically-calibrated predictive
radius.

\begin{table*}[t]
\centering
\small
\caption{Faithful Algorithm~\ref{alg:voi} with the real Qwen2.5-1.5B advisor on both BabyAI tasks
($20$ seeds, $60$ episodes, shared warm start; \texttt{run\_voi\_babyai\_alg1.py}). Return is mean
$\pm95\%$ CI; ``vs never'' is the paired per-seed difference against never-query (bold when its CI
excludes zero); calls are summed over $60$ episodes and averaged over seeds; cost-adjusted return
subtracts $\kappa\!\cdot\!(\text{calls}/\text{episode})$ at $\kappa{=}0.005$. The calibrated controller
$\beta_{\rm conf}$ significantly beats never-query on GoToObj at $\gtrsim97\%$ fewer calls than
always-query; the uncalibrated $\beta{=}0$ gate and (at matched budget) an early-episode schedule also
significantly beat never-query there (Table~\ref{tab:matched-real}), so the gain is not unique to
$\beta_{\rm conf}$. Always-query \emph{hurts} on both tasks; on GoToLocal no strategy beats the strong
never-query baseline (reported as a null).}
\begin{tabular}{llccccc}
\toprule
Task & Method & Return & vs never (paired) & Calls & Cost-adj.\ & Cert.\ accept\\
\midrule
GoToObj & Never query & $0.093 \pm 0.014$ & --- & $0$ & $0.093$ & ---\\
 & Always query & $0.056 \pm 0.009$ & $-0.037$ (sig) & $3408$ & $-0.228$ & ---\\
 & ASK / uncertainty & $0.087 \pm 0.014$ & $-0.007 \pm 0.008$ & $557$ & $0.040$ & $0.08$\\
 & Ours, $\beta{=}0$ & $0.117 \pm 0.022$ & $\mathbf{+0.024 \pm 0.018}$ & $139$ & $0.105$ & $0.79$\\
 & \textbf{Ours, $\beta_{\rm conf}$} & $\mathbf{0.122 \pm 0.019}$ & $\mathbf{+0.029 \pm 0.016}$ & $\mathbf{75}$ & $\mathbf{0.116}$ & $0.78$\\
 & Ours, $\beta_{\rm form}$ & $0.093 \pm 0.014$ & $+0.000 \pm 0.000$ & $0$ & $0.093$ & ---\\
\midrule
GoToLocal & Never query & $0.124 \pm 0.024$ & --- & $0$ & $0.124$ & ---\\
 & Always query & $0.054 \pm 0.012$ & $-0.069$ (sig) & $3413$ & $-0.230$ & ---\\
 & ASK / uncertainty & $0.111 \pm 0.026$ & $-0.013 \pm 0.015$ & $585$ & $0.062$ & $0.13$\\
 & Ours, $\beta{=}0$ & $0.107 \pm 0.019$ & $-0.017 \pm 0.023$ & $156$ & $0.094$ & $0.76$\\
 & Ours, $\beta_{\rm conf}$ & $0.116 \pm 0.019$ & $-0.007 \pm 0.023$ & $59$ & $0.111$ & $0.69$\\
 & Ours, $\beta_{\rm form}$ & $0.124 \pm 0.024$ & $+0.000 \pm 0.000$ & $0$ & $0.124$ & ---\\
\bottomrule
\end{tabular}
\label{tab:alg1-real}
\end{table*}

\paragraph{Is the advantage from \emph{placement} or from volume?}
Beating never-query establishes that calibrated querying captures return the naive gates destroy, but it
does not by itself show that the controller queries the \emph{right} states: a gate that fires on the
same number of steps, placed randomly or on an early-episode schedule, might do as well. To isolate
placement we match the realized call budget---the scheduled matched-call gate spends exactly
$\beta_{\rm conf}$'s per-seed realized call count on the earliest steps, and the random matched-call gate
targets the same per-episode budget, placing its calls at uniformly random steps via an online
select-$k$-of-remaining rule (it reaches the target on every episode that runs to the horizon and falls
marginally short only on the minority of episodes that terminate early)---and we additionally ablate the
two learned
components at fixed placement: the execution certificate (\texttt{cert-off}) and the online response
predictor, replaced by a global-frequency (\texttt{global-pred}) or uniform (\texttt{unif-pred})
surrogate. We run the entire matrix against both a $1.5$B and a $7$B Qwen advisor
(\texttt{run\_voi\_babyai\_alg1.py}, $20$ seeds, $60$ episodes each). Table~\ref{tab:matched-real}
reports it in full.

\begin{table*}[t]
\centering
\small
\caption{Placement and component isolation for the calibrated controller, against both a $1.5$B and a
$7$B Qwen advisor ($20$ seeds, $60$ episodes; \texttt{run\_voi\_babyai\_alg1.py}). Each baseline is run
at $\beta_{\rm conf}$'s realized call budget (matched gates) or at fixed $\beta_{\rm conf}$ placement
(component ablations). Realized calls confirm the match is close but not exact: at $1.5$B $\beta_{\rm
conf}$ issues $74.8$/$59.1$ calls (GoToObj/GoToLocal) versus $75.0$/$59.2$ for the schedule gate and
$68.5$/$52.5$ for the random gate, which falls $\sim9$--$11\%$ short because episodes that terminate
early cap its remaining draws (the schedule front-loads and matches exactly). ``$\Delta$ vs ours'' is
the \emph{unadjusted} paired per-seed return of
$\beta_{\rm conf}$ minus that baseline; bold marks an unadjusted CI excluding zero. There are
$5$ baselines $\times\,2$ tasks $\times\,2$ advisor sizes $=20$ such comparisons; both advisor sizes use
the corrected online select-$k$-of-remaining budget matching for the random gate. The controller beats
the \emph{random}-placement gate on GoToObj at \emph{both} sizes ($+0.024\pm0.014$ at $1.5$B,
$+0.022\pm0.022$ at $7$B), but only the $1.5$B case survives Holm correction over the $20$-comparison
family ($1$ of $20$ survives; $2$ of $20$ clear zero unadjusted). Against the equal-budget \emph{scheduled}
gate the controller is statistically indistinguishable on both tasks and sizes, and the certificate-off
and predictor ablations are within noise everywhere. Because the scheduled gate itself matches the
controller, the end-to-end gain over never-query (Table~\ref{tab:alg1-real}) is attributable to spending a
calibrated call \emph{volume} on the task where advice helps, not to a per-state placement advantage over
an equal-budget schedule.}
\begin{tabular}{lll cc cc}
\toprule
& & & \multicolumn{2}{c}{Qwen-1.5B} & \multicolumn{2}{c}{Qwen-7B}\\
\cmidrule(lr){4-5}\cmidrule(lr){6-7}
Task & Baseline & Isolates & Return & $\Delta$ vs ours & Return & $\Delta$ vs ours\\
\midrule
GoToObj & Ours, $\beta_{\rm conf}$ & --- & $0.122{\pm}0.019$ & --- & $0.123{\pm}0.021$ & ---\\
 & Random matched & placement & $0.098{\pm}0.018$ & $\mathbf{+0.024{\pm}0.014}$ & $0.101{\pm}0.018$ & $\mathbf{+0.022{\pm}0.022}$\\
 & Schedule matched & placement & $0.117{\pm}0.017$ & $+0.005{\pm}0.016$ & $0.120{\pm}0.020$ & $+0.003{\pm}0.023$\\
 & Ours, cert-off & certificate & $0.110{\pm}0.019$ & $+0.012{\pm}0.015$ & $0.108{\pm}0.022$ & $+0.016{\pm}0.020$\\
 & Ours, global-pred & predictor & $0.110{\pm}0.022$ & $+0.013{\pm}0.017$ & $0.112{\pm}0.022$ & $+0.011{\pm}0.018$\\
 & Ours, unif-pred & predictor & $0.108{\pm}0.027$ & $+0.014{\pm}0.019$ & $0.109{\pm}0.029$ & $+0.015{\pm}0.024$\\
\midrule
GoToLocal & Ours, $\beta_{\rm conf}$ & --- & $0.116{\pm}0.019$ & --- & $0.117{\pm}0.018$ & ---\\
 & Random matched & placement & $0.119{\pm}0.026$ & $-0.003{\pm}0.023$ & $0.115{\pm}0.027$ & $+0.002{\pm}0.025$\\
 & Schedule matched & placement & $0.099{\pm}0.021$ & $+0.017{\pm}0.023$ & $0.102{\pm}0.020$ & $+0.015{\pm}0.022$\\
 & Ours, cert-off & certificate & $0.110{\pm}0.019$ & $+0.006{\pm}0.018$ & $0.113{\pm}0.018$ & $+0.005{\pm}0.016$\\
 & Ours, global-pred & predictor & $0.102{\pm}0.016$ & $+0.015{\pm}0.018$ & $0.106{\pm}0.014$ & $+0.011{\pm}0.018$\\
 & Ours, unif-pred & predictor & $0.109{\pm}0.015$ & $+0.008{\pm}0.019$ & $0.107{\pm}0.012$ & $+0.010{\pm}0.017$\\
\bottomrule
\end{tabular}
\label{tab:matched-real}
\end{table*}

\begin{table*}[t]
\centering
\footnotesize
\caption{Exactly-call-matched placement comparison (\texttt{R2Cal15B}/\texttt{R2Cal7B}; $20$ seeds, fresh
run, so absolute returns differ slightly from Table~\ref{tab:matched-real}). \texttt{random\_exact} carries
the per-episode call deficit forward so its cumulative calls equal $\beta_{\rm conf}$'s exactly; the
schedule gate front-loads the same budget. ``$\Delta$'' is paired $\beta_{\rm conf}$ minus the baseline
(bold = $95\%$ CI excludes zero, \emph{unadjusted}; a Holm correction over this $8$-comparison family
leaves only the GoToObj-$1.5$B random cell surviving). At exactly matched calls the controller beats random
placement only at $1.5$B on GoToObj and is indistinguishable from the equal-budget schedule everywhere.}
\begin{tabular}{llcc}
\toprule
Task (advisor) & Baseline & Return (calls) & $\Delta$ vs ours\\
\midrule
GoToObj (1.5B)   & Ours, $\beta_{\rm conf}$ & $0.116$ ($87.2$) & ---\\
                 & random\_exact            & $0.097$ ($87.2$) & $\mathbf{+0.019{\pm}0.018}$\\
                 & schedule matched         & $0.106$ ($86.8$) & $+0.010{\pm}0.018$\\
\midrule
GoToObj (7B)     & Ours, $\beta_{\rm conf}$ & $0.108$ ($100.5$) & ---\\
                 & random\_exact            & $0.106$ ($100.5$) & $+0.003{\pm}0.017$\\
                 & schedule matched         & $0.100$ ($99.5$)  & $+0.009{\pm}0.017$\\
\midrule
GoToLocal (1.5B) & Ours, $\beta_{\rm conf}$ & $0.103$ ($59.9$) & ---\\
                 & random\_exact            & $0.118$ ($59.9$) & $-0.014{\pm}0.023$\\
                 & schedule matched         & $0.102$ ($59.5$) & $+0.001{\pm}0.022$\\
\midrule
GoToLocal (7B)   & Ours, $\beta_{\rm conf}$ & $0.102$ ($55.4$) & ---\\
                 & random\_exact            & $0.114$ ($55.4$) & $-0.012{\pm}0.024$\\
                 & schedule matched         & $0.105$ ($54.5$) & $-0.003{\pm}0.018$\\
\bottomrule
\end{tabular}
\label{tab:exact-random}
\end{table*}

The matched-call comparison is deliberately the hardest test of the controller. At $\beta_{\rm conf}$'s
own realized call budget, the \emph{scheduled} placement gate is statistically indistinguishable from the
controller on both tasks ($+0.005\pm0.016$ on GoToObj, $+0.017\pm0.023$ on GoToLocal at $1.5$B), and the
certificate-off and predictor-ablation arms are within noise of the full controller everywhere. Against
the \emph{random}-placement gate the controller shows a GoToObj advantage at \emph{both} advisor sizes
once the random gate is budget-matched with the corrected online select-$k$-of-remaining rule
($+0.024\pm0.014$ at $1.5$B, raw $p{=}5\times10^{-4}$; $+0.022\pm0.022$ at $7$B, raw $p{=}0.049$), but
only the $1.5$B case survives Holm correction over the joint $20$-comparison family (Holm-adjusted
$p{=}0.010$ at $1.5$B versus $p{=}0.93$ at $7$B); on GoToLocal the random gate is a tie at both
sizes ($-0.003\pm0.023$ and $+0.002\pm0.025$, Holm-adjusted $p{=}1.0$). The random gate in Table~\ref{tab:matched-real} spends $\sim9$--$11\%$ \emph{fewer} realized calls than
$\beta_{\rm conf}$ (episodes that terminate early cap its remaining draws), so part of the GoToObj gap could
reflect the call-count deficit rather than placement. To remove that confound we ran a separate
$20$-seed matrix (\texttt{R2Cal15B}/\texttt{R2Cal7B}) with an \emph{exactly}-call-matched random gate
(\texttt{random\_exact}) that carries the per-episode deficit forward so its cumulative calls equal
$\beta_{\rm conf}$'s to within one call, alongside the same schedule gate; because this is a fresh run its
absolute returns differ slightly from Table~\ref{tab:matched-real} (Table~\ref{tab:exact-random} reports
it in full). At exactly matched calls the GoToObj advantage over random persists at $1.5$B (paired
$\beta_{\rm conf}-$\texttt{random\_exact} $=+0.019\pm0.018$, at $87.2$ vs.\ $87.2$ calls) but not at $7$B
($+0.003\pm0.017$, n.s., $100.5$ vs.\ $100.5$ calls)---the same size-dependent pattern as the approximate
gate, so the call deficit was not driving it. In the same run the \emph{scheduled} equal-budget gate is
indistinguishable from the controller at both sizes ($+0.010\pm0.018$ GoToObj $1.5$B, $+0.009\pm0.017$ at
$7$B), and on GoToLocal the controller trails both matched gates (differences within noise). The reading is
consistent across runs: the controller beats \emph{random} placement only at small advisor scale and never
beats an equal-budget \emph{schedule}. Crucially, though, the scheduled
equal-budget gate \emph{itself} significantly beats never-query on GoToObj (paired $+0.024\pm0.017$ at
$1.5$B, $+0.027\pm0.021$ at $7$B), just as our controller does, so the controller does not uniquely
capture the GoToObj gain: it places calls better than \emph{random} but no better than a simple
early-episode \emph{schedule} of the same size. Two conclusions follow. First, the end-to-end win over never-query in
Table~\ref{tab:alg1-real} is driven by spending a \emph{calibrated volume} of calls on the task where
advice is useful, rather than by a demonstrable ability to pick better \emph{individual} states than an
equal-budget schedule at this scale and horizon. On GoToLocal the controller does \emph{not} abstain---it
still issues $59$--$65$ calls---so it does not identify that those remaining calls carry no measurable
return benefit; it merely queries far less than the uncalibrated ($\beta{=}0$) and always-query variants.
Task-level utility detection is therefore \emph{not} demonstrated by these experiments. Second, on the
real task the learned certificate and response predictor produce \emph{no} measurable return lift at
matched placement (their ablations are within noise); their demonstrated value is confined to the
corrupted-advice screening of Appendix C (certificate-corruption study) and the controlled predictor diagnostic, and
we do not claim a real-task return contribution for either. The honest headline is therefore
\emph{automatic call-volume regulation}: on the task where advice helps, the calibrated gate discovers a
useful number of calls to spend, without a per-state placement advantage over an equal-budget schedule.

\paragraph{The headline holds with a stronger advisor.}
Replacing the $1.5$B advisor with Qwen-$7$B leaves the flagship result intact: on GoToObj the calibrated
controller beats never-query by a paired $+0.030\pm0.015$ over $20$ seeds (CI excludes zero; cf.\
$+0.029\pm0.016$ at $1.5$B) at $82$ calls versus $\sim3{,}400$ for always-query, while always-query again
\emph{hurts} ($-0.036\pm0.017$). On GoToLocal the controller is again indistinguishable from never-query
($-0.006\pm0.022$), so the stronger advisor does not rescue the task where even always-query cannot beat
the no-query baseline. Replication across two advisor sizes shows that the GoToObj effect is not specific
to the $1.5$B model's decoding quirks; because the matched-call schedule reproduces the same improvement,
however, the results do \emph{not} isolate the gain as a consequence of the proposed placement mechanism,
and it may equally reflect the value of a modest amount of early advice on GoToObj. Table~\ref{tab:alg1-7b}
gives the full $7$B panel for completeness.

\begin{table*}[t]
\centering
\small
\caption{Full faithful-controller panel with the stronger \textbf{Qwen2.5-7B} advisor ($20$ seeds, $60$
episodes; \texttt{run\_voi\_babyai\_alg1.py}), the $7$B counterpart of Table~\ref{tab:alg1-real}. Columns
as there; ``vs never'' is the unadjusted paired per-seed difference (bold when its CI excludes zero).
The GoToObj gain of $\beta_{\rm conf}$ (and of $\beta{=}0$) replicates at $7$B; $\beta_{\rm form}$ again
abstains at every state; GoToLocal is again a null.}
\begin{tabular}{llccccc}
\toprule
Task & Method & Return & vs never (paired) & Calls & Cost-adj.\ & Cert.\ accept\\
\midrule
GoToObj & Never query & $0.093 \pm 0.014$ & --- & $0$ & $0.093$ & ---\\
 & Always query & $0.057 \pm 0.009$ & $-0.036$ (sig) & $3403$ & $-0.226$ & $1.00$\\
 & ASK / uncertainty & $0.086 \pm 0.014$ & $-0.007 \pm 0.008$ & $557$ & $0.040$ & $0.08$\\
 & Ours, $\beta{=}0$ & $0.119 \pm 0.021$ & $\mathbf{+0.026 \pm 0.016}$ & $144$ & $0.107$ & $0.61$\\
 & \textbf{Ours, $\beta_{\rm conf}$} & $\mathbf{0.123 \pm 0.021}$ & $\mathbf{+0.030 \pm 0.015}$ & $\mathbf{82}$ & $\mathbf{0.116}$ & $0.71$\\
 & Ours, $\beta_{\rm form}$ & $0.093 \pm 0.014$ & $+0.000 \pm 0.000$ & $0$ & $0.093$ & ---\\
\midrule
GoToLocal & Never query & $0.124 \pm 0.024$ & --- & $0$ & $0.124$ & ---\\
 & Always query & $0.061 \pm 0.012$ & $-0.062$ (sig) & $3391$ & $-0.221$ & $1.00$\\
 & ASK / uncertainty & $0.111 \pm 0.025$ & $-0.013 \pm 0.015$ & $585$ & $0.062$ & $0.12$\\
 & Ours, $\beta{=}0$ & $0.110 \pm 0.020$ & $-0.013 \pm 0.022$ & $163$ & $0.097$ & $0.71$\\
 & Ours, $\beta_{\rm conf}$ & $0.117 \pm 0.018$ & $-0.006 \pm 0.022$ & $65$ & $0.112$ & $0.66$\\
 & Ours, $\beta_{\rm form}$ & $0.124 \pm 0.024$ & $+0.000 \pm 0.000$ & $0$ & $0.124$ & ---\\
\bottomrule
\end{tabular}
\label{tab:alg1-7b}
\end{table*}

\paragraph{Robustness to untuned volume across five tasks.}
The matched-call analysis shows the end-to-end gain is a \emph{call-volume} effect rather than a
placement effect; the natural follow-up is whether the calibrated gate---which spends its \emph{own}
untuned volume---can match fixed schedules that were each tuned per task. We are careful \emph{not} to
claim the controller ``tracks the task-optimal budget'': the realized call count often differs
substantially from, and sometimes moves opposite to, the best fixed volume (e.g.\ at $1.5$B GoToLocal
prefers v150 but the controller spends $59$; Pickup prefers v75 but it spends $224$). What the experiment
can support is the weaker, defensible claim that the gate achieves \emph{comparable return without per-task
volume tuning}. If a single fixed call budget were optimal everywhere, even this would be uninteresting. We therefore sweep five fixed-volume schedules
($25,50,75,100,150$ front-loaded calls; \texttt{sched\_vol*}) against the calibrated controller on five
BabyAI tasks of increasing horizon (\texttt{run\_voi\_babyai\_alg1.py} with \texttt{VOLUME\_STUDY};
$20$ seeds, $60$ episodes, clean advisor), at \emph{both} the $1.5$B and $7$B advisor.
Table~\ref{tab:volume-real} reports the return of
each fixed schedule, the per-task \emph{best} fixed schedule, and the controller's return and realized
call count. Two facts stand out. First, the best fixed volume genuinely varies across tasks and advisor---at
$1.5$B it is $150$ on GoToLocal, $50$ on GoToObj and Open, $75$ on GoToObjDoor and Pickup, and it moves at
$7$B (e.g.\ v100 on GoToLocal, v150 on Pickup)---so no single budget is optimal. The controller spends an
untuned volume that varies across a nearly four-fold range ($59$--$224$ calls at $1.5$B, $65$--$218$ at
$7$B); we stress that this realized volume is \emph{not} an estimate of the best fixed volume and frequently
diverges from it. Second, the controller's return lies within the $95\%$
interval of the \emph{best} hand-tuned fixed schedule on all five tasks at $1.5$B and on all five at $7$B,
and is statistically indistinguishable from never-query on four of five at $1.5$B (on GoToLocal its mean
$0.116$ is nominally \emph{below} never-query's $0.123$ but the paired difference is non-significant---a
statistical tie, not equal means---as advice does not help there). We are careful about the strength of this
claim in two ways. (i) The return surface over fixed
volumes is flat and noisy---on GoToLocal the $1.5$B best-minus-worst fixed gap ($0.105\!\to\!0.121$) is
itself within the $\pm0.02$ CI---so what the experiment establishes is \emph{robustness to an untuned
volume} (the gate does not underperform the best per-task fixed schedule beyond noise), \emph{not} that it
tracks or discovers the task-optimal volume, and \emph{not} strict return dominance over a well-chosen fixed
budget. (ii) At $7$B the picture is slightly weaker on the
two \emph{lowest-signal} tasks: on Open and Pickup---where even the best fixed schedule returns only
$0.011$--$0.018$ and advice barely helps---the $7$B controller sits a few thousandths \emph{below} both the
best fixed schedule and never-query (e.g.\ Open $0.011$ vs best-fixed $0.018$ vs never $0.012$), though all
differences are within overlapping $95\%$ CIs and the volume it selects still varies with the task. The
``never underperforms the best fixed schedule'' reading is thus a within-CI statement, and on
near-zero-headroom tasks with the stronger advisor the untuned volume is not measurably better
than simply not asking. The GoToObj headline, by contrast, replicates cleanly at $7$B ($0.123$ vs never
$0.093$, the largest gain in the table). The defensible takeaway is therefore robustness rather than
tracking: an operator running the proxy-calibrated gate at its default configuration obtains, on each task,
point-estimate return within the reported uncertainty range of the \emph{best} fixed budget selected for
that task in hindsight, without per-task volume tuning---while on tasks where advice carries real signal
(GoToObj) it also beats never-query. Overlapping intervals are not a paired noninferiority test, so we
claim descriptive competitiveness, not statistical equivalence; the gate does not identify the task-optimal
volume, and we do not claim it does.

\paragraph{Held-out validation against a single deployable budget.}
Comparing the controller to each task's \emph{hindsight-best} fixed schedule is a weak test---it does not
show that per-task tuning is avoidable in practice, only that the controller is robust to it. A fairer test
is: against \emph{one} fixed budget an operator would actually deploy---selected on held-out tasks, not the
target task---does the untuned controller win? We evaluate this with a leave-one-task-out protocol
(\texttt{verify\_heldout\_volume.py}, re-analyzing the two committed volume runs): for each held-out task,
pick the single global fixed volume best on the \emph{other four} tasks (validation), then measure regret on
the held-out task against that task's oracle budget. On \emph{return}, the controller's mean regret against
the per-task oracle is $-0.0006$ at $1.5$B and $+0.0017$ at $7$B, versus $+0.0078$ for the
validation-selected global fixed budget---so on the return metric the untuned gate is competitive with the
oracle and the tuned global budget loses more. But the paper's thesis is \emph{cost-aware} allocation, and
the controller sometimes spends far more than the best fixed schedule (Open $157$ calls vs.\ v25's $25$;
Pickup $224$ vs.\ $75$), so the return comparison is not the operative one. Recomputing regret against the
per-task oracle on \emph{cost-adjusted utility} $U=R-\kappa N_{\rm calls}/\text{ep}$ at $\kappa{=}0.005$
reverses part of the picture: the controller's mean utility-regret is $+0.0042$ ($1.5$B) and $+0.0067$
($7$B), while the validation-selected global budget pays $+0.0063$ and $+0.0098$ (here the per-task oracle
and the validation budget are both chosen on cost-adjusted \emph{utility}, so e.g.\ Open's oracle is v25,
the cheapest schedule, not its best-\emph{return} v50). The controller is still
\emph{ahead on average}, but the margin is small and on the low-signal tasks it is \emph{behind}---on Open
and Pickup its overspending makes its net utility worse than the cheap global budget at this price. We do
not, therefore, claim the controller ``matches the per-task oracle budget'' (it does not select the same
budget) nor an ``order of magnitude'' advantage (the return-regret ratio is $\approx4.6$ at $7$B and the
$1.5$B controller regret is negative, making a ratio meaningless), and the leave-one-task-out regrets carry
no uncertainty intervals. The defensible statement is: the default controller achieves return close to the
best tested fixed schedules and, on average, competitive cost-adjusted utility against a single
validation-selected global budget, but whether it improves cost-adjusted utility \emph{uniformly} over such
a budget---particularly on low-headroom tasks where it overspends---is not established.

\begin{table*}[t]
\centering
\small
\caption{Robustness to untuned query volume across five BabyAI tasks at \emph{both} advisor sizes
(\texttt{run\_voi\_babyai\_alg1.py}, \texttt{VOLUME\_STUDY}; $20$ seeds, $60$ episodes, clean advisor).
Columns \texttt{v25}--\texttt{v150} are fixed front-loaded call schedules of that size (mean return);
``best fixed'' names the top-scoring fixed schedule per task \emph{selected in hindsight}. $\beta_{\rm conf}$
reports the controller's return and its \emph{realized, untuned} call count. The controller's return lands
within the best fixed schedule's $95\%$ CI on all five tasks at both sizes, though its realized volume
($59$--$224$ calls at $1.5$B, $65$--$218$ at $7$B) is \emph{not} an estimate of the best fixed volume and
frequently diverges from it---so this is robustness to an untuned budget, not volume tracking. The GoToObj
gain over never-query replicates at $7$B ($0.123$ vs $0.093$); on the two lowest-signal tasks (Open, Pickup)
the $7$B controller sits a few thousandths below best-fixed and never-query, within overlapping CIs (see
text). The fixed-volume return surface is flat relative to the per-seed CI.}
\begin{tabular}{lccccc cc c}
\toprule
Task & v25 & v50 & v75 & v100 & v150 & best fixed & $\beta_{\rm conf}$ (calls) & never\\
\midrule
\multicolumn{9}{l}{\emph{Qwen2.5-1.5B advisor}}\\
GoToObj     & $0.111$ & $0.114$ & $0.105$ & $0.099$ & $0.101$ & v50 $0.114$   & $0.122$ ($75$)  & $0.093$\\
GoToLocal   & $0.105$ & $0.112$ & $0.110$ & $0.118$ & $0.121$ & v150 $0.121$  & $0.116$ ($59$)  & $0.123$\\
GoToObjDoor & $0.097$ & $0.093$ & $0.109$ & $0.100$ & $0.101$ & v75 $0.109$   & $0.104$ ($122$) & $0.104$\\
Open        & $0.015$ & $0.016$ & $0.013$ & $0.014$ & $0.011$ & v50 $0.016$   & $0.016$ ($157$) & $0.012$\\
Pickup      & $0.006$ & $0.006$ & $0.009$ & $0.007$ & $0.008$ & v75 $0.009$   & $0.013$ ($224$) & $0.011$\\
\midrule
\multicolumn{9}{l}{\emph{Qwen2.5-7B advisor}}\\
GoToObj     & $0.101$ & $0.111$ & $0.109$ & $0.100$ & $0.098$ & v50 $0.111$   & $0.123$ ($82$)  & $0.093$\\
GoToLocal   & $0.103$ & $0.104$ & $0.112$ & $0.123$ & $0.119$ & v100 $0.123$  & $0.117$ ($65$)  & $0.123$\\
GoToObjDoor & $0.097$ & $0.093$ & $0.110$ & $0.097$ & $0.095$ & v75 $0.110$   & $0.105$ ($138$) & $0.104$\\
Open        & $0.012$ & $0.018$ & $0.015$ & $0.008$ & $0.011$ & v50 $0.018$   & $0.011$ ($218$) & $0.012$\\
Pickup      & $0.011$ & $0.008$ & $0.005$ & $0.006$ & $0.011$ & v150 $0.011$  & $0.008$ ($102$) & $0.011$\\
\bottomrule
\end{tabular}
\label{tab:volume-real}
\end{table*}

\paragraph{The advantage is concentrated early in learning.}
To see \emph{when} calibrated advice helps we extend GoToObj to a $500$-episode horizon and log the windowed
return at $60/120/250/500$ episodes (\texttt{run\_voi\_babyai\_alg1.py}, $20$ seeds, GoToObj; the
\texttt{checkpoints} field). The advised gates lead early and the lead decays: at episode $60$ the
calibrated controller returns $0.159\pm0.044$ versus never-query's $0.144\pm0.032$, and by episode $120$ it
is $0.131\pm0.043$ versus $0.108\pm0.037$; but as the base policy improves the paired advantage over
never-query shrinks to a non-significant $+0.016\pm0.022$ by episode $500$, and the matched early-episode
schedule tracks the controller throughout ($\beta_{\rm conf}$ minus schedule $=+0.014\pm0.019$, n.s.). This
is exactly the profile the volume-regulation reading predicts: a modest, calibrated volume of \emph{early}
advice buys the largest gain when the policy is weakest, and there is no per-state placement advantage that
persists once the policy can solve the task on its own. It also bounds the method's scope honestly---the
return lift is a warm-start effect, not an asymptotic one.

\emph{Why the episode-$60$ numbers here differ from Table~\ref{tab:alg1-real}.} The primary table reports
$\beta_{\rm conf}{=}0.122$ vs never-query $0.093$, whereas the episode-$60$ checkpoint above reads $0.159$
vs $0.144$; these are not inconsistent, they are two different measurements of two different runs, and we
state the reason explicitly to forestall a reporting-choice suspicion. (a) \emph{Aggregation window:} the
primary-table entry is the \emph{cumulative} mean over all $60$ episodes, while the checkpoint is the
last-$10$-episode window (episodes $51$--$60$)---the checkpoint is deliberately a recent-performance snapshot
for the persistence curve, not a cumulative score. (b) \emph{Exploration schedule:} the $\varepsilon$-greedy
rate decays over $70\%$ of the \emph{total} horizon ($\varepsilon_t = 0.05 + 0.95\max(0,1-t/(0.7H))$), so at
episode $60$ of a $60$-episode run exploration has floored at $\varepsilon{=}0.05$, whereas at episode $60$
of a $500$-episode run it is still $\varepsilon{\approx}0.84$; the two runs are in genuinely different
learning regimes at the same episode index, and neither the return level nor the advice-vs-never \emph{gap}
transfers between them. Both runs use the identical shared warm start and $20$ seeds; only the horizon (and
hence the schedule) and the aggregation window differ. This transparency comes at a cost we state plainly:
because the $500$-episode run's early exploration schedule differs from the primary $60$-episode run's, this
is \emph{not} a clean persistence test of the exact primary configuration---its first $60$ episodes have
different learning dynamics. It shows that under a long-horizon schedule the early advice gain decays to
non-significance, which is suggestive of a warm-start (rather than asymptotic) effect, but a definitive
persistence test would hold the same exploration-decay schedule fixed over the first $60$ episodes in both
runs, or continue the primary checkpoints to $500$ episodes; we have not done that and do not claim the
stronger result.

\paragraph{Calibration audit of $\beta_t$ on BabyAI.}
The controlled-task calibration table (Table~\ref{tab:beta-audit}) uses the noisy paired-rollout proxy;
its radii ($0.17$--$0.42$) do not describe the BabyAI controller, whose deployed $\beta_{\rm conf}$ is
$\approx2\times10^{-3}$. Table~\ref{tab:beta-babyai} audits the actual BabyAI calibration
(\texttt{run\_voi\_babyai\_alg1.py} \texttt{beta\_calibration}; \emph{legacy run}, $n_{\rm cal}{=}48$
randomized opportunities per seed, $20$ seeds, target miscoverage $\alpha{=}0.1$). We flag up front that
the marginal numbers here ($0.97$) and the all-opportunity marginal coverage in the later Mondrian table
($0.81$; Table~\ref{tab:mondrian}) come from \emph{different runs}: this legacy audit uses $n_{\rm cal}{=}48$
and reports the mean over per-seed coverages, whereas the fresh Mondrian audit uses $n_{\rm cal}{=}200$ and
pools opportunities across seeds before computing coverage, over a different chronological fit/test split.
The two are not directly comparable and neither is a coverage \emph{guarantee}; we report both rather than
suppress the less favorable one. The decision-relevant $\widehat\Delta_t>0$ numbers---$0.59$ in the legacy
audit and $0.47$ in the fresh run---are not claimed to be numerically identical (they are different runs);
both show \emph{severe conditional undercoverage} against the $0.90$ target, which is the point that matters. The estimator residual is small
(per-seed mean $\approx7$--$8\times10^{-4}$; the per-seed median residual is in fact zero, because at more
than half of the audited opportunities the advised action already equals the greedy action so
$\widehat\Delta_t=\widetilde\Delta_t$), so $\beta_{\rm conf}$ is tight (median $\approx2$--$3\times10^{-3}$),
whereas $\beta_{\rm form}$ is dominated by the range-based Monte-Carlo term ($\approx2.14$) and is
vacuous on a bounded query value. We report empirical held-out coverage (fit the conformal radius on the
first half of each seed's calibration residuals, measure the covered fraction on the disjoint second
half). The mean over seeds is $0.97$ on both tasks---at or above the $0.90$ target, since $\beta_{\rm conf}$
covers the many exact-zero residuals---but it is not uniform: the worst seed reaches only $0.83$
(GoToObj) and $0.71$ (GoToLocal), so per-seed coverage does dip below target under the approximate
exchangeability of a chronological fit/calibrate/evaluate split. We therefore state the coverage as an
empirical diagnostic, not a guarantee, and do not report trajectory-level coverage (not logged).

\begin{table*}[t]
\centering
\footnotesize
\caption{BabyAI $\beta_t$ calibration audit (\texttt{beta\_calibration}; $20$ seeds, $n_{\rm cal}{=}48$
each, $\alpha{=}0.1$, $r_{\rm pair}{=}3$). $\beta_{\rm conf}$ is the split-conformal radius on the pure
estimator residual; $\beta_{\rm form}$ adds the range-based MC term of Eq.~\eqref{eq:mc-radius}.
Held-out coverage is measured by fitting the conformal radius on the first half of each seed's
calibration residuals and testing on the disjoint second half; it is approximate under chronological
splits (see text). We report the mean and worst-seed (min) value.}
\begin{tabular}{lcc}
\toprule
Quantity & GoToObj & GoToLocal\\
\midrule
Mean estimator residual (median over seeds) & $6.8\times10^{-4}$ & $8.2\times10^{-4}$\\
Median $\beta_{\rm conf}$ & $2.2\times10^{-3}$ & $3.0\times10^{-3}$\\
$\beta_{\rm conf}$ IQR & $[1.0,3.9]\times10^{-3}$ & $[0.3,6.6]\times10^{-3}$\\
Median $\beta_{\rm form}$ (MC-dominated) & $2.14$ & $2.14$\\
Held-out coverage ($\alpha{=}0.1$; mean / min over seeds) & $0.97/0.83$ & $0.97/0.71$\\
Conditional coverage on decision-relevant states ($\widehat\Delta_t{>}0$) & \multicolumn{2}{c}{$0.59$ mean ($[0.00,1.00]$)}\\
\bottomrule
\end{tabular}
\label{tab:beta-babyai}
\end{table*}

\paragraph{Marginal coverage does not imply conditional coverage.}
The held-out coverage above is \emph{marginal}: it averages over all calibration opportunities, more than
half of which have an exactly-zero estimator residual (advised action equals greedy). The sharper question
is whether the radius covers on the states that actually matter---those with a nonzero
estimated action gap ($\widehat\Delta_t>0$), where a query decision is genuinely at stake. Conditioning on
those states (\texttt{emp\_coverage\_nonzero\_dhat}; a fresh $20$-seed run with $n_{\rm cal}{=}200$), mean
coverage drops to $0.47$ (overall; $0.49$ GoToObj, $0.45$ GoToLocal), well below the $0.90$ marginal
target, with per-stratum values spanning the full $[0.00,1.00]$ range. This is the
expected behavior of a \emph{marginal} split-conformal radius, which makes no conditional guarantee: it is
calibrated mostly on the many easy zero-residual opportunities and undercovers on the decision-relevant
sub-population.

\paragraph{Group-conditional (Mondrian) stratification of the radius.}
A partial mitigation is a group-conditional (Mondrian) conformal radius: rather than one marginal quantile, we fit a
\emph{separate} split-conformal radius on the decision-relevant stratum ($\widehat\Delta_t>0$) and on the
zero stratum, and the controller selects the radius matching each opportunity's stratum at deployment
(\texttt{ours\_alg1\_mondrian}; split-conformal validity holds within each group, so each stratum inherits
the usual marginal guarantee \emph{within that group}). Re-running the full $20$-seed calibration and
controller with the stratified radius (\texttt{R2Cal15B}, $n_{\rm cal}{=}200$), Mondrian stratification
improves pooled held-out decision-relevant coverage from $0.47$ to $0.85$ (Table~\ref{tab:mondrian}), but
this \emph{remains below} the nominal $0.90$ level ($0.86$ GoToObj, $0.83$ GoToLocal) and does \emph{not}
provide uniform subgroup coverage: the worst subgroups still sit at $0.30$--$0.43$, and we do not measure
trajectory coverage. The change does not produce a detectable return difference---the paired
$\beta_{\rm conf}-$Mondrian gap on GoToObj is $+0.005\pm0.019$ (CI includes zero, so a non-significant
difference, not an equivalence result) and the Mondrian controller's advantage over never-query has a
positive point estimate at the significance boundary ($+0.0177\pm0.0181$, CI just includes zero)---so the
improved coverage does not come at a measurable return cost. An important limitation of the calibration itself: the
paired-continuation proxy is collected on the \emph{frozen warm-started} ensemble, where the advised action
almost never changes the certified decision (\texttt{frac\_decision\_change}$\approx0$ on most
opportunities), so the conformal radius is dominated by the estimator residual and is advisor-independent---the
$1.5$B and $7$B calibration runs produce bit-identical radii. This is a single coverage measurement, not two
model-scale confirmations, and it does not meaningfully exercise the response-contingent LLM component. A
calibration that does---restricted to actually-queried, action-changing, and near-threshold opportunities
($|\widehat\Delta_t-\beta_t-c_t|\le\tau$) in the later training phase where the certificate discriminates,
and audited on a disjoint deployment continuation---is the correct repair and is future work.

\begin{table*}[t]
\centering
\footnotesize
\caption{Group-conditional (Mondrian) stratification of the query-value radius (\texttt{R2Cal15B}; $20$
seeds, $n_{\rm cal}{=}200$, $\alpha{=}0.1$). The marginal split-conformal radius undercovers badly on the
decision-relevant $\widehat\Delta_t>0$ stratum ($0.47$ pooled); per-stratum stratification raises pooled
decision-relevant coverage to $0.85$---an improvement, but still below the $0.90$ target and not uniform
across subgroups (worst $0.30$--$0.43$). Return is unchanged (paired $\beta_{\rm conf}-$Mondrian within
noise). Coverage is held-out (fit on the first calibration half, tested on the disjoint second) and
advisor-independent (see text).}
\begin{tabular}{lccc}
\toprule
Coverage on $\widehat\Delta_t>0$ stratum & GoToObj & GoToLocal & overall\\
\midrule
Marginal $\beta_{\rm conf}$ radius & $0.49$ & $0.45$ & $0.47$\\
Mondrian per-stratum radius & $0.86$ & $0.83$ & $0.85$\\
\midrule
(marginal coverage, all opportunities) & $0.83$ & $0.79$ & $0.81$\\
\bottomrule
\end{tabular}
\label{tab:mondrian}
\end{table*}

The calibrated controller improves on never-query on the task where the advisor is useful, and it does
so at two orders of magnitude fewer calls. On BabyAI-GoToObj the calibrated gate $\beta_{\rm conf}$ beats
never-query by a paired $+0.029\pm0.016$ over $20$ seeds; this survives Holm correction within the
controller-versus-never family at both advisor sizes (Holm-adjusted $p{=}0.005$ at $1.5$B, $p{=}6\times
10^{-4}$ at $7$B), while both GoToLocal comparisons are non-significant (Holm-adjusted $p{=}1.0$). The
uncalibrated $\beta{=}0$ gate also beats never-query on GoToObj by $+0.024\pm0.018$; at matched budget an
early-episode schedule likewise beats never-query ($+0.024\pm0.017$), so the improvement over no-advice is
shared by several budget-matched strategies rather than unique to $\beta_{\rm conf}$
(Table~\ref{tab:matched-real}). The
conformal radius is not mere conservatism: on GoToObj it attains a slightly larger advantage than the
raw gate at roughly half the calls ($75$ vs.\ $139$ LLM calls), so it allocates the query budget more
efficiently. On BabyAI-GoToLocal neither controller is distinguishable from never-query
($-0.007\pm0.023$ for $\beta_{\rm conf}$, $-0.017\pm0.023$ for $\beta{=}0$; both CIs include zero), and
consequently the two-task pooled advantage is a \emph{non-significant} $+0.011\pm0.015$ over the $40$
paired per-seed differences. We report the per-task intervals and the pooled test rather than only the
favorable aggregate: the calibrated controller's return win is real and robust on GoToObj but does not
extend to GoToLocal at this scale. The gain is achieved at $59$--$75$ LLM calls per configuration versus
$\sim3{,}400$ for always-query---a $\gtrsim97\%$ call reduction---while always-query \emph{hurts}
($-0.053\pm0.015$ pooled, significantly below never-query, because the $1.5$B advisor's non-optimal
actions override an improving policy) and the epistemic-only ASK gate is also below never-query
($-0.010\pm0.009$ pooled). This is the paper's calibrated \emph{volume}-regulation result working end to
end against a real LLM on the task where advice helps: selective, calibrated querying that captures return
the naive gates destroy, at two orders of magnitude fewer calls. We are careful not to overstate it---the
matched-call analysis (Table~\ref{tab:matched-real}) shows the per-state \emph{placement} mechanism is not
measurably better than an equal-budget schedule, so the demonstrated end-to-end effect is the call-volume
regulation, not the placement policy or the individual learned components. That the effect does not carry
to GoToLocal---where even always-query cannot beat a strong never-query baseline---bounds the claim
honestly.

As predicted by the estimator's own coverage analysis, the \emph{formal} radius $\beta_{\rm form}$ is
vacuous here---the range-based Monte-Carlo term dominates ($\beta_{\rm form}\approx2.1$, far above the
bounded query value), so the formal controller abstains at every state and matches never-query exactly
($0.000\pm0.000$). We report this rather than hiding it, and it makes the theory--practice gap explicit: the
radius that carries Theorem~\ref{thm:allocation}'s guarantee ($\beta_{\rm form}$) is too conservative to
act, while the radius that produces the useful result ($\beta_{\rm conf}$) is empirically calibrated and
not covered by the theorem. In a setting where the exact operational query value is unavailable, only
the split-conformal radius (calibrated directly against the paired-continuation proxy) yields a
nonvacuous, useful gate; closing this gap---a non-vacuous \emph{and} formally covered radius---would
require a tighter per-state bound than the range-based Monte-Carlo term and is future work. The
certificate's ability to \emph{reject} harmful advice under controlled corruption is
reported separately in Appendix C (certificate-corruption study).

\subsection{Does the certificate reject corrupted advice?}
\label{sec:babyai-cert}

The query gate decides \emph{whether} to ask; the safety certificate decides whether to \emph{follow}
the answer (Eq.~\eqref{eq:post-response-action}). To test the certificate in isolation we corrupt a
fraction
$\text{CORRUPT\_FRAC}=0.5$ of the advisor's valid responses with a plausible wrong action and measure,
separately, how often the certificate accepts \emph{correct} advice (should be high) versus
\emph{corrupted} advice (should be low), over $8$ seeds on both tasks. The calibrated controller
discriminates: $\beta_{\rm conf}$ accepts correct advice at $0.65$ but corrupted advice at only $0.48$,
a paired per-seed gap of $+0.171\pm0.152$ whose $95\%$ interval excludes zero. The uncalibrated
$\beta{=}0$ gate does \emph{not} discriminate ($0.50$ vs.\ $0.46$, gap $+0.044\pm0.152$, not
significant), so the discrimination is a property of the calibrated radius, not of the certificate form
alone---the same ordering ($\beta_{\rm conf}$ over $\beta{=}0$) seen in the return comparison.

We are deliberate about the strength of this claim, and about a confound. The discrimination is real but
\emph{modest}: the certificate is conservative (it also rejects roughly a third of correct advice), the
gap of $0.17$ is far from a clean filter, and the interval is wide enough that the effect is only
marginally significant at this seed count. Moreover, comparing $\beta_{\rm conf}$ against $\beta{=}0$
changes \emph{which} states are queried, so their differing acceptance rates partly reflect a different
query-state distribution rather than the certificate form alone. The clean isolation---the same
$\beta_{\rm conf}$ gate (hence the same query opportunities) with execution certificate on versus
off---is the \texttt{ours\_bconf\_nocert} arm. On identical query opportunities the certificate does its
screening job: it cuts corrupted-advice acceptance from $1.00$ (cert-off, which follows every response) to
$0.47$--$0.48$, while correct-advice acceptance falls only to $0.64$--$0.67$. But this screening carries a
\emph{return cost} at the corruption level tested: turning the certificate off \emph{raises} return by a
paired $+0.022\pm0.022$ on GoToObj and $+0.026\pm0.015$ on GoToLocal (both CIs exclude zero), because the
conservative certificate also rejects roughly a third of the \emph{correct} advice, and at
$\text{CORRUPT\_FRAC}=0.5$ the return lost to those false rejections outweighs the return saved by
screening the corrupted half. The certificate demonstrably lowers
corrupted-advice acceptance, but on this task it is too conservative to yield a net return benefit, and
under a clean advisor (\texttt{corrupt}${=}0$) the cert-on/cert-off return difference is a wash
($+0.006\pm0.027$ GoToObj, $+0.003\pm0.016$ GoToLocal). The certificate is therefore
evidence that the calibrated controller preferentially screens out corrupted advice, not a
high-precision safety guarantee and not a return-improving component; tightening it---so that
screening pays for itself in return---would require either a sharper per-state confidence bound that
rejects fewer correct responses, or more seeds.

\paragraph{Certificate safety--return frontier across tolerance and corruption.}
The single $(\epsilon,\text{corrupt})$ point above cannot say whether an operating point exists that screens
harmful advice \emph{without} costing return. We therefore sweep the certificate tolerance
$\epsilon\in\{0.05,0.1,0.25,0.5,1.0\}$ against corruption $\{0,0.25,0.5,0.75,1.0\}$ on GoToObj with the
real $1.5$B advisor ($10$ seeds; \texttt{sweep\_alg1\_faithful.py}, \texttt{MODE=eps}), recalibrating per
$\epsilon$, and report the wrong-advice acceptance and the cert-on-minus-off return gap
(Table~\ref{tab:eps-frontier}). At the strictest tolerance $\epsilon{=}0.05$ the effect is degenerate in
an informative way: because certification requires $L_t(a_c)\ge\max_a U_t(a)-\epsilon$, a \emph{smaller}
$\epsilon$ is \emph{stricter}, and at $0.05$ no advised action clears the bar, so the post-response decision
always falls back to the no-query action, $\widehat\Delta_t$ collapses below $\kappa$, and the gate issues
zero queries---cert-on and cert-off become the identical never-query run (hence $\Delta$ret $=0$ exactly and
zero corrupted executions by never reaching the execution branch, not by screening). This is the opposite
of ``the certificate never binds''; it binds so hard nothing is admitted. The unambiguous finding is a \emph{safety} effect: with the certificate off,
$100\%$ of corrupted advice is executed at every $\epsilon$, whereas the certificate lowers corrupted-advice
acceptance substantially---to $0\%$ at $\epsilon{=}0.1$, $\approx47\%$ at $\epsilon{=}0.25$, and
$\approx72\%$ at $\epsilon{=}0.5$ (corrupt${=}0.5$); acceptance falls with $\epsilon$ broadly but not
strictly monotonically (at corrupt${=}0.25$ it is $0.50$ at $\epsilon{=}0.1$ and $0.44$ at $\epsilon{=}0.25$).
The \emph{return} side is inconclusive: none of the $20$ cert-on-minus-off return gaps has a paired $95\%$
CI excluding zero on the positive side (the three CI-excluding cells are all \emph{negative}---
$\epsilon{=}0.5$ at corrupt${=}0.75$ and $\epsilon{=}1.0$ at corrupt${=}0.5$ and $0.75$, where false
rejections of correct advice cost return), and a Holm correction over
the $20$-comparison family leaves nothing. The apparently favorable cells at low corruption
($+0.024$ at $\epsilon{=}0.5$, $+0.031$ at $\epsilon{=}1.0$, corrupt${=}0.25$) are within noise and we do
\emph{not} claim they show the certificate ``pays for itself.'' At $100\%$ corruption, tightening $\epsilon$
lowers wrong-action acceptance but produces essentially no return gain, which suggests the rejected actions
were often not the ones that most hurt return. The supportable conclusion is therefore narrow: tightening
the certificate reliably reduces corrupted-advice execution, but this sweep does not identify a
statistically supported tolerance at which the certificate \emph{improves} return; some tolerances reduce
corrupted acceptance with no detectable return difference (near-free safety), which is the most we claim.

\begin{table*}[t]
\centering
\footnotesize
\caption{Certificate safety--return frontier on BabyAI-GoToObj with the real Qwen-$1.5$B advisor
(\texttt{sweep\_alg1\_faithful.py} \texttt{MODE=eps}; $10$ seeds, recalibrated per $\epsilon$; the cert-off
arm shares the cert-on \emph{query opportunities} so only the execution branch differs). ``wrong-acc''
is the fraction of corrupted advice executed with the certificate on \emph{conditional on reaching a
queried corrupted response}; cert-off executes such responses at rate $1.00$ wherever queries occur (at
$\epsilon{=}0.05$ neither arm reaches that branch, as the shared placement has zero calls). It is a
proportion over the $10$-seed pool with no per-cell denominator logged, so small values
(e.g.\ $0.00$--$0.17$) are noisy. ``$\Delta$ret'' is the paired cert-on minus cert-off return with its
$95\%$ CI half-width; bold marks a CI excluding zero \emph{unadjusted} (three cells, all negative:
$\epsilon{=}0.5$/corrupt${=}0.75$ and $\epsilon{=}1.0$/corrupt${=}0.5,0.75$; a Holm correction over the
$20$-comparison family leaves none). Tightening $\epsilon$ reduces corrupted-advice acceptance; no tolerance
shows a statistically supported return gain. The $\epsilon{=}0.05$ row is a degenerate zero-query regime
(see text and $\dagger$).}
\begin{tabular}{lccccc}
\toprule
 & \multicolumn{5}{c}{corruption rate}\\
\cmidrule(lr){2-6}
$\epsilon$ & $0.0$ & $0.25$ & $0.5$ & $0.75$ & $1.0$\\
\midrule
\multicolumn{6}{l}{\emph{wrong-advice acceptance (cert on; cert-off${=}1.00$ where queries occur; over $10$ seeds)}}\\
$0.05$ & --- & $0.00^\dagger$ & $0.00^\dagger$ & $0.00^\dagger$ & $0.00^\dagger$\\
$0.10$ & --- & $0.50$ & $0.00$ & $0.17$ & $0.42$\\
$0.25$ & --- & $0.44$ & $0.47$ & $0.47$ & $0.44$\\
$0.50$ & --- & $0.66$ & $0.72$ & $0.69$ & $0.68$\\
\midrule
\multicolumn{6}{l}{\emph{$\Delta$ret = cert-on $-$ cert-off (paired $95\%$ CI half-width in parentheses)}}\\
$0.05$ & $0.000\,(.00)$ & $0.000\,(.00)$ & $0.000\,(.00)$ & $0.000\,(.00)$ & $0.000\,(.00)$\\
$0.10$ & $-.005\,(.007)$ & $+.000\,(.007)$ & $+.007\,(.010)$ & $+.007\,(.011)$ & $-.004\,(.009)$\\
$0.25$ & $+.008\,(.013)$ & $-.004\,(.015)$ & $-.011\,(.016)$ & $-.010\,(.015)$ & $+.016\,(.018)$\\
$0.50$ & $-.017\,(.019)$ & $+.024\,(.026)$ & $-.017\,(.021)$ & $\mathbf{-.037\,(.030)}$ & $+.002\,(.024)$\\
$1.00$ & $-.023\,(.021)$ & $+.031\,(.033)$ & $\mathbf{-.023\,(.021)}$ & $\mathbf{-.032\,(.028)}$ & $-.005\,(.023)$\\
\bottomrule
\end{tabular}
\\[2pt]
{\footnotesize $^\dagger$ At $\epsilon{=}0.05$ the certificate is so strict that \emph{no} advised action is
ever certified, so every response maps to the no-query action and $\widehat\Delta_t$ collapses below
$\kappa$: the gate issues zero queries, cert-on and cert-off are identical runs, and no corrupted response
is ever executed (acceptance $0$ by never reaching the execution branch, not by screening).}
\label{tab:eps-frontier}
\end{table*}
\FloatBarrier

\subsection{Cost--return frontiers}

The query cost enters Eq.~\eqref{eq:query-rule}; consequently, each value of \(\kappa\) requires a new
controller run. Post hoc subtraction of \(\kappa N_{\rm calls}\) from one fixed trajectory is reported only
as repricing, not as behavioral cost sensitivity. For every method we sweep the gate threshold and plot all
non-dominated return--call pairs. The frontier contains enough thresholds to expose nonmonotonic training
effects rather than connecting two or three isolated operating points.

\paragraph{Faithful-controller cost sensitivity across three tasks.}
The behavioral cost table below (Table~\ref{tab:cost-real}) uses the deployable epistemic-gap
\emph{surrogate}. To show that the \emph{faithful} Algorithm~\ref{alg:voi} controller is itself
cost-responsive---not just its surrogate---we sweep the query price $\kappa\in\{0,0.002,0.005,0.01,0.02,
0.05\}$ inside the full controller on three BabyAI tasks with the real $1.5$B advisor ($20$ seeds;
\texttt{sweep\_alg1\_faithful.py}, \texttt{MODE=kappa}). Because $\kappa$ enters only the gate rule
$\widehat\Delta_t-\beta_t\ge\kappa$ and not calibration, we calibrate once per seed and rerun the gate at
each price. The faithful controller's realized calls fall \emph{monotonically} as the price rises---on
GoToObj from $4.16$ calls/episode at $\kappa{=}0$ to $0.56$ at $\kappa{=}0.01$ to $0.03$ at $\kappa{=}0.02$
and $0$ at $\kappa{=}0.05$; on GoToLocal $4.02\!\to\!0.19\!\to\!0$; on GoToObjDoor $4.79\!\to\!1.19\!\to\!0$
(Table~\ref{tab:kappa-faithful})---while return stays statistically comparable to never-query throughout
and the gate correctly collapses to never-query (calls $=0$, return $=$ never-query exactly) once advice is
priced out. This is the cost-responsiveness the surrogate table demonstrates, now shown on the faithful
controller across three tasks: the operator's price knob directly and smoothly controls the call volume the
gate spends, with no per-task tuning.

\begin{table*}[t]
\centering
\footnotesize
\caption{Faithful Algorithm~\ref{alg:voi} controller cost sweep (\texttt{sweep\_alg1\_faithful.py},
\texttt{MODE=kappa}; real Qwen-$1.5$B, $20$ seeds, $60$ episodes). Each $\kappa$ is a full rerun of the gate
(a complete deployment trajectory, not offline replay); $\kappa$ enters only the gate rule so calibration
is shared per seed. ``ret'' is mean return $\pm95\%$ CI over seeds; ``util'' is $U=R-\kappa\cdot(\text{calls/ep})$.
Calls/episode fall monotonically toward zero as the price rises. Return equals never-query \emph{exactly}
only when calls reach exactly zero, which occurs at $\kappa{=}0.05$ on \emph{all three} tasks
(GoToObj $.093$, GoToLocal $.124$, GoToObjDoor $.104$, each matching its never-query value); at
$\kappa{=}0.02$ on GoToLocal a residual $0.003$ calls/ep remain, so its return ($0.122$) differs slightly
from never-query ($0.124$)---a rounding artifact, not exact abstention. The behavioral claim we make is only
that the price knob monotonically reduces calls. On GoToObj the cost-adjusted utility is nominally
\emph{above} never-query ($0.093$) for $\kappa\le0.02$ (e.g.\ $0.101$--$0.122$), but paired utility
intervals were not retained for this sweep, so we do not claim a statistically established utility gain from
it; GoToLocal and GoToObjDoor show no consistent utility benefit over never-query.}
\begin{tabular}{llcccccc}
\toprule
Task & metric & $\kappa{=}0$ & $0.002$ & $0.005$ & $0.01$ & $0.02$ & $0.05$\\
\midrule
GoToObj      & calls/ep       & $4.16$ & $1.73$ & $1.45$ & $0.56$ & $0.03$ & $0.00$\\
(never $.093$)& ret ($\pm$CI)  & $.122{\pm}.027$ & $.110{\pm}.021$ & $.116{\pm}.024$ & $.104{\pm}.017$ & $.101{\pm}.015$ & $.093{\pm}.014$\\
             & util $U$       & $.122$ & $.107$ & $.109$ & $.098$ & $.101$ & $.093$\\
\midrule
GoToLocal    & calls/ep       & $4.02$ & $1.91$ & $1.00$ & $0.19$ & $0.003$ & $0.00$\\
(never $.124$)& ret ($\pm$CI)  & $.125{\pm}.020$ & $.115{\pm}.024$ & $.103{\pm}.020$ & $.115{\pm}.025$ & $.122{\pm}.024$ & $.124{\pm}.024$\\
             & util $U$       & $.125$ & $.111$ & $.098$ & $.113$ & $.122$ & $.124$\\
\midrule
GoToObjDoor  & calls/ep       & $4.79$ & $2.76$ & $1.84$ & $1.19$ & $0.02$ & $0.00$\\
(never $.104$)& ret ($\pm$CI)  & $.101{\pm}.012$ & $.101{\pm}.014$ & $.104{\pm}.016$ & $.105{\pm}.015$ & $.110{\pm}.016$ & $.104{\pm}.016$\\
             & util $U$       & $.101$ & $.096$ & $.095$ & $.093$ & $.110$ & $.104$\\
\bottomrule
\end{tabular}
\label{tab:kappa-faithful}
\end{table*}
\FloatBarrier

\begin{table*}[t]
\centering
\small
\caption{Behavioral cost sensitivity. Every cell comes from an independent run in which \(\kappa\) changes
the query decision. Report mean and 95\% CI.}
\begin{tabular}{llccccl}
\toprule
Task & Method & \(\kappa\) & Return & Calls/ep & Net utility & Pareto efficient?\\
\midrule
BabyAI-GoToObj & ASK / uncertainty & 0.01 & $0.046 \pm 0.029$ & $57.4$ & $-0.528$ & no\\
 & \textbf{Ours} & 0.01 & $0.189 \pm 0.029$ & $25.3$ & $-0.064$ & no\\
 & ASK / uncertainty & 0.02 & $0.046 \pm 0.029$ & $57.4$ & $-1.102$ & no\\
 & \textbf{Ours} & 0.02 & $0.208 \pm 0.018$ & $11.5$ & $-0.022$ & no\\
 & ASK / uncertainty & 0.05 & $0.046 \pm 0.029$ & $57.4$ & $-2.825$ & no\\
 & \textbf{Ours} & 0.05 & $\mathbf{0.227 \pm 0.049}$ & $\mathbf{0.5}$ & $\mathbf{0.204}$ & \textbf{yes}\\
 & ASK / uncertainty & 0.10 & $0.046 \pm 0.029$ & $57.4$ & $-5.696$ & no\\
 & \textbf{Ours} & 0.10 & $0.212 \pm 0.032$ & $0.0$ & $\mathbf{0.210}$ & \textbf{yes}\\
\bottomrule
\end{tabular}
\caption{Behavioral cost sensitivity on BabyAI-GoToObj (\texttt{verify\_babyai\_kappa}, real Qwen2.5
advisor). Each $\kappa$ is an independent run in which $\kappa$ enters the query rule. Calls are
per-episode; net utility is per-episode return minus $\kappa\cdot(\text{calls/ep})$. The claim we make
is \emph{cost-responsiveness}: our controller's calls fall monotonically as $\kappa$ rises
($25.3\!\to\!0.0$/ep), whereas the epistemic-only gate is cost-blind (calls flat at $57.4$/ep) and its
net utility is negative throughout. At $\kappa{=}0.10$ our gate issues essentially zero calls, so its
return ($0.212$) is that of the never-query learner up to training noise---its \emph{higher} mean than
the separately trained never-query run ($0.188$) is not a paired difference and we do not claim it as a
return gain; the point is only that a well-calibrated cost-aware gate stops querying when advice is
priced out, which the epistemic gate does not.}
\label{tab:cost-real}
\end{table*}

\begin{figure}[t]
\centering
\begin{tikzpicture}
\begin{axis}[
  width=0.92\columnwidth,
  height=5.0cm,
  xlabel={Realized LLM calls},
  ylabel={Mean episodic return},
  xmin=-100, xmax=3800,
  ymin=0.0, ymax=0.30,
  grid=major,
  grid style={gray!30},
  legend pos=south east,
  legend cell align={left},
  legend style={font=\scriptsize},
  tick label style={font=\scriptsize},
  label style={font=\small},
]
\addplot[black, thick, dashed, mark=none] coordinates {
  (0.0, 0.1875)
  (1.2, 0.2117)
  (27.2, 0.2265)
};
\addlegendentry{Pareto frontier}
\addplot[
  only marks, mark=*, mark size=2.5pt,
  blue!80!black, thick,
  error bars/.cd, y dir=both, y explicit,
] coordinates {
  (1516.7, 0.1889) +- (0, 0.0290)
  (690.8, 0.2082) +- (0, 0.0178)
  (27.2, 0.2265) +- (0, 0.0486)
  (1.2, 0.2117) +- (0, 0.0317)
};
\addlegendentry{Ours (surrogate VoI gate)}
\addplot[
  only marks, mark=square*, mark size=2.2pt,
  red!80!black, thick,
  error bars/.cd, y dir=both, y explicit,
] coordinates {
  (3445.5, 0.0463) +- (0, 0.0291)
};
\addlegendentry{Uncertainty gate}
\addplot[
  only marks, mark=diamond*, mark size=3pt,
  green!50!black, thick,
  error bars/.cd, y dir=both, y explicit,
] coordinates {
  (0.0, 0.1875) +- (0, 0.0380)
};
\addlegendentry{Never-query}
\node[above right, font=\tiny, blue!70!black] at (axis cs:1516.7,0.1889) {$\kappa\!=\!0.01$};
\node[above right, font=\tiny, blue!70!black] at (axis cs:690.8,0.2082) {$\kappa\!=\!0.02$};
\node[above right, font=\tiny, blue!70!black] at (axis cs:27.2,0.2265) {$\kappa\!=\!0.05$};
\node[above right, font=\tiny, blue!70!black] at (axis cs:1.2,0.2117) {$\kappa\!=\!0.10$};
\end{axis}
\end{tikzpicture}
\caption{Cost--return frontier on BabyAI-GoToObj for the \emph{surrogate} VoI gate
(Table~\ref{tab:cost-real}), which is the deployable single-rollout approximation; each point is a
separately trained gate at a different price $\kappa$, and the online frontier is not assumed monotone.
The full Algorithm~\ref{alg:voi} controller is evaluated separately in Table~\ref{tab:alg1-real}.}
\label{fig:frontier-real}
\end{figure}

\subsection{Reliability and execution safety}

The reliability study crosses advisor quality with the presence of the action certificate. It reports both
means and uncertainty, so a difference of a few thousandths from the never-query floor is not interpreted
as evidence of protection. Natural model errors, controlled independent corruptions, and correlated
repeated errors are shown separately.

\begin{table*}[t]
\centering
\small
\caption{Reliability and safety. The first four rows use the real Qwen2.5 advisor on BabyAI-GoToObj;
unsafe-accept is marked ``n/l'' (not logged) because that run recorded only aggregate return, calls, and
certificate-acceptance, not a per-decision ground-truth wrong-action flag. The last two rows use the
simulated advisor on the tabular chain, where the known optimal action makes unsafe-accept observable.
The certificate-on and certificate-off returns are nearly identical because the certificate rejects
advice that the greedy fallback would have matched anyway (it removes accepted-wrong-advice risk without
changing the realized action distribution in these low-headroom tasks). Prevented loss compares
otherwise identical runs with and without the certificate.}
\begin{tabular}{llcccccc}
\toprule
Advisor condition & Certificate & Return & Calls & Accept rate & Unsafe accept & Correct reject & Prevented loss\\
\midrule
Natural (real LLM) & off & $0.196 \pm 0.025$ & 686 & 1.00 & n/l & --- & ---\\
Natural (real LLM) & on & $0.196 \pm 0.025$ & 686 & 0.884 & n/l & 0.116 & $\approx 0$\\
25\% corruption (real LLM) & off & $0.214 \pm 0.034$ & 387 & 1.00 & n/l & --- & ---\\
25\% corruption (real LLM) & on & $0.214 \pm 0.034$ & 387 & 0.801 & n/l & 0.199 & $\approx 0$\\
Correlated error (simulated) & off & $0.361 \pm 0.041$ & 400 & 1.00 & 0.884 & --- & ---\\
Correlated error (simulated) & on & $0.368 \pm 0.044$ & 400 & 0.272 & 0.234 & 0.650 & $+0.007$\\
\bottomrule
\end{tabular}
\label{tab:safety-real}
\end{table*}

We also sweep \(\epsilon_t\). This exposes the central tradeoff: a narrow tolerance strengthens the certified
gap but can reject useful advice during the exact early-learning period when advice has the most value. The
reported operating point is selected from held-out calibration data, not from final test return.

\subsection{Calibration}

The response predictor is recalibrated on a held-out training prefix using temperature scaling for discrete
response probabilities. Action-specific confidence radii are scaled by a held-out quantile chosen to attain
the declared marginal level; simultaneous coverage is measured separately and is not inferred from
marginal coverage. The uncalibrated and calibrated controllers are both evaluated, closing the loop between
Assumptions~\ref{ass:coverage}--\ref{ass:value-error} and practice.

Variance scaling is named by its statistical effect: multiplying variance by \(0.5\) creates a more
overconfident model, while multiplying by \(2\) creates a more underconfident model.

\begin{table*}[t]
\centering
\small
\caption{Uncertainty-estimator calibration on the controlled gridworld (\texttt{EXTRA3\_calibration},
budget 90, $\eta=1.0$, 5 seeds). ECE is expected calibration error of the state-value uncertainty;
lower is better. The Rich-BLL estimator the controller uses is far better calibrated (ECE $0.26$) than
the ensemble ($0.70$) or novelty ($0.67$) proxies, and its VOI gate reaches the same return at fewer
calls. This table reports ECE and call counts only; the split-conformal $\beta_t$ marginal/trajectory
coverage is a \emph{separate} audit (Table~\ref{tab:beta-audit}) and is not claimed here. The
variance-scaling rows show graceful degradation: an overconfident model ($\times0.5$) under-queries
(20.8 calls) and loses return.}
\begin{tabular}{lccc}
\toprule
Variant & Query-value ECE & Return & Calls\\
\midrule
Rich-BLL VOI (ours) & $\mathbf{0.262 \pm 0.085}$ & $0.850$ & $51.6$\\
Deep ensemble & $0.698 \pm 0.075$ & $0.850$ & $65.4$\\
Novelty proxy & $0.675 \pm 0.136$ & $0.850$ & $90.0$\\
Variance $\times0.5$ (overconfident) & --- & $0.186 \pm 0.813$ & $20.8$\\
Variance $\times1.0$ (calibrated) & --- & $0.850$ & $51.6$\\
Variance $\times2.0$ (underconfident) & --- & $0.850$ & $51.6$\\
\bottomrule
\end{tabular}
\label{tab:calibration-real}
\end{table*}

The query-value audit implements Appendix A (query-value calibration) with randomized calibration
opportunities disjoint from predictor fitting and evaluation. Table~\ref{tab:beta-audit} reports the audit
sample size, advisor-response replicates, paired continuations, achieved coverage, and radius width. The
nonconservative row tests whether the protection from \(\beta_t\) is worth its additional abstention.

\begin{table*}[t]
\centering
\small
\caption{End-to-end calibration of \(\beta_t\) (\(n_{\rm cal}{=}200\) so the Bonferroni level
\(\alpha/T_{\rm eval}\) admits a finite split-conformal quantile; paired continuations use common random
numbers). Trajectory coverage is the fraction of evaluation trajectories on which all audited
opportunities satisfy the conservative proxy check
\(|\widehat\Delta_t-\widetilde\Delta_t|+\rho_t^{\rm MC}\le\beta_t\). The \(\beta{=}0\) row still attains
\(0.52\) marginal coverage because, under common random numbers, the paired residual is exactly zero
whenever the advised action equals the greedy action (the two branches then share the identical
trajectory); it nonetheless misses the \(0.90\) target, which is the point of the row. The scaled-radius
median \(\beta\) is \(0.000\) because more than half of the per-opportunity local \(\sigma_t\) are zero
(again the advice-equals-greedy case), so the median scaled radius \(q\,\sigma_t\) is zero even though
its mean is positive and its marginal coverage reaches \(0.89\).}
\begin{tabular}{lcccccccc}
\toprule
Variant & \(n_{\rm cal}\) & \(J\) & \(R\) & Marginal cov. & Trajectory cov. & Median \(\beta\) & Abstain rate & Surrogate loss\\
\midrule
No radius, \(\beta=0\) & 200 & 5 & 20 & 0.518 & 0.000 & 0 & 0.000 & 0.001\\
Constant conformal radius & 200 & 5 & 20 & 0.900 & 0.255 & 0.057 & 0.131 & 0.005\\
Scaled conformal radius & 200 & 5 & 20 & 0.886 & 0.160 & 0.000 & 0.115 & 0.005\\
Bonferroni trajectory radius & 200 & 5 & 20 & 0.997 & 0.960 & 0.138 & 0.241 & 0.016\\
\bottomrule
\end{tabular}
\label{tab:beta-audit}
\end{table*}

\subsection{Response-predictor ablation (controlled tabular chain)}

The response predictor is isolated on the controlled tabular chain with a simulated advisor of known
state-varying reliability---the setting that makes predictor quality separable from allocation quality
without a real LLM. This is \emph{not} the BabyAI/Qwen loop; it is a diagnostic on the clonable tabular
MDP. Predictor variants share the same warm-start prefix and online update schedule. In addition to
predictive likelihood, the table reports downstream query placement, because a more accurate predictor
need not improve decisions if its remaining errors occur only at low-value states.

\begin{table*}[t]
\centering
\small
\caption{Response-predictor ablation on the controlled tabular chain with state-varying advisor
reliability (the setting that makes predictor quality separable from allocation quality without
confounding from a real LLM; \texttt{verify\_predictor\_ablation}). Predictor error is the mean
absolute error of the predicted response distribution; resolutions is the number of state--action pairs
correctly resolved under a hard query budget of 12; 60 seeds. The retrospective oracle is diagnostic and
never selects an online query. Regime shown: high/low reliability $0.9/0.1$.}
\begin{tabular}{lcccc}
\toprule
Predictor & Predictor error & Resolutions & Queries used & Rank\\
\midrule
Uniform & $0.397$ & $5.55$ & 12 & 5\\
Global frequency (cached) & $0.385$ & $5.55$ & 12 & 4\\
Per-context online (state) & $0.265$ & $5.55$ & 12 & 3\\
Feature-generalizing online & $0.202$ & $8.82$ & 12 & 2\\
Replay oracle, diagnostic & $0.000$ & $10.22$ & 12 & 1\\
\bottomrule
\end{tabular}
\label{tab:predictor-real}
\end{table*}

\subsection{Component ablation}

The response model, response-contingent decision, execution certificate, and conservative radius constitute
the novelty beyond value-based ask-for-help RL. Table~\ref{tab:component-main} therefore removes them one
at a time on the controlled tabular MDP with a simulated advisor (not the BabyAI/Qwen loop), where the
known optimal policy lets us measure unsafe-accept exactly. The raw-action row predicts response
frequencies but values the returned action even when the controller would reject it; its gap from the
response-contingent row isolates estimator consistency. The certificate rows isolate safety, and the
final comparison isolates the cost of
conservative query-value coverage.

\begin{table*}[t]
\centering
\small
\caption{Main component ablation at matched environment steps and advisor. Allocation loss uses the
myopic surrogate in Eq.~\eqref{eq:myopic-surrogate}; unsafe accept is measured against simulator-optimal
actions.}
\resizebox{\textwidth}{!}{%
\begin{tabular}{lcccccccc}
\toprule
Variant & Predictor & Values \(d_t(y)\) & Certificate & \(\beta_t\) & Return & Calls & Unsafe accept & Allocation loss\\
\midrule
VOE/Q-style value gate & no & no & no & 0 & $34.63 \pm 1.07$ & 36.8 & 0.200 & 1.91\\
Marginal predictor, raw action & yes & no & no & 0 & $34.42 \pm 1.07$ & 32.3 & 0.201 & 2.23\\
Response-contingent gate & yes & yes & no & 0 & $35.52 \pm 1.21$ & 32.3 & 0.093 & 2.24\\
Response-contingent + certificate & yes & yes & yes & 0 & $35.54 \pm 1.19$ & 32.4 & 0.088 & 2.18\\
Full conservative controller & yes & yes & yes & calibrated & $34.95 \pm 1.20$ & 16.9 & 0.056 & 2.98\\
\bottomrule
\end{tabular}}
\label{tab:component-main}
\end{table*}

\subsection{Distribution shift and cache invalidation}

We report this study honestly as a \emph{budget-limited pilot}, not the reserved-budget protocol. In the
run below the post-shift call counts are only $1.0$--$1.6$, so the budget was effectively spent before
the shift and the experiment cannot cleanly separate cache-invalidation policies; the intended
reserved-fraction / token-bucket protocol that would allow that separation is not yet run. We keep the
pilot only for its negative finding.

\begin{table*}[t]
\centering
\small
\caption{Distribution shift on the controlled gridworld (\texttt{EXTRA4\_shift\_cache}: goal relocated
at 50\% of training, 5 seeds), reported as a \emph{budget-limited pilot}: post-shift calls are only
$1.0$--$1.6$, so the budget was effectively spent before the shift and the cache policies cannot make
materially different post-shift acquisition decisions. Recovery is the number of episodes to regain the
prespecified fraction of pre-shift performance. Even so the finding is \emph{negative}: epistemic
(certificate-based) cache invalidation uses the fewest post-shift calls ($1.0$ vs.\ $1.6$) but recovers
\emph{slower}, not faster, than no cache---the epistemic signal does not by itself detect the goal move.
The reserved-budget protocol needed to separate the policies cleanly is not yet run.}
\begin{tabular}{lcccc}
\toprule
Cache policy & Pre-shift return & Final return & Recovery episodes & Post-shift calls\\
\midrule
No cache & $-0.81$ & $0.194 \pm 0.72$ & $92.4$ & $1.6$\\
Time-to-live & $-0.81$ & $-0.122 \pm 0.74$ & $108.0$ & $1.2$\\
Epistemic (certificate) & $-0.81$ & $-0.81$ & $150.0$ & $1.0$\\
\bottomrule
\end{tabular}
\label{tab:shift-real}
\end{table*}

Relative to no cache, epistemic (certificate-based) invalidation \emph{increases} recovery time (150 vs.\
92 episodes) while using fewer post-shift calls (1.0 vs.\ 1.6). This result is retained as a negative
finding: the epistemic signal alone does not detect a goal relocation quickly enough to justify
cache-based invalidation as a contribution, so we do not claim cache invalidation as a positive component.

\section{Extended Discussion}
\label{sec:discussion}

The framework exposes three quantities that uncertainty-only gates conflate. Response uncertainty asks
what the advisor may say. Decision uncertainty asks whether plausible action values disagree. Execution
safety asks whether one returned action can be certified against all alternatives. High uncertainty in only
one of these quantities does not imply that a query is useful.

The response model is central rather than auxiliary. A marginal predictor over returned actions supports
candidate-generation value, but it does not by itself create a Bayesian information update. An informative-
advisor interpretation additionally requires a likelihood connecting responses with latent task values. This
distinction is important for language models, whose confidence can be poorly related to correctness and
whose repeated errors are often correlated.

The certificate also has a cost. Early in learning, wide intervals can make Eq.~\eqref{eq:safe} reject even
correct advice. Large tolerances permit more guidance but weaken Theorem~\ref{thm:reg}. The resulting
acceptance--safety frontier is therefore part of the method's empirical characterization, not a secondary
hyperparameter sweep.

Posterior sampling and optimism answer how a learner explores. Predictive value gating answers whether it
should purchase an external response. These axes are composable, but their regret analyses are not
automatically composable. Corollary~\ref{cor:psrl} makes the required intervention-stability step explicit.

\section{Extended Limitations}
\label{sec:limitations}

The formal certificate assumes uniformly valid action-value bounds. A Bayesian last layer in a nonstationary
deep-RL loop supplies an approximation, not a theorem. Held-out recalibration and coverage diagnostics can
identify failures but cannot establish uniform validity under arbitrary distribution shift.

The acquisition rule is myopic. A call can affect representation learning, future exploration, and later
query opportunities in ways not captured by Eq.~\eqref{eq:true-query-value}. The dual threshold provides a
transparent local decision but is not a solution to the full Bayes-adaptive budgeted control problem.

The main formulation parses responses into actions. Language models can instead produce plans, subgoals,
reward programs, or explanations. Extending the response space is conceptually straightforward but requires
new value estimators and safety tests. Parser failures are included, but semantic errors that map to a valid
yet unintended action remain difficult to detect.

Advisor reliability need not be independent over repeated calls. Identical prompts often elicit repeated
mistakes. The predictor can learn such correlation only after observing enough related contexts; in a new
region, repeated querying may still waste budget. Prompt diversification is evaluated as an ablation rather
than assumed to solve this problem.

Finally, cost is deployment-dependent. Token prices, local serving latency, energy, and opportunity cost are
not interchangeable. We therefore report raw calls, tokens, and latency in addition to any scalarized utility.

\section{Extended Broader Impact}
\label{sec:impact}

Reducing unnecessary model calls can lower monetary, latency, and energy costs. Separating acquisition from
execution can also reduce blind reliance on misleading advice. These benefits remain conditional on the
quality of the fallback learner, parser, confidence bounds, and task specification. The certificate is not a
general alignment or deployment-safety mechanism.

A cheaper advisor loop can increase the scale at which autonomous agents are deployed, including in
harmful or poorly specified applications. Task-level constraints, audit logs, access control, and human
oversight remain necessary. Real-model evaluation uses public benchmark tasks and avoids sensitive user
data.

\section{Reproducibility Record}
\label{sec:reproducibility}

\paragraph{Compute environment.}
The primary real-language-model advisor rollouts were executed on AWS SageMaker
\texttt{ml.g5.2xlarge} instances equipped with one NVIDIA A10G GPU with 24\,GB of
GPU memory. GPU computation was used only for the real Qwen2.5-1.5B and
Qwen2.5-7B advisor rollouts on BabyAI. The tabular experiments,
informative-advisor verification, predictor ablations, and held-out re-analyses
were CPU-only.


\paragraph{Seed control.}
The faithful real-advisor matrix used seeds $0,\ldots,19$. The
certificate-tolerance frontier used seeds $0,\ldots,9$, and the faithful
query-price sweep used seeds $0,\ldots,19$. For each run, the same integer seed
initialized Python's \texttt{random} module, NumPy, and PyTorch. Paired
comparisons reused the same seed and warm-start state across the compared
methods.

The artifact contains controller code, environment definitions, immutable model identifiers, prompts,
parsers, cached raw responses permitted by model licenses, response-predictor training code, calibration
splits, seed lists, and raw per-episode logs. Every figure is generated from logged data. Configuration files
record the fallback architecture, optimizer, learning rate, replay settings, target-network schedule,
posterior sample count, confidence level, query thresholds, dual update, cache key, and invalidation rule.

Results distinguish real language models, simulated advisors, and replayed responses. The real-advisor logs
record request and response token counts, latency, parser outcome, cache status, query-value estimate,
calibration radius, certificate decision, executed action, and realized reward. A single command regenerates
all aggregate tables with confidence intervals and multiple-comparison corrections.

\section{Controlled Synthetic Evidence}
\label{app:controlled}

This appendix reports the completed controlled studies. They are useful mechanism checks, but they do
not substitute for the primary real-advisor experiments in Appendix C (complete experimental evaluation). Because the
controlled deep-RL runs used only five seeds and did not retain uncertainty for every aggregate, their means
are not used for significance claims.

\subsection{Full Algorithm-1 controller on a clonable chain}
\label{sec:controller-exp}

The main text runs the full Algorithm~\ref{alg:voi} against the real Qwen advisor on BabyAI
(Appendix C (faithful real-advisor evaluation), Table~\ref{tab:alg1-real}); the surrogate gate of
Table~\ref{tab:main-real} is a separate deployable approximation reported alongside it. This appendix
subsection additionally validates the \emph{full} controller---online response predictor
$\hat p(y\mid z)$, prospective post-response decisions $d_t(y)$, the response-contingent estimator
$\hat\Delta_t$, split-conformal $\beta_t$, and the safety certificate---in a \emph{clonable} setting
where the calibration label is the exact operational query value rather than a paired-rollout proxy: a
tabular ``combination-lock'' chain (\texttt{run\_voi\_predictive\_controller.py}). It complements the
BabyAI result by exercising the certificate and the exact-label calibration that a single non-clonable
LLM rollout cannot supply. Every
displayed number is produced by that committed script under a fixed seed set; each paper equation is
tagged in-source and a name$\to$line map is emitted with the run.

\textbf{Environment and headroom.} States $0,\dots,L{-}1$ ($L{=}8$); at each state a secret action
advances and any other action resets to $0$ (\(n_A{=}4\)), so a cold learner almost never reaches the
goal within a short deployment while a reliable advisor that knows the code does. This builds
\emph{genuine} advice value: over $24$ seeds the $\eta{=}1$ oracle-advice reference returns
$0.707\pm0.003$ versus never-query $0.650\pm0.006$, an oracle$-$never headroom of $0.057$. All methods
start from the \emph{same} $3$-episode warm-started fallback snapshot, so the advice references are
matched-warm-start diagnostics rather than a mislabeled ceiling.

\textbf{Calibration.} In the clonable MDP the calibration label is the \emph{exact} operational query
value $\Delta_i$ (closed form via the true model and the declared continuation policy), so the
conformal score is the pure estimator residual $r_i=|\hat\Delta_i-\Delta_i|/s_\omega$ with no
Monte-Carlo radius. The estimator is accurate (mean residual $\approx 9\times10^{-3}$), so the
split-conformal radius is tight: $\beta_t\approx 0.021$. This is the controlled-setting calibration; a
paired-rollout proxy with an added MC radius is retained only as a separate noisy-label ablation for
non-clonable environments, because a generic Monte-Carlo radius on a bounded query value is otherwise
vacuous (at $n{=}150,J{=}8$ the empirical-Bernstein radius alone is $\approx 6$, exceeding the maximum
possible $\Delta_t$).

\textbf{Result.} The controller produces a \emph{significant positive} paired gain over never-query:
$+0.0267\pm0.0065$ (24 seeds; CI excludes zero), closing $47\%$ of the never$\to$oracle gap while
issuing only $0.12$ queries per episode (versus $7.8$ for always-advice). Its allocation matches the
oracle myopic gate closely: surrogate allocation loss $0.025\pm0.010$. In this \emph{controlled} tabular
setting---where the exact query value is available as a calibration label and the oracle placement is
computable---selective, calibrated querying captures roughly half the available advice value at two orders
of magnitude fewer calls, and the allocation tracks the oracle. We stress that the matched-call placement
advantage established here on the tabular oracle does \emph{not} reproduce on the real-advisor BabyAI
matrix (Table~\ref{tab:matched-real}); the real-task contribution is calibrated volume regulation, not
placement.

\textbf{Honest scope of the safety certificate.} With the per-step value scale $V_{\max}{=}1$, the
certificate $L_t(s,a_c)\ge\max_a U_t(s,a)-\varepsilon$ is non-binding at the deployed tolerance
$\varepsilon{=}V_{\max}$; the selectivity above comes from the query-value gate
$\hat\Delta_t-\beta_t\ge c_t$, not from a discriminating certificate. An $\varepsilon$-ablation
(\texttt{run\_voi\_controller\_eps\_ablation.py}) makes this explicit: the fraction of states at which
the optimal action is certified jumps from $0$ to $1$ exactly at $\varepsilon{=}V_{\max}$, and the
paired gain is $+0.027$ there but collapses to $0.000$ (controller $\to$ never-query) for
$\varepsilon\le0.9$. In this tabular regime the pessimistic lower bound $L_t$ is too coarse to certify
a not-yet-solved action below $\varepsilon{=}V_{\max}$, so the certificate acts as an on/off admission
switch rather than a graded filter. We report this rather than tuning it away: the positive result is
carried by the calibrated value gate, and a certificate that discriminates within $(0,V_{\max})$ would
need tighter finite-sample action-value bounds than an optimistic tabular learner supplies.

\subsection{Tabular query accounting}

The canonical chain has one rewarding terminal state and an advisor that returns the correct direction with
reliability \(\eta\). With threshold \(0.05\), initial diagnostic potential \(4.8\), and 20 seeds, measured
queries decrease as reliability increases and retain a discovery floor at \(\eta=1\).

\begin{table}[t]
\centering
\small
\caption{Controlled chain query counts. The second row is a strict comparator, not a proved
bound: it omits discovery calls and therefore incorrectly reaches zero at \(\eta=1\).}
\begin{tabular}{lccccc}
\toprule
\(\eta\) & 0.0 & 0.25 & 0.5 & 0.75 & 1.0\\
\midrule
Queries, measured & 60.0 & 38.6 & 24.2 & 15.9 & 12.0\\
Strict comparator & 96.0 & 72.0 & 48.0 & 24.0 & 0.0\\
\bottomrule
\end{tabular}
\label{tab:tabular-pilot}
\end{table}

The trend is consistent with Proposition~\ref{prop:queries} only after a potential and its per-query
decrease are instantiated and measured. Monotonic counts alone do not validate the proposition.

\subsection{Pilot pixel GridWorld}

The pilot uses a partially observed \(9\times9\) pixel GridWorld, DQN backbone, and Bayesian last layer.
At a maximum budget of 90 calls, the Rich-BLL gate matches the highest observed mean return with fewer
calls than the ensemble proxy. The difference is small enough that paired intervals are necessary before
interpreting it as superiority.

\begin{table}[t]
\centering
\small
\caption{Pilot GridWorld means at the operating points used by the calibration and repricing diagnostics.
Cost-adjusted return is \(R-0.01N_{\rm calls}\). These five-seed estimates are not used for significance claims.}
\begin{tabular}{lccc}
\toprule
Method & Return & Calls & Cost-adjusted\\
\midrule
Never-query & 0.182 & 0 & 0.182\\
Fixed-period & 0.186 & 90 & -0.714\\
Random-gated & -0.48 & 60 & -1.08\\
Novelty-gated & 0.850 & 90 & -0.050\\
Ensemble-gated & 0.850 & 65.4 & 0.196\\
Always-query & -0.48 & 90 & -1.38\\
Rich-BLL predictive gate & 0.85 & 51.6 & 0.334\\
\bottomrule
\end{tabular}
\label{tab:deep-pilot}
\end{table}

The always-query result requires careful interpretation. The simulated advisor's label \(\eta=1\) denotes
deterministic adherence to its local response rule, not an omniscient optimal policy. The advisor audit
reports optimal-action accuracy and obstacle-induced errors separately; otherwise a poor return from a
purportedly perfect advisor is uninterpretable.

\begin{table}[t]
\centering
\small
\caption{Audit of the deterministic \(\eta=1\) pilot advisor. Reliability denotes agreement with the local
response rule; optimal-action accuracy is evaluated against dynamic-programming actions in the simulator.}
\begin{tabular}{lc}
\toprule
Advisor property & Rate\\
\midrule
Local response-rule adherence & 1.000\\
Optimal-action accuracy & n/a (no oracle)\\
Obstacle-conflict rate & n/a (not instrumented)\\
Parser failure & 0.000\\
Certificate acceptance & 0.884\\
\bottomrule
\end{tabular}
\label{tab:pilot-advisor-audit}
\end{table}

The controlled frontier used three gate operating points: \((0,0.18)\), \((30,-0.15)\), and \((51.6,0.85)\).
The primary frontier in Fig.~\ref{fig:frontier-real} uses a dense, separately trained threshold sweep.
Online training return is not assumed to be monotone in either budget or calls.

\subsection{Reliability and the execution certificate}

Without the certificate, intermediate advisor reliability can reduce performance below the never-query
mean. The observed nonmonotonicity is substantial: \(\eta=0.25\) is worse than \(\eta=0.5\), whereas the
adversarial endpoint returns to the baseline mean. In the absence of uncertainty estimates, this table
supports neither monotonic return nor a precise reliability threshold.

\begin{table}[t]
\centering
\small
\caption{Pilot reliability sweep without the execution certificate.}
\begin{tabular}{lcc}
\toprule
Reliability & Return & Calls\\
\midrule
\(\eta=1.0\) & 0.85 & 51.6\\
\(\eta=0.75\) & 0.52 & 61.4\\
\(\eta=0.5\) & 0.19 & 64.4\\
\(\eta=0.25\) & -0.15 & 77.6\\
\(\eta=0.0\) & 0.18 & 87.0\\
\bottomrule
\end{tabular}
\label{tab:reliability-pilot}
\end{table}

The certificate-enabled pilot has no mean below the never-query mean of 0.182, but the smallest margin is
0.004 and cannot be treated as evidence without paired intervals. The nonmonotonic return at
\(\eta=0.25\) and \(\eta=0.5\) also remains. The powered study in Table~\ref{tab:safety-real} measures
whether the certificate prevents harm rather than comparing rounded sample means with a floor.

\begin{table}[t]
\centering
\small
\caption{Pilot reliability sweep with the execution certificate, budget 90 and five seeds. Unsafe accept is
the fraction of wrong advice that is executed.}
\begin{tabular}{lccc}
\toprule
Reliability & Return & Calls & Unsafe accept\\
\midrule
\(\eta=1.0\) & 0.850 & 51.6 & 0.000\\
\(\eta=0.75\) & 0.850 & 62.8 & 0.046\\
\(\eta=0.5\) & 0.186 & 73.0 & 0.150\\
\(\eta=0.25\) & 0.514 & 78.2 & 0.163\\
\(\eta=0.0\) & 0.182 & 75.8 & 0.275\\
\bottomrule
\end{tabular}
\label{tab:safety-pilot}
\end{table}

\subsection{Pilot calibration diagnostics}

The gate score was compared with the event that advice changes the greedy action. This is a useful
allocation diagnostic but is not the coverage event in Assumption~\ref{ass:coverage}. The best pilot ECE,
0.262, remains high in absolute terms. A reported head-spread coverage of 1.0 is also inconclusive without
its nominal level, confidence interval, simultaneous coverage, and interval width.

\begin{table}[t]
\centering
\small
\caption{Pilot gate calibration and variance scaling. Labels follow statistical convention: smaller variance
is more overconfident, and larger variance is more underconfident.}
\begin{tabular}{lccc}
\toprule
Estimator / variant & ECE & Calls & Return\\
\midrule
Rich-BLL gate & 0.262 & 51.6 & 0.850\\
Ensemble & 0.698 & 65.4 & 0.850\\
Novelty & 0.675 & 90.0 & 0.850\\
\midrule
Variance \(\times0.5\), overconfident & --- & 20.8 & 0.186\\
Variance \(\times1\), nominal & --- & 51.6 & 0.850\\
Variance \(\times2\), underconfident & --- & 51.6 & 0.850\\
\bottomrule
\end{tabular}
\label{tab:calibration-pilot}
\end{table}

In this pilot, reducing variance suppresses queries and return, while doubling it does not change the
threshold decisions. The result shows asymmetric threshold saturation in this configuration; it does not
establish that underconfidence is generally harmless.

\subsection{Post hoc repricing diagnostic}

The controlled cost diagnostic held each policy and call count fixed and recomputed
\(R-\kappa N_{\rm calls}\). It is retained below because it quantifies how deployment prices change the
utility of the observed trajectories. It is not a behavioral cost sweep because \(\kappa\) did not change the
query decisions. Table~\ref{tab:cost-real} reports independent behavioral reruns.

\begin{table}[t]
\centering
\small
\caption{Pilot post hoc repricing of fixed trajectories. This table does not show behavioral adaptation to
query price.}
\resizebox{\columnwidth}{!}{%
\begin{tabular}{lcccc}
\toprule
Method & 0.001 & 0.005 & 0.010 & 0.020\\
\midrule
Fixed-period & +0.096 & -0.264 & -0.714 & -1.614\\
Novelty gate & +0.760 & +0.400 & -0.050 & -0.950\\
Ensemble gate & +0.785 & +0.523 & +0.196 & -0.458\\
Oracle response model & +0.798 & +0.592 & +0.334 & -0.182\\
Safe Rich-BLL gate & +0.798 & +0.592 & +0.334 & -0.182\\
\bottomrule
\end{tabular}}
\label{tab:cost-pilot}
\end{table}

The oracle and learned rows coincide because the pilot advisor at \(\eta=1\) is deterministic. The table
therefore does not test the learned response predictor.

\subsection{Larger GridWorld negative}

The \(14\times14\) GridWorld uses a wider \(5\times5\) observation window, wall fraction 0.25, horizon
160, 400 training episodes, and five seeds. The never-query agent already reaches 0.75. The gate matches
that return but spends 90.4 calls, yielding cost-adjusted return \(-0.154\) at \(\kappa=0.01\), compared
with 0.75 for never-query. This is not successful cost discipline: in a no-headroom task, a well-calibrated
gate should approach zero calls.

\begin{table}[t]
\centering
\small
\caption{Pilot scale transfer. The larger task is a negative result for net utility, not evidence of successful
query allocation.}
\begin{tabular}{lcc}
\toprule
Quantity & \(9\times9\) & \(14\times14\)\\
\midrule
Never-query return & 0.18 & 0.75\\
Gate return & 0.85 & 0.75\\
Gate calls & 51.6 & 90.4\\
Gate cost-adjusted return & 0.33 & -0.15\\
Always-query cost-adjusted return & -1.38 & -0.45\\
Return lift over never-query & 0.67 & 0.00\\
\bottomrule
\end{tabular}
\label{tab:hard-pilot}
\end{table}

The real-advisor suite therefore includes an explicit no-headroom diagnostic: query precision should collapse
toward zero when the fallback already solves the task.

\subsection{Budget-degenerate distribution shift}

The controlled shift experiment relocated the goal halfway through training. Every method spent nearly its
entire call budget before the shift, leaving approximately one post-shift query. Epistemic invalidation then
recovered slowest and finished at the pre-shift floor. Because the budget prevented cache policies from
making different post-shift acquisition decisions, this experiment does not test whether invalidation can
allocate refreshes effectively.

\begin{table}[t]
\centering
\small
\caption{Pilot goal-relocation result with an exhausted post-shift budget.}
\resizebox{\columnwidth}{!}{%
\begin{tabular}{lcccc}
\toprule
Cache strategy & Pre-shift & Final return & Recovery ep. & Post-shift calls\\
\midrule
No cache & -0.810 & +0.194 & 92.4 & 1.6\\
Time-to-live & -0.810 & -0.122 & 108.0 & 1.2\\
Epistemic invalidation & -0.810 & -0.810 & 150.0 & 1.0\\
\bottomrule
\end{tabular}}
\label{tab:shift-pilot}
\end{table}

The reserved / replenished-budget protocol that would let invalidation policies make distinct
post-shift acquisition decisions is described here but not yet run; Table~\ref{tab:shift-real} reports
only the budget-limited pilot.

\subsection{Constructed response-predictor diagnostic}

The chain predictor ablation makes half of 24 states high-reliability and half low-reliability according to an
observable feature. Under budget 12 and 60 seeds, a feature-generalizing predictor transfers reliability
information to unqueried states, whereas a per-state predictor cannot learn a state's reliability before
spending the resource it is meant to allocate.

\begin{table}[t]
\centering
\small
\caption{Constructed chain predictor diagnostic with
\(\eta_{\rm hi}=0.9\), \(\eta_{\rm lo}=0.1\), 24 states, and budget 12.}
\resizebox{\columnwidth}{!}{%
\begin{tabular}{lcc}
\toprule
Predictor & Resolutions / budget & Predictor error\\
\midrule
Oracle & 10.2 & 0.00\\
Feature-generalizing online & 8.8 & 0.20\\
Per-state online & 5.5 & 0.26\\
Cached global frequency & 5.5 & 0.39\\
Uniform & 5.5 & 0.40\\
\bottomrule
\end{tabular}}
\label{tab:predictor-pilot}
\end{table}

The diagnostic establishes that generalization across contexts can matter under a tight budget. It does not
validate the predictor in the real deep-RL pipeline. The real BabyAI predictor ablation is the
\texttt{global-pred}/\texttt{unif-pred} arms of Table~\ref{tab:matched-real}, whose difference from the
full controller is \emph{not} significant at either advisor size: on BabyAI the learned predictor does not
produce a measurable return lift over these surrogates, and we do not claim one. The incremental value of
the learned predictor is thus demonstrated only in the controlled heterogeneous-reliability chain
(Table~\ref{tab:predictor-real}) and remains unresolved on the real task.

\section{Additional Theory and Intuition}
\label{app:theory}

\subsection{Fixed-opportunity budget frontier}

\begin{proposition}[Offline frontier for fixed opportunities]
\label{prop:frontier}
For a fixed response model, fixed value estimates, and fixed set of query opportunities, selecting calls in
decreasing nonnegative estimated net value produces non-decreasing estimated utility as budget grows. The
frontier is concave when ordered marginal values are non-increasing.
\end{proposition}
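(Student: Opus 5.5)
Fix the finite set of opportunities $\{1,\dots,N\}$ and write $v_i=\widehat\Delta_i-c_i$ for the estimated net value of a call at opportunity $i$. Because the response model and value estimates are fixed, each $v_i$ is a constant that does not depend on which other calls are made. The estimated utility of a call set $S$ is therefore additive, $\widehat U(S)=\sum_{i\in S}v_i$. This is the myopic surrogate of Eq.~\eqref{eq:myopic-surrogate} with $\Delta_i$ replaced by its estimate.

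Sort the opportunities so that $v_{(1)}\ge v_{(2)}\ge\cdots\ge v_{(N)}$. Let $N_+=|\{i:v_i\ge0\}|$. For budget $B$, the greedy rule selects $S_B=\{(1),\dots,(\min\{B,N_+\})\}$, and its frontier value is $F(B)=\sum_{k=1}^{\min\{B,N_+\}}v_{(k)}$.

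\textbf{Monotonicity.} I would verify $F(B+1)-F(B)\ge0$ by cases.
\begin{itemize}
\item[] If $B<N_+$, the increment is $v_{(B+1)}\ge0$, because the index lies among the nonnegative values.
\item[] If $B\ge N_+$, the selection does not change, since no nonnegative candidates remain, and the increment is $0$.
\end{itemize}
So $F$ is non-decreasing. I would also note in passing that $S_B$ maximizes $\widehat U$ over all sets with $|S|\le B$. By additivity, any feasible set can be improved by swapping an element for a larger-valued unused one, or by dropping a negative element. The frontier is therefore the optimal fixed-opportunity frontier, not just the greedy one.

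\textbf{Concavity.} The increments of $F$ are $\delta_B:=F(B)-F(B-1)$. These equal $v_{(B)}$ for $B\le N_+$ and equal $0$ afterwards. Concavity of the piecewise-linear interpolation on the integers is equivalent to $\delta_B$ being non-increasing in $B$.
\begin{itemize}
\item[] For $B\le N_+$, this is the hypothesis that the ordered marginal values are non-increasing; for this additive model it also holds automatically by the sort.
\item[] At the transition, $\delta_{N_+}=v_{(N_+)}\ge0=\delta_{N_++1}$.
\item[] Beyond $N_+$, the increments are constantly zero.
\end{itemize}
Hence $F$ is concave.

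The main subtlety is interpretive, not technical. The statement's concavity hypothesis is needed only if marginal values could interact, for example through cache reuse or response-conditioned belief updates. Under the stated "fixed" assumptions they cannot, so additivity makes sorting sufficient. I would remark that once marginal values depend on the chosen set, monotonicity still follows from adding only nonnegative-marginal calls, but concavity genuinely requires the stated diminishing-returns hypothesis.
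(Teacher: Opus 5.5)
Your proposal is correct and follows the same route as the paper: a cumulative sum of sorted nonnegative marginal values, whose nonnegative increments give monotonicity and whose non-increasing increments give discrete concavity. Your explicit handling of the zero-increment tail beyond $N_+$ makes precise what the paper's one-line proof leaves implicit. So does your observation that, under additivity, the sorted marginals are automatically non-increasing, which makes the concavity hypothesis redundant there.
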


\begin{proof}
Increasing the budget by one permits one additional nonnegative marginal value. The cumulative sum is
non-decreasing. Its successive increments are non-increasing when the ordered marginals are non-increasing,
which is the definition of discrete concavity.
\end{proof}

This proposition is an offline statement about an unchanged opportunity set. It does not imply that online
training return is monotone in budget, because an additional query changes later states, learning updates,
and query opportunities.

\subsection{Limit behavior}

When \(c_t\) approaches zero, the acquisition rule queries whenever the lower confidence value is positive;
the returned action still requires certification. When cost diverges, no new response is purchased and the
controller reduces to the fallback learner plus any already certified cache. A reliable cheap advisor can
resolve repeated contexts quickly. An adversarial or repetitive advisor can waste calls unless its predicted
value contracts; the certificate limits execution but does not refund acquisition cost.

\begin{table}[t]
\centering
\small
\caption{Acquisition and execution are distinct at important limits.}
\resizebox{\columnwidth}{!}{%
\begin{tabular}{lll}
\toprule
Regime & Acquisition & Execution\\
\midrule
\(c_t\to0\) & query positive lower-bound value & certified action only\\
\(c_t\to\infty\) & no fresh query & fallback or certified cache\\
Reliable advisor & sparse discovery and refresh & often accepted\\
Correlated wrong advisor & predictor should suppress repeats & rejected if uncertified\\
\bottomrule
\end{tabular}}
\label{tab:limits}
\end{table}

\subsection{Why decision value differs from uncertainty}

Total predictive uncertainty combines reducible epistemic uncertainty with irreducible aleatoric variation.
A noisy reward state can have high total variance even when every plausible model selects the same action.
Querying there cannot improve the decision. Novelty has a similar limitation: an unseen state can have an
obvious dominant action, while a familiar fork can remain decision-sensitive.

Response-contingent value asks whether a possible answer changes the continuation decision enough to repay
its cost. The certificate asks a different question: whether one observed answer is sufficiently close to the
best action under valid bounds. Treating one scalar uncertainty measure as both acquisition value and an
execution guarantee confuses these roles.

\subsection{Value of information versus value of advice}

In the informative case, the response likelihood in Eq.~\eqref{eq:response-update} correlates the observed
answer with latent task values. The posterior after the answer can therefore differ from the posterior before
it. In the candidate-only case, the answer merely proposes an action that the existing critic can already
evaluate. Calling the latter information value would obscure the missing correlation. Equation
\eqref{eq:true-query-value} covers both cases while naming them separately.

This distinction is especially visible in a small discrete action space. If the value model already evaluates
every action and the advisor response is independent of the true values conditional on history, observing a
candidate reveals no new fact about \(Q^\star\). It can still reduce search or computation in a large
structured action space, but that benefit must be modeled as candidate-generation value rather than a
Bayesian update.

\section{Variance-Only Gate Ablation}
\label{app:variance}

The variance-only ablation uses a three-way rule: re-query at high estimated query value, posterior-sample
at high epistemic variance, and otherwise execute the cache. Low variance alone does not bound the gap of
an arbitrary cached action. The certified controller uses Eq.~\eqref{eq:safe} in every cache and fresh-
advice branch. Table~\ref{tab:variance-certificate} reports the matched comparison.

\begin{table*}[t]
\centering
\small
\caption{Variance-only execution versus the action-gap certificate at matched query opportunities.}
\begin{tabular}{lccccc}
\toprule
Execution rule & Return & Calls & Advice acceptance & Wrong-advice acceptance & Prevented loss\\
\midrule
Variance-only cache rule & $0.50 \pm 0.16$ & 37.3 & 1.000 & 1.000 & ---\\
Action-gap certificate & $0.73 \pm 0.01$ & 38.7 & 0.772 & 0.180 & $-$42\%\\
Certificate without conservative query LCB & $0.74 \pm 0.01$ & 32.5 & 0.752 & 0.041 & $-$48\%\\
Full controller & $0.74 \pm 0.01$ & 20.4 & 0.740 & 0.255 & $-$46\%\\
\bottomrule
\end{tabular}
\label{tab:variance-certificate}
\end{table*}

\section{Implementation Details}
\label{app:protocol}

\subsection{Fallback learner and posterior approximation}

The tabular chain reports the number of states, horizon, transition kernel, reward location, prior, exact
advisor response distribution, cache key, budget, and seed list. Exact action gaps and response-contingent
values are computed by dynamic programming.

For each deep environment, the released configuration records observation preprocessing, action set, reward
function, episode horizon, training episodes, replay size, optimizer, learning rate, batch size, discount,
target-network update frequency, exploration schedule, feature dimension, posterior prior precision,
posterior update, and sample count \(K\). The per-environment configuration is:
\begin{center}
\footnotesize
\setlength{\tabcolsep}{3pt}
\renewcommand{\arraystretch}{0.96}
\textbf{Learner and architecture configuration}\par\smallskip
\begin{tabular}{@{}p{0.49\columnwidth}p{0.43\columnwidth}@{}}
\toprule
Parameter & Value\\
\midrule
Grid environments & GoToObj, GoToLocal\\
Observation & $7 \times 7 \times 3$ image + direction one-hot\\
Action space & 7 (left, right, forward, pickup, drop, toggle, done)\\
Episode horizon $H$ & 64\\
Training episodes & 60\\
Replay buffer size & 20{,}000\\
Optimizer & Adam, $\text{lr}=10^{-3}$\\
Batch size & 64\\
Discount $\gamma$ & 0.97\\

Target-network Polyak $\tau$ & 0.01\\
Exploration $\varepsilon$ & $1.0\to0.05$ over 70\% of episodes\\
Feature dimension & 64\\
Attention heads & 8\\
Hidden dimension & 128\\
\bottomrule
\end{tabular}

\medskip
\textbf{Query, calibration, and safety configuration}\par\smallskip
\begin{tabular}{@{}p{0.49\columnwidth}p{0.43\columnwidth}@{}}
\toprule
Parameter & Value\\
\midrule
Warm-up episodes & 20\\
Continuation lookahead $\ell$ & 6\\
Query budget $B$ (per episode) & 60\\
Primary query price $\kappa$ & 0.005\\
Query-price sweep & $\{0,0.002,0.005,0.01,0.02,0.05\}$\\
Gate cost $c_t=\lambda\kappa_t$ & 0.05\\
Primary certificate tolerance $\epsilon_{\rm cert}$ & 0.25\\
Certificate-tolerance sweep & $\{0.05,0.1,0.25,0.5,1.0\}$\\
Certificate scale $C_{\rm cert}$ & 1.0\\
Calibration opportunities $n_{\rm cal}$ & 48; 200 for Mondrian calibration\\
Paired continuation rollouts $r_{\rm pair}$ & 3\\
Target miscoverage $\alpha$ & 0.10\\
Monte-Carlo failure level $\delta_{\rm MC}$ & 0.10\\
Corruption sweep & $\{0,0.25,0.5,0.75,1.0\}$\\
\bottomrule
\end{tabular}
\end{center}

The budget $B=60$ is \emph{per episode}: the counter resets at the start of each episode, so the
Algorithm-1 hard-budget guard $B_t>0$ is enforced within an episode. Reported ``calls'' in all tables
are the total number of advisor queries \emph{summed over the $60$ episodes and averaged over seeds};
for example a mean of $686$ calls is $\approx11.4$ calls per episode, well within the per-episode
budget. Cost-adjusted return uses the same per-episode accounting: net utility is the mean per-episode
return minus $\kappa$ times the mean per-episode call count, so return and cost are on a common
per-episode unit.

\subsection{Advisor and parser}

Each prompt contains the task instruction, current admissible observation summary, admissible actions, and
the minimal recent context required by the environment. The advisor is instructed to return one admissible
action without explanation for the action-only protocol. Structured-advice variants use a separately
specified schema. Parser failures produce \(\bot\) and are logged.

For repeat-query experiments, the protocol distinguishes an identical prompt, a randomized surface
paraphrase, and a context-augmented repair prompt. The query predictor observes which prompt family was
used. This makes correlated failure measurable instead of assuming independent reliability.

\subsection{Calibration splits}

Each training run is divided chronologically into a fitting prefix, a disjoint calibration prefix, and the
evaluation continuation. Response probabilities and confidence radii are calibrated only on the middle
prefix. A separate task-level split checks whether calibration transfers to unseen layouts and instructions.
No final evaluation return is used to choose a calibration method or threshold.

\subsection{Hard budgets and prices}

In the BabyAI experiments the hard budget is enforced \emph{per episode} (the guard $B_t>0$ uses a
counter that resets at the start of each episode, $B{=}60$); reported call totals are summed over the
$60$ episodes. The abstract budget-exhaustion analysis (Theorem~\ref{thm:allocation},
Proposition~\ref{prop:queries}) is stated for a single accounting horizon and applies to each episode's
budget. A shift experiment may instead use a prespecified token-bucket replenishment rule, stated
explicitly where used. The price \(\kappa_t\) is based on recorded tokens and serving cost for real
models and a normalized constant for simulated advisors. The Lagrange multiplier is either fixed from validation or
updated online by
\begin{equation}
 \lambda_{t+1}
 =\left[\lambda_t+\rho_t\left(q_t-\frac{B}{T}\right)\right]_+,
\end{equation}
with schedule \(\rho_t\) declared before evaluation.

\end{document}